\documentclass[twoside,11pt]{article}
\usepackage{blindtext}
\usepackage{bm}
\usepackage{graphicx}
\usepackage{tabularx}
\usepackage{mathrsfs}%
\usepackage{enumitem}
\usepackage{cite}
\usepackage{algorithm}%
\usepackage{algorithmicx}%
\usepackage{algpseudocode}
\usepackage[colorlinks=true,linkcolor=blue]{hyperref}
\usepackage{float}
\usepackage{makecell}
\usepackage{amsmath,amsthm, amssymb, graphicx, caption, subcaption}  

\usepackage{amsfonts}
\usepackage{booktabs}  
\usepackage{geometry}
\usepackage{hyperref}
\usepackage[nameinlink,capitalise,noabbrev]{cleveref}
\newtheorem{definition}{Definition}
\newtheorem{theorem}{Theorem}[section]
\newtheorem{lemma}{Lemma}[section]
\newtheorem{proposition}{Proposition}[section]
\newtheorem{corollary}{Corollary}[section]
\numberwithin{definition}{section}
\numberwithin{equation}{section}
\numberwithin{proposition}{section}

\usepackage{jmlr2e}

\usepackage{lastpage}
\jmlrheading{}{}{1-\pageref{LastPage}}{; Revised }{}{21-0000}{Author One and Author Two}

\ShortHeadings{Dynamics and Implicit Bias of Quadratic Networks}{One}
\firstpageno{1}

\begin{document}

\title{Quotient Dynamics, Effective Curvature, and Implicit Bias in Positive Quadratic Networks}

\author{
  \name Pengcheng Cheng\thanks{Corresponding author.} \email{3192618014@qq.com} \\
  \addr School of Mathematics, Jilin University \\
        Changchun, 130012, China
}

\editor{My editor}

\maketitle
\begin{abstract}
Positive quadratic networks admit the exact low-rank representation
\[
f_U(x)=\|U^\top x\|_2^2=x^\top UU^\top x,
\]
where the factor $U\in\mathbb{R}^{d\times r}$ is identifiable only up to right multiplication by an orthogonal matrix, while the represented predictor is the rank-$r$ positive semidefinite matrix $Q=UU^\top$. We study how this quotient structure governs factor-training dynamics, intrinsic curvature, recovery, and interpolation bias.
On the full-column-rank stratum, we characterize the fibers of $U\mapsto UU^\top$ and identify $\mathbb{R}^{d\times r}_*/O(r)$ with the manifold of rank-$r$ positive semidefinite matrices. For every smooth objective $L(U)=\ell(UU^\top)$, the Euclidean factor gradient is horizontal. Hence factor gradient flow projects exactly to quotient Riemannian gradient flow, while finite-step gradient descent induces an exact congruence recursion for the predictor.
For quadratic regression, we derive the effective Hessian at an interpolating predictor as the empirical measurement Gram form restricted to the tangent space and evaluated relative to the quotient metric. Under Gaussian rank-one measurements, we compute the population curvature, prove a uniform full-space deviation bound for the empirical normal operator, construct a moment-based spectral initializer, and establish local exponential convergence of gradient flow and local linear convergence of sufficiently small-step gradient descent. The resulting recovery guarantee is explicit but conservative because it relies on full-space second-moment control.
In an underdetermined commuting regime, factor gradient flow becomes an exact entropy mirror flow in joint spectral coordinates. Every strictly positive initialization converges to the corresponding Bregman projection onto the interpolation set. Under isotropic initialization $q(0)=\varepsilon^2\mathbf{1}$, the selected predictors approach the minimum-trace solution set as $\varepsilon\downarrow0$, with nonuniqueness resolved by a weighted entropy criterion within the invariant joint spectral algebra. For fixed positive initialization, finite-step gradient descent selects an interpolant differing from the continuous-time Bregman projection by $O(\eta)$.
Numerical experiments verify the quotient identities, effective-curvature prediction, recovery behavior, and Bregman, minimum-trace, entropy, and finite-step selection laws.
\end{abstract}

\begin{keywords}
positive quadratic networks;
quotient geometry;
positive semidefinite matrix factorization;
gradient flow;
effective Hessian;
low-rank matrix sensing;
spectral initialization;
implicit regularization;
Bregman projection;
minimum-trace bias;
entropic tie-breaking
\end{keywords}

\section{Introduction}
\label{sec:introduction}

Overparameterized models are often optimized in a parameter space that
is substantially larger than the space of predictors they represent.
This distinction is especially transparent in factorized models.  For
a positive quadratic two-layer network,
\begin{equation}
f_U(x)
=
\sum_{j=1}^{r}(u_j^\top x)^2
=
x^\top UU^\top x,
\qquad
U=[u_1,\ldots,u_r]\in\mathbb{R}^{d\times r},
\label{eq:intro-positive-quadratic-network}
\end{equation}
the realized predictor is determined by the positive semidefinite
matrix
\begin{equation}
Q
=
UU^\top
\in
\mathbb{S}_+^d.
\label{eq:intro-predictor-matrix}
\end{equation}
The factor $U$, however, is not identifiable: for every
$R\in O(r)$,
\[
UR(UR)^\top
=
UU^\top.
\]
When $U$ has full column rank and $r\ge2$, the orthogonal orbit
\[
\{UR:R\in O(r)\}
\]
has positive dimension.  At a critical point of an
$O(r)$-invariant factor loss, its tangent directions are
representational directions and lie in the nullspace of the parameter
Hessian.  Thus, even when the predictor is locally identifiable from
the observations, the ambient parameter Hessian cannot distinguish
predictor motion from infinitesimal gauge motion.  The case $r=1$
is different: $O(1)=\{-1,1\}$ is discrete and therefore creates no
nonzero infinitesimal gauge direction.  In either case, an analysis
conducted entirely in $\mathbb{R}^{d\times r}$ must distinguish
changes in the represented predictor from redundancy of the chosen
factorization.

The geometric structure underlying this distinction is classical.
Fixed-rank positive semidefinite matrices can be represented as a
quotient of full-column-rank factors by the orthogonal group.  This
viewpoint has been used extensively in Riemannian optimization,
low-rank approximation, and matrix estimation
\cite{absil2008optimization,meyer2011regression,vandereycken2013lowrank,mishra2014fixedrank,massart2020quotient,boumal2023introduction}.
Recent work has further emphasized that smooth parametrizations may
alter optimization landscapes in ways determined by the
parametrization itself, and that singular or rank-deficient boundaries
require separate geometric treatment \cite{levin2023stationary,
levin2025parametrizations}.
These results provide a mature language for quotient manifolds and
factorized optimization.  They do not, by themselves, determine the
trajectory produced when $U$ is trained as an ordinary Euclidean
neural-network parameter.  In particular, they leave open several
algorithmic questions:

\begin{quote}
Does the Euclidean factor trajectory project to a geometrically
natural predictor trajectory?  Which curvature governs its local
observable rate after gauge directions have been removed?  When the
training equations admit multiple positive semidefinite interpolants,
which interpolant is selected by the factor dynamics?
\end{quote}

These questions also connect positive quadratic networks to low-rank
matrix estimation.  Learning a model of the form
\cref{eq:intro-positive-quadratic-network} from samples
$(x_i,y_i)$ amounts to recovering $Q$ from rank-one measurements
\[
y_i
=
x_i^\top Qx_i
=
\langle x_ix_i^\top,Q\rangle.
\]
Quadratic networks have therefore served as tractable models for
studying overparameterized optimization, benign nonconvexity, spectral
initialization, and algorithmic regularization
\cite{du2018power, soltani2018polynomial, li2018algorithmic, li2019rankone}.
More broadly, low-rank matrix factorization has led to recovery,
landscape, and convergence guarantees for matrix sensing and related
problems \cite{burer2003nonlinear, ge2017nospurious, tong2021scaledgd, jin2023incremental}.
Much of that literature focuses on global recovery, absence of
spurious local minima, or acceleration under poor conditioning.  Our
focus is complementary.  We ask how the exact quotient structure of
the map
\[
U\longmapsto UU^\top
\]
organizes the dynamics of the ordinary Euclidean factor
gradient method, and how the intrinsic curvature induced by this
quotient is reflected in local convergence.

A second motivation comes from implicit regularization.  In an
underdetermined problem, minimizing the training loss does not specify
a unique predictor, yet gradient-based algorithms often converge to
structured interpolants.  Matrix factorization is a central test bed
for this phenomenon.  Small initialization has been associated with
low-nuclear-norm or low-rank behavior
\cite{gunasekar2017implicit, li2018algorithmic, arora2019deep, razin2020norms,
jin2023incremental},
while analyses of homogeneous and diagonal networks show that the
selected solution may depend strongly on the parametrization,
initialization scale, initialization shape, and step size
\cite{chizat2020implicit, woodworth2020kernel, azulay2021initialization, even2023diagonal}.
A recurring mechanism is that gradient flow in a reparametrized model
can become mirror flow in predictor coordinates.  This connection has
been formulated abstractly for commuting parametrizations and used in
matrix-sensing analyses based on Bregman divergences
\cite{wu2021mirror, li2022reparametrized}.
An explicit selection theorem, however, requires more than asserting
that some complexity measure is implicitly minimized.  One must derive
the predictor dynamics from the actual trained parametrization, prove
convergence of the trajectory, identify the finite-initialization
variational problem, and then analyze both the small-initialization
limit and the discrepancy caused by a nonzero step size.

This paper carries out that program for positive quadratic networks.
We deliberately restrict attention to the model
\cref{eq:intro-positive-quadratic-network}.  This restriction makes it
possible to close the following chain:
\begin{equation}
\begin{aligned}
\text{exact quotient}
&\Longrightarrow
\text{actual factor dynamics}
\\
&\Longrightarrow
\text{effective quotient curvature}
\\
&\Longrightarrow
\text{local training rates and recovery}
\\
&\Longrightarrow
\text{explicit interpolation bias}.
\end{aligned}
\label{eq:intro-main-chain}
\end{equation}

The purpose is not to present quotient geometry itself as a new
formalism.  Rather, we use the exact quotient representation to derive
consequences for the training algorithm: an intrinsic Hessian that
removes the orthogonal gauge and describes the local linearized
predictor dynamics, an explicit but conservative recovery guarantee
under Gaussian rank-one measurements, and a trace-and-entropy
selection law in an underdetermined commuting regime.

Let
\[
\mathbb{R}^{d\times r}_*
:=
\left\{
U\in\mathbb{R}^{d\times r}:
\operatorname{rank}(U)=r
\right\}
\]
and
\[
\mathcal{M}_r^+
:=
\left\{
Q\in\mathbb{S}_+^d:
\operatorname{rank}(Q)=r
\right\}.
\]
We first prove directly that, for
$U,V\in\mathbb{R}^{d\times r}_*$,
\begin{equation}
UU^\top=VV^\top
\quad\Longleftrightarrow\quad
V=UR
\quad
\text{for some }R\in O(r).
\label{eq:intro-exact-fiber}
\end{equation}
Consequently,
\[
\mathbb{R}^{d\times r}_*/O(r)
\simeq
\mathcal{M}_r^+.
\]
All smooth quotient statements are made on this full-rank stratum.
Rank-deficient matrices belong to its singular boundary and are not
silently included in the manifold theory.

For an objective
\[
L(U)
=
\ell(UU^\top),
\qquad
G(Q)
:=
\nabla_Q\ell(Q),
\]
the Euclidean factor gradient is
\[
\nabla_U L(U)
=
2G(Q)U.
\]
Because $G(Q)$ is symmetric,
\[
U^\top\nabla_U L(U)
=
2U^\top G(Q)U
\]
is symmetric.  The factor gradient is therefore horizontal for the
quotient metric induced by the Frobenius metric.  Hence the actual
Euclidean parameter gradient flow contains no infinitesimal
orthogonal-gauge component and projects exactly to
\begin{equation}
\dot Q
=
-2\bigl(G(Q)Q+QG(Q)\bigr).
\label{eq:intro-predictor-flow}
\end{equation}
This is the Riemannian gradient flow of $\ell$ on
$\mathcal{M}_r^+$ for the induced quotient metric.

For a nonzero step size, ordinary factor gradient descent induces the
exact congruence recursion
\begin{equation}
Q_{k+1}
=
\bigl(I_d-2\eta G_k\bigr)
Q_k
\bigl(I_d-2\eta G_k\bigr),
\label{eq:intro-discrete-congruence}
\end{equation}
where $G_k=G(Q_k)$.  Equivalently,
\[
Q_{k+1}
=
Q_k
-
2\eta(G_kQ_k+Q_kG_k)
+
4\eta^2G_kQ_kG_k.
\]
Thus the discrete factor algorithm is not identified with an
unspecified Riemannian retraction step.  Its predictor recursion is
exact, and its first-order deviation from the continuous quotient flow
is explicit.

For the empirical square loss
\begin{equation}
\ell_n(Q)
=
\frac{1}{2n}
\left\|
\mathcal{A}_X(Q)-y
\right\|_2^2,
\qquad
[\mathcal{A}_X(Q)]_i
=
x_i^\top Qx_i,
\label{eq:intro-empirical-loss}
\end{equation}
we derive the quotient Hessian at an interpolating predictor
$Q_\star=U_\star U_\star^\top$.  If
\[
H
=
\Delta U_\star^\top
+
U_\star\Delta^\top
\in
T_{Q_\star}\mathcal{M}_r^+,
\]
where $\Delta$ is its horizontal lift, then
\begin{equation}
\operatorname{Hess}_g
\ell_n(Q_\star)[H,H]
=
\frac{1}{n}
\left\|\mathcal{A}_X(H)\right\|_2^2.
\label{eq:intro-effective-hessian}
\end{equation}
The associated effective eigenvalues are generalized eigenvalues
relative to the quotient metric $g_{Q_\star}$, rather than ordinary
eigenvalues defined using the ambient Frobenius norm on
$\mathbb{S}^d$.  Their spectrum describes the linearized quotient
dynamics after representational gauge directions have been removed.
In particular, the smallest effective eigenvalue determines the
slowest intrinsic mode of the linearized flow.

Formula \cref{eq:intro-effective-hessian} also separates two distinct
sources of degeneracy.  The orthogonal factor gauge has already been
removed by passing to the quotient, whereas genuine sample-level
non-identifiability remains through
\begin{equation}
T_{Q_\star}\mathcal{M}_r^+
\cap
\ker\mathcal{A}_X.
\label{eq:intro-effective-kernel}
\end{equation}

For Gaussian inputs, we compute the population normal operator
explicitly:
\[
\mathcal{T}(H)
=
\mathbb{E}
\bigl[
(x^\top Hx)xx^\top
\bigr]
=
2H+\operatorname{tr}(H)I_d.
\]
We then prove a uniform deviation bound for the empirical normal
operator on the entire ambient space $\mathbb{S}^d$.  This
full-space estimate yields explicit lower and upper bounds on the
effective quotient curvature and supplies a deterministic coercivity
condition for the nonlinear factor trajectory.

The same Gaussian moment identity gives the data-dependent matrix
\[
M_n
=
\frac{1}{2n}
\sum_{i=1}^{n}
y_i(x_ix_i^\top-I_d),
\qquad
\mathbb{E}M_n
=
Q_\star.
\]
Truncating $M_n$ to its $r$ largest positive eigenvalues produces
a spectral initializer.  Under the full-space deviation event, this
initializer enters an explicit Procrustes neighborhood in which a
secant regularity inequality holds.  We consequently obtain local
exponential convergence of ordinary Euclidean factor gradient flow
and local linear convergence of sufficiently small-step factor
gradient descent.

The gradient-descent step-size condition depends on the unknown
extreme eigenvalues of $Q_\star$ and is therefore an oracle
condition.  Moreover, the statistical bound is deliberately
conservative.  It controls the empirical operator on all of
$\mathbb{S}^d$ by a second-moment argument, rather than establishing
a restricted concentration inequality on a fixed tangent space or a
low-rank secant set.  The resulting sufficient sample requirement has
leading-order scale
\[
n
=
O
\left(
\frac{r^2d^8}{\delta}
\left(
\frac{\|Q_\star\|_F}
{\lambda_r(Q_\star)}
\right)^2
\right).
\]
We do not claim that this dependence is statistically optimal or that
it scales with the intrinsic dimension
\[
\dim\mathcal{M}_r^+
=
dr-\frac{r(r-1)}2.
\]
The theorem should instead be read as an explicit end-to-end
sufficient guarantee connecting the Gaussian measurement operator,
spectral initialization, quotient curvature, and the actual factor
training trajectory.

We finally study an underdetermined regime in which all measurement
matrices preserve a common orthogonal decomposition.  Let
$P_1,\ldots,P_m$ denote the joint spectral projectors and let
\[
d_a
:=
\operatorname{rank}(P_a).
\]
Predictors in the invariant algebra have the form
\[
Q(q)
=
\sum_{a=1}^{m}q_aP_a,
\qquad
q_a>0.
\]
Within this invariant algebra, the factor gradient flow reduces
exactly to
\begin{equation}
\dot q_a
=
-
\frac{4q_a}{d_a}
\left[
\frac{1}{n}
B^\top(Bq-y)
\right]_a.
\label{eq:intro-reduced-flow}
\end{equation}
Define
\begin{equation}
h(q)
=
\frac{1}{4}
\sum_{a=1}^{m}
d_a(q_a\log q_a-q_a).
\label{eq:intro-entropy-potential}
\end{equation}
Then
\[
\nabla^2 h(q)
=
\frac{1}{4}
\operatorname{diag}
\left(
\frac{d_1}{q_1},
\ldots,
\frac{d_m}{q_m}
\right),
\]
and \cref{eq:intro-reduced-flow} is the corresponding mirror flow.

For every fixed strictly positive initialization $q(0)$, we prove
global existence and convergence to the Bregman projection
\begin{equation}
q_\infty
=
\arg\min_{\substack{q\ge0,\,Bq=y}}
D_h(q,q(0)).
\label{eq:intro-bregman-selection}
\end{equation}
Thus the selected interpolant is derived from the trained
parametrization itself.  Convergence is not assumed in advance, and
the variational characterization is obtained from the limiting
dual relation of the actual flow.

For isotropic initialization
\[
q(0)
=
\varepsilon^2\mathbf{1},
\]
the Bregman objective separates into a dominant weighted-trace term
and a lower-order entropy term.  We show that
\[
\operatorname{dist}
\left(
q_\varepsilon,
\arg\min_{\substack{q\ge0,\,Bq=y}}
\sum_{a=1}^{m}d_aq_a
\right)
\longrightarrow0
\qquad
\text{as }\varepsilon\downarrow0.
\]
Since
\[
\operatorname{tr}(Q(q))
=
\sum_{a=1}^{m}d_aq_a,
\]
this yields a minimum-trace bias.  Under the commuting measurement
assumptions, the minimum value agrees with the minimum trace over the
full positive semidefinite interpolation set.  The subsequent entropy
tie-breaking statement is more restricted: when the minimum-trace
solution is nonunique, the small-initialization limit selects the
unique minimizer of
\[
\sum_{a=1}^{m}d_aq_a\log q_a
\]
on the minimum-trace face within the invariant joint spectral
algebra.  We do not claim an entropy characterization over arbitrary
noncommuting positive semidefinite interpolants.

We also derive an explicit
\[
O
\left(
\frac{1}{\log(1/\varepsilon^2)}
\right)
\]
upper bound for the trace gap.  For finite-step factor gradient
descent and fixed $\varepsilon>0$, we prove that the selected
discrete interpolant differs from the continuous Bregman projection
by $O(\eta)$.  More precisely, all constants in the finite-step
comparison are uniform in $\eta$ for fixed $\varepsilon$, but may
depend on the initialization scale.  The order of limits is therefore
essential: the discrete-to-continuous result first sends
$\eta\downarrow0$ with $\varepsilon$ fixed, and only afterward
considers the limit $\varepsilon\downarrow0$.

The main contributions are as follows. Exact quotient and projected training dynamics.
We characterize the complete fibers of
$U\mapsto UU^\top$ on the full-column-rank stratum and construct
the quotient metric induced by the Euclidean factor metric.  We
prove that ordinary Euclidean factor gradient flow is horizontal
and projects exactly to the quotient Riemannian gradient flow.  We
also derive the exact congruence recursion induced by finite-step
factor gradient descent.
An effective curvature tied to the local factor
dynamics.
At an interpolating predictor, the quotient Hessian is the
measurement Gram form restricted to
$T_{Q_\star}\mathcal{M}_r^+$ and evaluated relative to the
quotient metric.  This removes the orthogonal gauge while retaining
genuine predictor non-identifiability.  Its generalized spectrum
describes the linearized quotient modes, and the accompanying
neighborhood-level secant inequality yields local rates for the
actual Euclidean factor trajectory. An explicit but conservative Gaussian recovery
guarantee.
For Gaussian rank-one measurements, we compute the population
curvature, prove a full-space empirical-operator deviation bound,
construct a moment-based spectral initializer, and establish
local recovery by factor gradient flow and oracle-step-size
gradient descent.  The result is an end-to-end sufficient theorem,
not an intrinsic-dimension or statistically optimal recovery
guarantee. An explicit interpolation principle.
In the commuting underdetermined regime, the factor flow becomes an
exact entropy mirror flow.  Its limit is the Bregman projection
determined by the initialization.  Isotropic small initialization
selects the minimum-trace solution set, while nonuniqueness is
resolved by a weighted entropy criterion within the invariant
joint spectral algebra. A controlled continuous-to-discrete selection
comparison.
For fixed positive initialization, finite-step factor gradient
descent converges to an interpolant whose distance from the
continuous Bregman projection is first order in the step size.  The
theorem makes explicit the dependence of the comparison constant
on the initialization-dependent compactness bounds. Experiments matched to the theoretical mechanisms.
The numerical experiments verify the quotient identities and the
exact predictor recursions, compare the smallest effective
eigenvalue with the observed slow local mode, examine spectral
initialization and conditioning, confirm the Bregman projection
law, and illustrate minimum-trace selection, entropy tie-breaking,
and the $O(\eta)$ finite-step discrepancy.  They also demonstrate
that the explicit recovery basin, sample-size condition, and
oracle step-size bound are substantially more conservative than
the corresponding empirical behavior.

The conclusions are intentionally model-specific.  The smooth
quotient analysis is restricted to full-column-rank factors.  The
Gaussian recovery theorem is rank matched, local after a
data-dependent initialization, based on full-space concentration, and
uses an oracle learning-rate condition for gradient descent.  The
explicit Bregman and entropy selection theorems require a commuting
measurement algebra.  We do not claim that the same entropy law holds
for arbitrary noncommuting matrix sensing, that the Gaussian sample
bound is near optimal, or that a fixed nonzero step size shares the
same small-initialization limit as gradient flow.  These restrictions
allow the quotient geometry, the actual training dynamics, the
effective curvature, and the selected interpolant to be derived
rather than inserted as assumptions.

Section~\ref{sec:quotient-dynamics} develops the exact quotient
description and derives the continuous and discrete predictor
dynamics.  Section~\ref{sec:gaussian-recovery} introduces the effective
Hessian, proves the full-space Gaussian operator bound, and establishes
spectral initialization and local recovery.  Section~\ref{sec:commuting-trace-bias} analyzes the commuting underdetermined
regime, including Bregman selection, the minimum-trace limit, entropy
tie-breaking, and finite-step selection error.  Section~\ref{sec:numerical-experiments} reports the numerical experiments.
The final section discusses limitations and directions for extending
the analysis beyond positive quadratic networks.

\section{Exact Quotient Geometry and Training Dynamics}
\label{sec:quotient-dynamics}

We consider the positive-semidefinite factorization
\begin{equation}
    \pi(U) := UU^\top,
    \qquad
    U \in \mathbb{R}^{d\times r},
    \qquad
    1 \le r \le d,
    \label{eq:factorization-map}
\end{equation}
and the factorized empirical objective
\begin{equation}
    L(U) := \ell(\pi(U)),
    \label{eq:factorized-objective}
\end{equation}
where
\begin{equation}
    \ell(Q)
    :=
    \frac{1}{2n}
    \sum_{i=1}^{n}
    \bigl(
        \langle A_i,Q\rangle-y_i
    \bigr)^2,
    \qquad
    Q\in\mathbb{S}^d.
    \label{eq:predictor-objective}
\end{equation}
Here
\(
A_1,\ldots,A_n\in\mathbb{S}^d
\),
\(y\in\mathbb{R}^n\), and
\[
    \langle X,Y\rangle
    :=
    \operatorname{tr}(X^\top Y)
\]
denotes the Frobenius inner product.

The map in \cref{eq:factorization-map} is invariant under right
multiplication by an orthogonal matrix:
\begin{equation}
    \pi(UR)
    =
    URR^\top U^\top
    =
    UU^\top
    =
    \pi(U),
    \qquad
    R\in O(r).
    \label{eq:orthogonal-invariance}
\end{equation}
We show below that, on the full-column-rank stratum, this orthogonal
action accounts for the entire non-identifiability of the
factorization.

Define the full-column-rank factor space
\begin{equation}
    \mathcal{F}_{d,r}
    :=
    \left\{
        U\in\mathbb{R}^{d\times r}
        :
        \operatorname{rank}(U)=r
    \right\},
    \label{eq:factor-space}
\end{equation}
and the fixed-rank positive-semidefinite manifold
\begin{equation}
    \mathcal{M}_{r}^{+}
    :=
    \left\{
        Q\in\mathbb{S}^{d}
        :
        Q\succeq 0,
        \ \operatorname{rank}(Q)=r
    \right\}.
    \label{eq:fixed-rank-psd-manifold}
\end{equation}

\subsection{Exact identification of the quotient}
\label{subsec:exact-quotient}

We begin with the manifold structure of the predictor space.

\begin{proposition}[Fixed-rank PSD manifold]
\label{prop:fixed-rank-psd-manifold}
The set \(\mathcal{M}_{r}^{+}\) is a smooth embedded submanifold of
\(\mathbb{S}^{d}\) of dimension
\begin{equation}
    \dim \mathcal{M}_{r}^{+}
    =
    dr-\frac{r(r-1)}{2}.
    \label{eq:psd-manifold-dimension}
\end{equation}
For every
\(
Q=UU^\top\in\mathcal{M}_{r}^{+}
\)
with \(U\in\mathcal{F}_{d,r}\),
\begin{equation}
    T_Q\mathcal{M}_{r}^{+}
    =
    \left\{
        \Delta U^\top+U\Delta^\top
        :
        \Delta\in\mathbb{R}^{d\times r}
    \right\}.
    \label{eq:psd-tangent-space}
\end{equation}
\end{proposition}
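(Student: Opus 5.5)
A direct route is to exhibit $\mathcal{M}_r^+$ locally as the graph of a smooth map and then read off the tangent space from the factor map $\pi$. Fix $Q_0\in\mathcal{M}_r^+$. After an orthogonal change of coordinates, which acts linearly and isometrically on $\mathbb{S}^d$ and preserves the PSD cone and rank, we may assume
\[
Q_0=\begin{pmatrix}\Lambda_0&0\\0&0\end{pmatrix},
\qquad
\Lambda_0\succ0\ \text{of size } r.
\]
Write a nearby symmetric matrix in blocks as
\[
Q=\begin{pmatrix}A&B\\B^\top&C\end{pmatrix}.
\]
On the open neighborhood $\mathcal{V}=\{A\succ0\}$ we use the Schur complement. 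There $\operatorname{rank}Q=r+\operatorname{rank}(C-B^\top A^{-1}B)$, and $Q\succeq0$ if and only if $C-B^\top A^{-1}B\succeq0$. Consequently
\[
\mathcal{M}_r^+\cap\mathcal{V}
=
\left\{Q\in\mathcal{V}: C=B^\top A^{-1}B\right\}.
\]
This is the graph of the smooth map $(A,B)\mapsto B^\top A^{-1}B$, defined on the open set $\{A\succ0\}\subset\mathbb{S}^r\times\mathbb{R}^{r\times(d-r)}$. A graph of a smooth map over an open set is a smooth embedded submanifold. Its dimension is
\[
\frac{r(r+1)}{2}+r(d-r)=dr-\frac{r(r-1)}{2},
\]
which gives \cref{eq:psd-manifold-dimension}.

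For the tangent space, consider $\pi$ restricted to the open set $\mathcal{F}_{d,r}$. It maps into $\mathcal{M}_r^+$, since $UU^\top\succeq0$ and $\operatorname{rank}(UU^\top)=\operatorname{rank}U=r$. Because $\mathcal{M}_r^+$ is embedded, $\pi$ is smooth as a map into $\mathcal{M}_r^+$. Its differential is
\[
D\pi(U)[\Delta]=\Delta U^\top+U\Delta^\top .
\]
This immediately gives the inclusion of the right-hand side of \cref{eq:psd-tangent-space} in $T_Q\mathcal{M}_r^+$.

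For equality it suffices to show that the image of $D\pi(U)$ has dimension $dr-r(r-1)/2$. We compute the kernel of $D\pi(U)$. Decompose $\Delta=UK+U_\perp W$, where $K\in\mathbb{R}^{r\times r}$, $U_\perp$ is a basis of the orthogonal complement of the range of $U$, and $W$ is arbitrary; this decomposition is unique because $U$ has full column rank. Then
\[
\Delta U^\top+U\Delta^\top=U(K+K^\top)U^\top+U_\perp WU^\top+UW^\top U_\perp^\top .
\]
The three pieces live in mutually orthogonal blocks. Since $U$ has full column rank, the expression vanishes exactly when $W=0$ and $K$ is skew-symmetric. Hence the kernel has dimension $r(r-1)/2$, and the image has dimension $dr-r(r-1)/2=\dim T_Q\mathcal{M}_r^+$. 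The two spaces therefore coincide. As a byproduct, $\pi$ is a submersion onto $\mathcal{M}_r^+$, and its kernel is $\{UK:K^\top=-K\}$, the tangent space of the $O(r)$-orbit; this will be useful for the quotient results later.

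The main obstacle is the Schur-complement characterization in the first step: one must check carefully that, on $\{A\succ0\}$, both the rank condition and positive semidefiniteness reduce to $C=B^\top A^{-1}B$. The first thing to try is the congruence
\[
Q=\begin{pmatrix}I&0\\B^\top A^{-1}&I\end{pmatrix}
\begin{pmatrix}A&0\\0&C-B^\top A^{-1}B\end{pmatrix}
\begin{pmatrix}I&A^{-1}B\\0&I\end{pmatrix}.
\]
The outer factors are invertible, so rank and inertia are preserved. Since $A$ already contributes rank $r$, the condition $\operatorname{rank}Q=r$ forces $C-B^\top A^{-1}B=0$. Conversely, if $C=B^\top A^{-1}B$, then $Q$ is congruent to $\operatorname{diag}(A,0)$, so it is PSD of rank $r$. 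Every other step is linear algebra.
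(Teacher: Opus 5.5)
Your proof is correct. The manifold part is essentially the paper's argument. The paper's chart $\Psi(S,B)$, which places $BS^{-1}B^\top$ in the lower-right block in an eigenbasis of $Q$, is exactly your graph of $(A,B)\mapsto B^\top A^{-1}B$. Your explicit congruence supplies the justification that the paper compresses into the single sentence asserting that the Schur complement vanishes.

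The tangent-space part is where you genuinely differ. The paper differentiates its chart at $(\Lambda,0)$ to obtain the block description $\bigl(\begin{smallmatrix} H & K^\top\\ K & 0\end{smallmatrix}\bigr)$ of $T_Q\mathcal{M}_r^+$ in the basis $[V\ V_\perp]$. Then, for the particular factor $U=V\Lambda^{1/2}$, it writes down explicit preimages $\Delta=VC+V_\perp D$ with $C=\tfrac12 H\Lambda^{-1/2}$ and $D=K\Lambda^{-1/2}$. You instead get the inclusion $\operatorname{range} D\pi_U\subseteq T_Q\mathcal{M}_r^+$ for free, from smoothness of $\pi$ into the embedded submanifold. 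You then close the gap by a rank--nullity count using $\ker D\pi_U=\{UK:K^\top=-K\}$.

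Your route has two advantages. First, it applies verbatim to every $U\in\mathcal{F}_{d,r}$ with $UU^\top=Q$. The paper's computation covers only $U=V\Lambda^{1/2}$, so it tacitly needs an easy but unstated observation: any other factor has the form $V\Lambda^{1/2}C$ with $C$ invertible, and the set $\{\Delta U^\top+U\Delta^\top\}$ is unchanged under $U\mapsto UC$. Second, you obtain as a byproduct the kernel identity that the paper proves separately inside \cref{thm:exact-factor-fibers}, by essentially the same decomposition you use.

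The paper's route, in turn, gives a concrete block parametrization of the tangent space and explicit lifts without any dimension counting. That is convenient for later coordinate computations.
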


\begin{proof}
Fix \(Q\in\mathcal{M}_{r}^{+}\). Choose an orthogonal matrix
\(
[V\ \ V_\perp]\in O(d)
\)
such that
\begin{equation}
    Q=V\Lambda V^\top,
    \qquad
    \Lambda\in\mathbb{S}_{++}^{r}.
    \label{eq:psd-spectral-decomposition}
\end{equation}
When \(r=d\), all blocks involving \(V_\perp\) are omitted.

For \(S\in\mathbb{S}_{++}^{r}\) sufficiently close to \(\Lambda\) and
\(B\in\mathbb{R}^{(d-r)\times r}\) sufficiently close to zero, define
\begin{equation}
    \Psi(S,B)
    :=
    [V\ \ V_\perp]
    \begin{pmatrix}
        S & B^\top \\
        B & BS^{-1}B^\top
    \end{pmatrix}
    [V\ \ V_\perp]^\top.
    \label{eq:psd-local-chart}
\end{equation}
The middle matrix factors as
\begin{equation}
    \begin{pmatrix}
        S & B^\top \\
        B & BS^{-1}B^\top
    \end{pmatrix}
    =
    \begin{pmatrix}
        I_r \\
        BS^{-1}
    \end{pmatrix}
    S
    \begin{pmatrix}
        I_r & S^{-1}B^\top
    \end{pmatrix},
    \label{eq:psd-chart-factorization}
\end{equation}
and is therefore positive semidefinite of rank \(r\).

Conversely, every \(Q'\in\mathcal{M}_{r}^{+}\) sufficiently close to
\(Q\) has, in the basis \([V\ \ V_\perp]\), a representation
\begin{equation}
    Q'
    =
    [V\ \ V_\perp]
    \begin{pmatrix}
        S & B^\top \\
        B & C
    \end{pmatrix}
    [V\ \ V_\perp]^\top
    \label{eq:nearby-psd-block-form}
\end{equation}
with \(S\in\mathbb{S}_{++}^{r}\). Since \(Q'\succeq0\) and
\(
\operatorname{rank}(Q')=\operatorname{rank}(S)=r
\),
its Schur complement vanishes:
\begin{equation}
    C-BS^{-1}B^\top=0.
    \label{eq:vanishing-schur-complement}
\end{equation}
Thus \(\Psi\) is a smooth local parametrization of
\(\mathcal{M}_{r}^{+}\).

The number of free parameters is
\[
    \frac{r(r+1)}{2}+(d-r)r
    =
    dr-\frac{r(r-1)}{2},
\]
which proves \cref{eq:psd-manifold-dimension}.

Differentiating \cref{eq:psd-local-chart} at
\((S,B)=(\Lambda,0)\) yields tangent matrices of the form
\begin{equation}
    [V\ \ V_\perp]
    \begin{pmatrix}
        H & K^\top \\
        K & 0
    \end{pmatrix}
    [V\ \ V_\perp]^\top,
    \label{eq:psd-chart-tangent}
\end{equation}
where
\(
H\in\mathbb{S}^{r}
\)
and
\(
K\in\mathbb{R}^{(d-r)\times r}
\)
are arbitrary.

Set \(U=V\Lambda^{1/2}\). For
\(
\Delta=VC+V_\perp D
\),
\begin{equation}
\begin{aligned}
    \Delta U^\top+U\Delta^\top
    &=
    [V\ \ V_\perp]
    \begin{pmatrix}
        C\Lambda^{1/2}+\Lambda^{1/2}C^\top
        &
        \Lambda^{1/2}D^\top
        \\
        D\Lambda^{1/2}
        &
        0
    \end{pmatrix}
    [V\ \ V_\perp]^\top.
    \label{eq:tangent-factor-form}
\end{aligned}
\end{equation}
Given \(H\) and \(K\) in \cref{eq:psd-chart-tangent}, choose
\[
    C=\frac{1}{2}H\Lambda^{-1/2},
    \qquad
    D=K\Lambda^{-1/2}.
\]
This produces every tangent vector in
\cref{eq:psd-chart-tangent}, proving
\cref{eq:psd-tangent-space}.
\end{proof}

The orthogonal group acts smoothly on \(\mathcal{F}_{d,r}\) by
\begin{equation}
    U\cdot R:=UR,
    \qquad
    R\in O(r).
    \label{eq:orthogonal-group-action}
\end{equation}

\begin{theorem}[Exact factor fibers]
\label{thm:exact-factor-fibers}
For \(U,V\in\mathcal{F}_{d,r}\),
\begin{equation}
    UU^\top=VV^\top
    \quad\Longleftrightarrow\quad
    V=UR
    \quad
    \text{for a unique }R\in O(r).
    \label{eq:exact-factor-fiber}
\end{equation}
Consequently, the action in \cref{eq:orthogonal-group-action} is
free and proper, and the induced map
\begin{equation}
    \overline{\pi}:
    \mathcal{F}_{d,r}/O(r)
    \longrightarrow
    \mathcal{M}_{r}^{+},
    \qquad
    \overline{\pi}([U])=UU^\top,
    \label{eq:quotient-to-psd-map}
\end{equation}
is a diffeomorphism.
\end{theorem}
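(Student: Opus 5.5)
I will first establish the algebraic fiber statement, then deduce the properties of the action, and finally obtain the diffeomorphism from the submersion property of \(\pi\) together with the quotient manifold theorem.

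\textbf{Fiber characterization.} The direction \(V=UR\Rightarrow UU^\top=VV^\top\) is \cref{eq:orthogonal-invariance}. For the converse, assume \(UU^\top=VV^\top\) with \(U,V\in\mathcal{F}_{d,r}\). Since \(U\) has full column rank, \(U^\top U\) is invertible and \(U^+:=(U^\top U)^{-1}U^\top\) satisfies \(U^+U=I_r\). The column spaces agree: \(\operatorname{range}(U)=\operatorname{range}(UU^\top)=\operatorname{range}(VV^\top)=\operatorname{range}(V)\). Hence \(V=UR\) with \(R:=U^+V\), because \(UU^+\) is the orthogonal projector onto \(\operatorname{range}(U)\). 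Then
\[
URR^\top U^\top=UU^\top,
\]
and multiplying on the left by \(U^+\) and on the right by \((U^+)^\top\) gives \(RR^\top=I_r\), so \(R\in O(r)\). Uniqueness holds because \(UR=UR'\) implies \(R=U^+UR=R'\). The same computation shows freeness: \(UR=U\) forces \(R=I_r\).

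\textbf{Properness.} Properness is automatic since \(O(r)\) is compact: a continuous action of a compact group is proper. By the quotient manifold theorem, \(\mathcal{F}_{d,r}/O(r)\) is then a smooth manifold of dimension
\[
dr-\tfrac{r(r-1)}{2},
\]
and the projection \(p:\mathcal{F}_{d,r}\to\mathcal{F}_{d,r}/O(r)\) is a smooth submersion.

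\textbf{The diffeomorphism.} The map \(\pi\) is smooth into \(\mathbb{S}^d\). It takes values in \(\mathcal{M}_r^+\) because \(\operatorname{rank}(UU^\top)=\operatorname{rank}(U)=r\). Since \(\mathcal{M}_r^+\) is embedded by \cref{prop:fixed-rank-psd-manifold}, \(\pi\) is smooth as a map into \(\mathcal{M}_r^+\). It is constant on orbits, so \(\overline{\pi}\) is well defined. Because \(p\) is a surjective submersion, \(\overline{\pi}\) is smooth. The fiber characterization shows that \(\overline{\pi}\) is injective. For surjectivity, write \(Q=V\Lambda V^\top\) and take \(U=V\Lambda^{1/2}\).

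The step needing the most care is showing that \(\overline{\pi}^{-1}\) is smooth. Here I will use \cref{eq:psd-tangent-space}: \(d\pi_U(\Delta)=\Delta U^\top+U\Delta^\top\) is onto \(T_Q\mathcal{M}_r^+\), so \(\pi\) is a submersion onto \(\mathcal{M}_r^+\). Consequently \(d\overline{\pi}\) is surjective at every point. Since domain and codomain have equal dimension, \(d\overline{\pi}\) is bijective, and by the inverse function theorem \(\overline{\pi}\) is a local diffeomorphism. A bijective local diffeomorphism is a diffeomorphism.

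Alternatively, smoothness of the inverse can be seen directly from the chart \(\Psi\) of \cref{eq:psd-local-chart}. The factorization \cref{eq:psd-chart-factorization} gives the explicit smooth local section
\[
(S,B)\mapsto [V\ V_\perp]\begin{pmatrix}I_r\\ BS^{-1}\end{pmatrix}S^{1/2},
\]
and composing this section with \(p\) yields \(\overline{\pi}^{-1}\) locally.
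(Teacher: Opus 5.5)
Your proof is correct. The fiber argument (with $R=U^{+}V$) and the freeness and properness steps coincide with the paper's. The difference is in how you show that $d\overline{\pi}$ is an isomorphism.

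The paper computes $\ker D\pi_U$ explicitly. Using the projector $P_\perp$ and full column rank, it shows that this kernel equals $\{U\Omega:\Omega^\top=-\Omega\}$, the tangent space to the orbit. Combined with the image statement in \cref{prop:fixed-rank-psd-manifold}, this gives injectivity and surjectivity of $d\overline{\pi}$ directly, with no dimension counting.

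You instead obtain only surjectivity, from $\pi=\overline{\pi}\circ p$. You then close the argument by matching $\dim(\mathcal{F}_{d,r}/O(r))=dr-\dim O(r)$ from the quotient manifold theorem against the chart count $dr-r(r-1)/2$ in \cref{prop:fixed-rank-psd-manifold}.

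Your route is shorter and avoids the kernel computation, but it depends on both dimension formulas agreeing. The paper's route yields the identification $\ker D\pi_U=\mathcal{V}_U$ as a by-product, and the paper reuses it immediately for the horizontal spaces, horizontal lifts, and the quotient metric.

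Your alternative via the local section $\sigma(S,B)=[V\ V_\perp]\begin{pmatrix}I_r\\ BS^{-1}\end{pmatrix}S^{1/2}$ is more constructive than either. It exhibits $\overline{\pi}^{-1}$ locally as $p\circ\sigma\circ\Psi^{-1}$, which is smooth because $(S,B)=(V^\top QV,\,V_\perp^\top QV)$ is read off linearly. It bypasses the inverse function theorem entirely.
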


\begin{proof}
If \(V=UR\) with \(R\in O(r)\), then
\[
    VV^\top
    =
    URR^\top U^\top
    =
    UU^\top.
\]

Conversely, suppose
\begin{equation}
    UU^\top=VV^\top.
    \label{eq:equal-gram-matrices}
\end{equation}
For every real matrix \(W\),
\(
\operatorname{range}(WW^\top)=\operatorname{range}(W)
\).
Therefore,
\[
    \operatorname{range}(U)
    =
    \operatorname{range}(UU^\top)
    =
    \operatorname{range}(VV^\top)
    =
    \operatorname{range}(V).
\]
Since both factors have full column rank, there exists an invertible
\(C\in\mathbb{R}^{r\times r}\) such that
\begin{equation}
    V=UC.
    \label{eq:factors-related-by-C}
\end{equation}
Substitution into \cref{eq:equal-gram-matrices} gives
\[
    U(I_r-CC^\top)U^\top=0.
\]
Let
\[
    U^\dagger
    :=
    (U^\top U)^{-1}U^\top
\]
be the left pseudoinverse of \(U\). Multiplying from the left by
\(U^\dagger\) and from the right by \((U^\dagger)^\top\) yields
\[
    CC^\top=I_r.
\]
Thus \(C\in O(r)\). If \(UR_1=UR_2\), then the full column rank of
\(U\) implies \(R_1=R_2\), proving uniqueness.

The action is free because
\[
    UR=U
    \quad\Longrightarrow\quad
    R=I_r.
\]
It is proper because \(O(r)\) is compact. Hence
\(\mathcal{F}_{d,r}/O(r)\) is a smooth manifold and the quotient map
\[
    \mathfrak{q}:
    \mathcal{F}_{d,r}
    \longrightarrow
    \mathcal{F}_{d,r}/O(r)
    \label{eq:quotient-projection}
\]
is a smooth submersion.

The differential of \(\pi\) is
\begin{equation}
    D\pi_U[\Delta]
    =
    \Delta U^\top+U\Delta^\top.
    \label{eq:differential-pi}
\end{equation}
We claim that
\begin{equation}
    \ker D\pi_U
    =
    \left\{
        U\Omega:
        \Omega^\top=-\Omega
    \right\}.
    \label{eq:kernel-differential-pi}
\end{equation}
The inclusion from right to left follows from
\[
    D\pi_U[U\Omega]
    =
    U(\Omega+\Omega^\top)U^\top
    =
    0.
\]

For the converse, suppose \(D\pi_U[\Delta]=0\). Decompose
\begin{equation}
    \Delta=UK+\Delta_\perp,
    \qquad
    U^\top\Delta_\perp=0.
    \label{eq:delta-orthogonal-decomposition}
\end{equation}
Let
\[
    P_\perp
    :=
    I_d-U(U^\top U)^{-1}U^\top.
\]
Multiplying
\(
\Delta U^\top+U\Delta^\top=0
\)
from the left by \(P_\perp\) and from the right by
\(U(U^\top U)^{-1}\) gives
\[
    \Delta_\perp=0.
\]
Hence \(\Delta=UK\). Substitution into the kernel equation gives
\[
    U(K+K^\top)U^\top=0.
\]
Full column rank of \(U\) implies
\(
K+K^\top=0
\),
proving \cref{eq:kernel-differential-pi}.

The right-hand side of \cref{eq:kernel-differential-pi} is precisely
the tangent space to the orbit \(UO(r)\). By
\cref{prop:fixed-rank-psd-manifold}, the image of \(D\pi_U\) is
\(T_{UU^\top}\mathcal{M}_{r}^{+}\). Therefore the differential of
\(\overline{\pi}\) is a linear isomorphism at every orbit.
Consequently, \(\overline{\pi}\) is a local diffeomorphism. Since it
is also bijective by \cref{eq:exact-factor-fiber}, it is a global
diffeomorphism.
\end{proof}

By \cref{thm:exact-factor-fibers}, the effective parameter space is
exactly
\begin{equation}
    \mathcal{F}_{d,r}/O(r)
    \cong
    \mathcal{M}_{r}^{+}.
    \label{eq:quotient-identification}
\end{equation}

\subsection{Vertical and horizontal spaces}
\label{subsec:horizontal-spaces}

The tangent space to the orbit through \(U\) is the vertical space
\begin{equation}
    \mathcal{V}_U
    :=
    \left\{
        U\Omega:
        \Omega^\top=-\Omega
    \right\}.
    \label{eq:vertical-space}
\end{equation}
We equip \(\mathcal{F}_{d,r}\), viewed as an open submanifold of
\(\mathbb{R}^{d\times r}\), with the Frobenius metric.

\begin{proposition}[Horizontal space and orthogonal projection]
\label{prop:horizontal-projection}
The Frobenius-orthogonal complement of \(\mathcal{V}_U\) is
\begin{equation}
    \mathcal{H}_U
    =
    \left\{
        \Delta\in\mathbb{R}^{d\times r}
        :
        U^\top\Delta
        \text{ is symmetric}
    \right\}.
    \label{eq:horizontal-space}
\end{equation}
For every \(Z\in\mathbb{R}^{d\times r}\), there exists a unique
skew-symmetric matrix \(\Omega\) satisfying
\begin{equation}
    S\Omega+\Omega S
    =
    U^\top Z-Z^\top U,
    \qquad
    S:=U^\top U.
    \label{eq:sylvester-horizontal-projection}
\end{equation}
The horizontal projection of \(Z\) is
\begin{equation}
    P_U^{\mathrm{hor}}(Z)
    =
    Z-U\Omega.
    \label{eq:horizontal-projection}
\end{equation}
\end{proposition}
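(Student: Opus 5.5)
The plan is to prove the three claims in order: first the characterization of the orthogonal complement, then unique solvability of the Sylvester equation, and finally that $Z-U\Omega$ is the orthogonal projection.

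\textbf{Orthogonal complement.} For $\Delta\in\mathbb{R}^{d\times r}$ and skew $\Omega$,
\[
\langle \Delta,U\Omega\rangle=\operatorname{tr}(\Delta^\top U\Omega)=\langle U^\top\Delta,\Omega\rangle .
\]
Every square matrix splits Frobenius-orthogonally into symmetric and skew parts. Hence $\Delta\perp\mathcal{V}_U$ if and only if the skew part of $U^\top\Delta$ vanishes, i.e.\ $U^\top\Delta$ is symmetric. This gives \cref{eq:horizontal-space}.

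\textbf{Unique solvability of the Sylvester equation.} Since $U\in\mathcal{F}_{d,r}$, we have $S=U^\top U\in\mathbb{S}^r_{++}$. Consider the linear map $\mathcal{L}(\Omega)=S\Omega+\Omega S$ on all $r\times r$ matrices. Diagonalize $S=W\operatorname{diag}(s_i)W^\top$ with $s_i>0$. In the basis $W$ the map acts entrywise as $\tilde\Omega_{ij}\mapsto(s_i+s_j)\tilde\Omega_{ij}$, and $s_i+s_j>0$, so $\mathcal{L}$ is invertible. Moreover $\mathcal{L}$ commutes with transposition, since $\mathcal{L}(\Omega)^\top=\mathcal{L}(\Omega^\top)$. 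It therefore maps skew matrices to skew matrices and, being injective on a finite-dimensional space, maps them bijectively onto skew matrices. The right-hand side $U^\top Z-Z^\top U$ is skew, so there is exactly one skew solution $\Omega$. Uniqueness among all matrices also holds, because $\mathcal{L}$ is injective on the whole space.

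\textbf{Identification of the projection.} Two facts are needed.
\begin{itemize}
\item The vector $U\Omega$ lies in $\mathcal{V}_U$, because $\Omega$ is skew.
\item The vector $Z-U\Omega$ is horizontal. Indeed,
\[
U^\top(Z-U\Omega)-(Z-U\Omega)^\top U=(U^\top Z-Z^\top U)-S\Omega+\Omega^\top S .
\]
Using $\Omega^\top=-\Omega$, this equals $(U^\top Z-Z^\top U)-(S\Omega+\Omega S)=0$ by \cref{eq:sylvester-horizontal-projection}.
\end{itemize}
So $Z=(Z-U\Omega)+U\Omega$ decomposes $Z$ into an $\mathcal{H}_U$ component and a $\mathcal{V}_U$ component. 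By the first step these subspaces are orthogonal complements, so the decomposition is unique and $Z-U\Omega=P_U^{\mathrm{hor}}(Z)$.

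\textbf{Main obstacle.} The only real content is the invertibility of $\mathcal{L}$ restricted to skew matrices, which is where full column rank ($S\succ0$) enters. The eigenbasis argument above handles it directly. As a fallback, one can argue injectivity without diagonalizing: if $S\Omega+\Omega S=0$ with $\Omega$ skew, then $\operatorname{tr}(\Omega^\top S\Omega)+\operatorname{tr}(\Omega^\top\Omega S)=0$. Both terms are nonnegative since $S\succ0$, and the first vanishes only when $\Omega=0$.
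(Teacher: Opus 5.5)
Your proposal is correct and follows essentially the paper's route: the pairing $\langle\Delta,U\Omega\rangle=\langle U^\top\Delta,\Omega\rangle$ characterizes the complement, horizontality of $Z-U\Omega$ reduces to the Sylvester equation, and diagonalizing $S$ gives the positive coefficients $s_i+s_j$. Your extra observations are harmless refinements of steps the paper leaves implicit. These are that $\Omega\mapsto S\Omega+\Omega S$ commutes with transposition, so its inverse preserves skew matrices, and the trace-based injectivity argument.
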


\begin{proof}
For \(\Delta\in\mathbb{R}^{d\times r}\) and
\(\Omega^\top=-\Omega\),
\[
    \langle \Delta,U\Omega\rangle
    =
    \operatorname{tr}(\Delta^\top U\Omega).
\]
This vanishes for every skew-symmetric \(\Omega\) if and only if
\(\Delta^\top U\), equivalently \(U^\top\Delta\), is symmetric.
This proves \cref{eq:horizontal-space}.

Write
\[
    Z=(Z-U\Omega)+U\Omega.
\]
The first term is horizontal if and only if
\(
U^\top Z-S\Omega
\)
is symmetric. Since \(\Omega^\top=-\Omega\), this condition is
equivalent to
\[
    U^\top Z-S\Omega
    =
    Z^\top U+\Omega S,
\]
which is \cref{eq:sylvester-horizontal-projection}.

Since \(S\in\mathbb{S}_{++}^{r}\), the Sylvester operator
\[
    \mathscr{S}_S:
    \Omega\longmapsto S\Omega+\Omega S
\]
is invertible on the space of skew-symmetric matrices. Indeed, if
\[
    S
    =
    P\operatorname{diag}(\lambda_1,\ldots,\lambda_r)P^\top,
    \qquad
    \lambda_j>0,
\]
then, in this eigenbasis,
\[
    (\lambda_i+\lambda_j)\Omega_{ij}
    =
    (U^\top Z-Z^\top U)_{ij}.
\]
Every coefficient \(\lambda_i+\lambda_j\) is strictly positive, so
the skew-symmetric solution is unique. 
\cref{eq:horizontal-projection} follows.
\end{proof}

Since
\(
\ker D\pi_U=\mathcal{V}_U
\),
the restriction
\begin{equation}
    D\pi_U\big|_{\mathcal{H}_U}:
    \mathcal{H}_U
    \longrightarrow
    T_{UU^\top}\mathcal{M}_{r}^{+}
    \label{eq:horizontal-differential-isomorphism}
\end{equation}
is a linear isomorphism.

For
\(
\xi\in T_Q\mathcal{M}_{r}^{+}
\)
with \(Q=UU^\top\), we denote by
\(
\xi_U^\uparrow\in\mathcal{H}_U
\)
the unique horizontal lift satisfying
\begin{equation}
    D\pi_U[\xi_U^\uparrow]=\xi.
    \label{eq:horizontal-lift}
\end{equation}
Every other lift differs from \(\xi_U^\uparrow\) by a vertical
vector. Consequently,
\begin{equation}
    \|\xi_U^\uparrow\|_F
    =
    \min_{\Delta:\,D\pi_U[\Delta]=\xi}
    \|\Delta\|_F.
    \label{eq:minimum-norm-horizontal-lift}
\end{equation}

\subsection{The quotient metric}
\label{subsec:quotient-metric}

\begin{definition}[Quotient metric]
\label{def:quotient-metric}
For \(Q=UU^\top\in\mathcal{M}_{r}^{+}\) and
\(
\xi,\zeta\in T_Q\mathcal{M}_{r}^{+}
\),
define
\begin{equation}
    g_Q(\xi,\zeta)
    :=
    \left\langle
        \xi_U^\uparrow,
        \zeta_U^\uparrow
    \right\rangle.
    \label{eq:quotient-metric}
\end{equation}
\end{definition}

\begin{proposition}[Well-definedness of the quotient metric]
\label{prop:quotient-metric-well-defined}
The metric in \cref{def:quotient-metric} is independent of the
choice of factor \(U\). It defines a smooth Riemannian metric on
\(\mathcal{M}_{r}^{+}\), and
\begin{equation}
    \pi:
    \bigl(
        \mathcal{F}_{d,r},
        \langle\cdot,\cdot\rangle_F
    \bigr)
    \longrightarrow
    \bigl(
        \mathcal{M}_{r}^{+},
        g
    \bigr)
    \label{eq:riemannian-submersion-map}
\end{equation}
is a Riemannian submersion.
\end{proposition}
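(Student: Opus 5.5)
The proof rests on how horizontal lifts change when the factor is rotated. Suppose \(V=UR\) with \(R\in O(r)\); by \cref{thm:exact-factor-fibers} every other factor of \(Q\) has this form. Let \(\xi\in T_Q\mathcal{M}_r^+\) have horizontal lift \(\xi_U^\uparrow\) at \(U\).

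\textbf{Step 1: equivariance of lifts.} We show \(\xi_V^\uparrow=\xi_U^\uparrow R\).
\begin{itemize}
\item It is horizontal at \(V\): \(V^\top\xi_U^\uparrow R=R^\top(U^\top\xi_U^\uparrow)R\) is symmetric, by \cref{prop:horizontal-projection}.
\item It maps to \(\xi\): \(D\pi_V[\xi_U^\uparrow R]=\xi_U^\uparrow RR^\top U^\top+URR^\top(\xi_U^\uparrow)^\top=\xi\).
\end{itemize}
Uniqueness of horizontal lifts, from the isomorphism \cref{eq:horizontal-differential-isomorphism}, then gives the identity. Since right multiplication by \(R\) is a Frobenius isometry, \(\langle\xi_U^\uparrow R,\zeta_U^\uparrow R\rangle=\langle\xi_U^\uparrow,\zeta_U^\uparrow\rangle\). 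So \(g_Q\) does not depend on the chosen factor.

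\textbf{Step 2: positive definiteness.} Each \(g_Q\) is symmetric and bilinear, because the lift map \(\xi\mapsto\xi_U^\uparrow\) is linear; it is the inverse of the isomorphism \cref{eq:horizontal-differential-isomorphism}. It is positive definite because this map is injective.

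\textbf{Step 3: smoothness (the main obstacle).} We must show \(Q\mapsto g_Q\) is smooth, even though lifts are defined through a factor \(U\) that is not canonical. Here is the plan.
\begin{itemize}
\item Work locally on \(\mathcal{F}_{d,r}\). The quotient map is a submersion (\cref{thm:exact-factor-fibers}) and \(\overline{\pi}\) is a diffeomorphism, so \(\pi\) admits smooth local sections \(\sigma\) around any \(Q_0\). Concretely, the chart \cref{eq:psd-local-chart} with \cref{eq:psd-chart-factorization} gives \(\sigma(\Psi(S,B))=[V\ V_\perp]\begin{pmatrix}I_r\\ BS^{-1}\end{pmatrix}S^{1/2}\), which is smooth because the matrix square root is smooth on \(\mathbb{S}^r_{++}\).
\item Write the lift explicitly. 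For \(U=\sigma(Q)\) and any smooth lift \(\Delta\) of \(\xi\), set \(\xi_U^\uparrow=P_U^{\mathrm{hor}}(\Delta)\). By \cref{prop:horizontal-projection} this equals \(\Delta-U\Omega\), where \(\Omega=\mathscr{S}_S^{-1}(U^\top\Delta-\Delta^\top U)\) and \(S=U^\top U\). One admissible \(\Delta\) is \(\Delta=\bigl(P_\perp\xi+\tfrac12 U(U^\top U)^{-1}U^\top\xi\bigr)U(U^\top U)^{-1}\), taking \(P_\perp\) at \(U\); we verify directly that it maps to \(\xi\) for \(\xi\) in \(T_Q\mathcal{M}_r^+\).
\item Conclude smoothness. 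The inverse of the invertible linear operator \(\mathscr{S}_S\) depends smoothly on \(S\in\mathbb{S}^r_{++}\). Hence \(g_Q(\xi,\zeta)\) is a smooth function of \(Q\) and smooth vector fields \(\xi,\zeta\) in the chart.
\end{itemize}

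\textbf{Step 4: Riemannian submersion.} We already know \(\pi\) is a submersion with \(\ker D\pi_U=\mathcal{V}_U\) by \cref{eq:kernel-differential-pi}. By construction \(D\pi_U|_{\mathcal{H}_U}\) maps \(\mathcal{H}_U=\mathcal{V}_U^\perp\) isometrically onto \((T_Q\mathcal{M}_r^+,g_Q)\). Indeed, for \(\Delta,\Delta'\in\mathcal{H}_U\) we have \(\Delta=(D\pi_U\Delta)_U^\uparrow\), so \(g_Q(D\pi_U\Delta,D\pi_U\Delta')=\langle\Delta,\Delta'\rangle\). This is exactly the definition of a Riemannian submersion.
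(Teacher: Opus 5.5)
Your proposal is correct and follows the same route as the paper. The equivariance $\xi_{UR}^\uparrow=\xi_U^\uparrow R$ gives independence of the factor, injectivity of the lift gives positive definiteness, smooth dependence of the inverse Sylvester operator on $S=U^\top U$ gives smoothness, and the isometry of $D\pi_U$ on $\mathcal{H}_U$ gives the Riemannian submersion.

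The one place you go further is Step 3. There you take the local section $\sigma$ built from the chart factorization, together with your explicit lift, which does map to $\xi$ because $P_\perp\xi P_\perp=0$ for $\xi\in T_Q\mathcal{M}_r^+$. These show that smoothness descends from $\mathcal{F}_{d,r}$ to $\mathcal{M}_r^+$, a step the paper only asserts.
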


\begin{proof}
Let \(U'=UR\) with \(R\in O(r)\). For
\(\Delta\in\mathbb{R}^{d\times r}\),
\[
    (U')^\top(\Delta R)
    =
    R^\top U^\top\Delta R.
\]
Therefore,
\begin{equation}
    \Delta\in\mathcal{H}_U
    \quad\Longleftrightarrow\quad
    \Delta R\in\mathcal{H}_{U'}.
    \label{eq:horizontal-equivariance}
\end{equation}
Moreover,
\begin{equation}
    D\pi_{U'}[\Delta R]
    =
    \Delta RR^\top U^\top
    +
    URR^\top\Delta^\top
    =
    D\pi_U[\Delta].
    \label{eq:differential-equivariance}
\end{equation}
Hence, if \(\xi_U^\uparrow\) is the horizontal lift of \(\xi\) at
\(U\), then
\[
    \xi_{U'}^\uparrow
    =
    \xi_U^\uparrow R.
\]
It follows that
\[
\begin{aligned}
    \left\langle
        \xi_{U'}^\uparrow,
        \zeta_{U'}^\uparrow
    \right\rangle
    &=
    \left\langle
        \xi_U^\uparrow R,
        \zeta_U^\uparrow R
    \right\rangle
    \\
    &=
    \left\langle
        \xi_U^\uparrow,
        \zeta_U^\uparrow
    \right\rangle.
\end{aligned}
\]
Thus \(g\) is independent of the representative.

Positive definiteness follows from
\[
    g_Q(\xi,\xi)
    =
    \|\xi_U^\uparrow\|_F^2,
\]
which vanishes only when \(\xi=0\).

The horizontal projection depends smoothly on \(U\). Indeed,
\(S=U^\top U\) remains positive definite on \(\mathcal{F}_{d,r}\),
and the inverse of the Sylvester operator in
\cref{eq:sylvester-horizontal-projection} varies smoothly with
\(S\). Hence the horizontal distribution and the induced metric are
smooth.

Finally, for \(\Delta,\Gamma\in\mathcal{H}_U\),
\[
    g_{UU^\top}
    \bigl(
        D\pi_U[\Delta],
        D\pi_U[\Gamma]
    \bigr)
    =
    \langle\Delta,\Gamma\rangle.
\]
Thus \(\pi\) is a Riemannian submersion.
\end{proof}

\begin{proposition}[Comparison of quotient and predictor norms]
\label{prop:quotient-predictor-norm-comparison}
Let
\(
Q=UU^\top\in\mathcal{M}_{r}^{+}
\),
let
\(
\xi\in T_Q\mathcal{M}_{r}^{+}
\),
and let
\(
\Delta=\xi_U^\uparrow
\)
be its horizontal lift. Then
\begin{equation}
    2\lambda_r(Q)\|\Delta\|_F^2
    \le
    \|\xi\|_F^2
    \le
    4\lambda_1(Q)\|\Delta\|_F^2.
    \label{eq:quotient-predictor-norm-comparison}
\end{equation}
Equivalently,
\begin{equation}
    2\lambda_r(Q)g_Q(\xi,\xi)
    \le
    \|\xi\|_F^2
    \le
    4\lambda_1(Q)g_Q(\xi,\xi).
    \label{eq:metric-predictor-norm-comparison}
\end{equation}
\end{proposition}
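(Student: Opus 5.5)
We plan to compute $\|\xi\|_F^2$ directly in terms of the horizontal lift and then bound each term spectrally. Write $\xi=\Delta U^\top+U\Delta^\top$ with $\Delta\in\mathcal{H}_U$, so that $U^\top\Delta$ is symmetric by \cref{prop:horizontal-projection}. Expanding the Frobenius norm gives
\[
\|\xi\|_F^2
=
2\operatorname{tr}\bigl(\Delta^\top\Delta\,U^\top U\bigr)
+
2\operatorname{tr}\bigl(U^\top\Delta\,U^\top\Delta\bigr),
\]
where the second term arises from $\operatorname{tr}(\Delta U^\top\Delta U^\top)=\operatorname{tr}\bigl((U^\top\Delta)^2\bigr)$. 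Horizontality is used at exactly this point. Since $U^\top\Delta$ is symmetric, $\operatorname{tr}\bigl((U^\top\Delta)^2\bigr)=\|U^\top\Delta\|_F^2\ge0$. This cross term is nonnegative, whereas for a general (non-horizontal) lift it could be negative.

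For the lower bound, we drop the nonnegative cross term. We then use $\operatorname{tr}(\Delta^\top\Delta S)\ge\lambda_{\min}(S)\|\Delta\|_F^2$ with $S=U^\top U$. The nonzero eigenvalues of $U^\top U$ and $UU^\top=Q$ coincide, so $\lambda_{\min}(S)=\lambda_r(Q)$. This gives $\|\xi\|_F^2\ge2\lambda_r(Q)\|\Delta\|_F^2$.

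For the upper bound, we bound $\operatorname{tr}(\Delta^\top\Delta S)\le\lambda_1(S)\|\Delta\|_F^2=\lambda_1(Q)\|\Delta\|_F^2$. We also bound $\|U^\top\Delta\|_F^2\le\|U\|_2^2\|\Delta\|_F^2=\lambda_1(Q)\|\Delta\|_F^2$. Summing yields $\|\xi\|_F^2\le(2+2)\lambda_1(Q)\|\Delta\|_F^2$. The equivalent form \cref{eq:metric-predictor-norm-comparison} is then immediate from \cref{def:quotient-metric}, since $g_Q(\xi,\xi)=\|\xi_U^\uparrow\|_F^2$.

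The only delicate step is handling the cross term with the correct sign. We must identify $\operatorname{tr}(\Delta U^\top\Delta U^\top)$ as the squared Frobenius norm of the symmetric matrix $U^\top\Delta$, rather than bounding it crudely in absolute value. A crude bound would destroy the lower bound. We will also check that the constants are sharp: the value $2\lambda_r$ is attained by a $\Delta$ with $U^\top\Delta=0$, and the value $4\lambda_1$ by $\Delta$ proportional to the top eigen-direction of $U$. This confirms that no better constants should be sought.
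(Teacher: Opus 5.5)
Your proposal is correct and follows the paper's proof: the same expansion $\|\xi\|_F^2=2\operatorname{tr}(\Delta^\top\Delta S)+2\operatorname{tr}\bigl((\Delta^\top U)^2\bigr)$, horizontality to identify the cross term as $\|\Delta^\top U\|_F^2\ge0$, dropping it for the lower bound, and bounding both terms by $\lambda_1(S)\|\Delta\|_F^2$ for the upper bound. One caveat on your optional sharpness remark, which the proposition does not need: a nonzero $\Delta$ with $U^\top\Delta=0$ exists only when $d>r$, and for $r=d$ the optimal lower constant is in fact $4\lambda_r(Q)$, not $2\lambda_r(Q)$.
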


\begin{proof}
Set
\(
S:=U^\top U
\).
The nonzero eigenvalues of \(Q=UU^\top\) coincide with the
eigenvalues of \(S\). Since
\(
\xi=\Delta U^\top+U\Delta^\top
\),
\begin{equation}
\begin{aligned}
    \|\xi\|_F^2
    &=
    \|\Delta U^\top\|_F^2
    +
    \|U\Delta^\top\|_F^2
    +
    2
    \langle
        \Delta U^\top,
        U\Delta^\top
    \rangle
    \\
    &=
    2\operatorname{tr}(\Delta^\top\Delta S)
    +
    2\operatorname{tr}\bigl((\Delta^\top U)^2\bigr).
    \label{eq:tangent-norm-expansion}
\end{aligned}
\end{equation}
Because \(\Delta\) is horizontal, \(\Delta^\top U\) is symmetric.
Therefore,
\[
    \operatorname{tr}\bigl((\Delta^\top U)^2\bigr)
    =
    \|\Delta^\top U\|_F^2
    \ge0.
\]
Consequently,
\[
    \|\xi\|_F^2
    \ge
    2\operatorname{tr}(\Delta^\top\Delta S)
    \ge
    2\lambda_r(S)\|\Delta\|_F^2.
\]

For the upper bound,
\[
    \operatorname{tr}(\Delta^\top\Delta S)
    \le
    \lambda_1(S)\|\Delta\|_F^2
\]
and
\[
    \|\Delta^\top U\|_F^2
    \le
    \|U\|_{\mathrm{op}}^2\|\Delta\|_F^2
    =
    \lambda_1(S)\|\Delta\|_F^2.
\]
Substitution into \cref{eq:tangent-norm-expansion} yields
\[
    \|\xi\|_F^2
    \le
    4\lambda_1(S)\|\Delta\|_F^2.
\]
Since
\(
\lambda_j(S)=\lambda_j(Q)
\)
for \(1\le j\le r\), the result follows.
\end{proof}

\subsection{Procrustes quotient distance}
\label{subsec:procrustes-distance}

For \([U],[V]\in\mathcal{F}_{d,r}/O(r)\), define
\begin{equation}
    d_{\mathrm{P}}([U],[V])
    :=
    \min_{R\in O(r)}
    \|U-VR\|_F.
    \label{eq:procrustes-distance}
\end{equation}

\begin{proposition}[Procrustes distance and optimal alignment]
\label{prop:procrustes-distance}
The quantity in \cref{eq:procrustes-distance} is a metric on
\(\mathcal{F}_{d,r}/O(r)\). Moreover,
\begin{equation}
    d_{\mathrm{P}}([U],[V])^2
    =
    \|U\|_F^2
    +
    \|V\|_F^2
    -
    2\|U^\top V\|_*,
    \label{eq:procrustes-distance-formula}
\end{equation}
where \(\|\cdot\|_*\) denotes the nuclear norm.

There exists an optimal \(R_\star\in O(r)\) such that, with
\[
    \widetilde V:=VR_\star,
\]
the matrix \(\widetilde V^\top U\) is symmetric positive
semidefinite. Consequently,
\begin{equation}
    \widetilde V^\top(U-\widetilde V)
    \quad\text{is symmetric}.
    \label{eq:procrustes-horizontal-displacement}
\end{equation}
If \(V^\top U\) is nonsingular, the optimal alignment is unique.
\end{proposition}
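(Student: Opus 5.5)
The plan is to begin with the closed-form expression \cref{eq:procrustes-distance-formula}, since the metric axioms and the alignment statements both follow from it. Expanding,
\[
\|U-VR\|_F^2=\|U\|_F^2+\|V\|_F^2-2\operatorname{tr}(R^\top V^\top U),
\]
so minimizing over \(R\in O(r)\) amounts to maximizing \(\operatorname{tr}(R^\top M)\) with \(M:=V^\top U\). Taking an SVD \(M=P\Sigma W^\top\), von Neumann's trace inequality, or the direct estimate \(\operatorname{tr}(R^\top P\Sigma W^\top)=\sum_j\sigma_j(W^\top R^\top P)_{jj}\le\sum_j\sigma_j\), shows that the maximum equals \(\|M\|_*\). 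The maximum is attained at \(R_\star=PW^\top\), and this gives \cref{eq:procrustes-distance-formula}. With this choice,
\[
\widetilde V^\top U=R_\star^\top M=WP^\top P\Sigma W^\top=W\Sigma W^\top\succeq0,
\]
so \(\widetilde V^\top U\) is symmetric positive semidefinite. Then \(\widetilde V^\top(U-\widetilde V)=\widetilde V^\top U-\widetilde V^\top\widetilde V\) is a difference of symmetric matrices, which proves \cref{eq:procrustes-horizontal-displacement}.

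For uniqueness, I will argue that any maximizer \(R\) makes \(R^\top M\) symmetric positive semidefinite. The first-order condition over \(O(r)\) (perturb \(R\mapsto Re^{t\Omega}\) with \(\Omega\) skew) forces \(R^\top M\) to be symmetric. Second-order optimality, or the equality case of the trace inequality, then forces it to be positive semidefinite. So \(R^\top M=(M^\top M)^{1/2}\), the unique positive semidefinite square root. If \(M\) is nonsingular this square root is invertible, and \(R=M(M^\top M)^{-1/2}\) is determined uniquely, namely as the orthogonal polar factor. I expect this step to need the most care, because the equality case of the trace inequality must be stated precisely. The cleanest route is to use the polar decomposition directly: \(\operatorname{tr}(R^\top M)=\operatorname{tr}(R^\top O|M|)\) with \(O\) orthogonal and \(|M|\succ0\). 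The difference \(\operatorname{tr}(|M|)-\operatorname{tr}(R^\top O|M|)=\operatorname{tr}((I-R^\top O)|M|)\) vanishes only when \(R^\top O=I\). This holds because for an orthogonal \(T\) we have \(\operatorname{tr}((I-T)|M|)=\tfrac12\operatorname{tr}((I-T)(I-T)^\top|M|)\), which is strictly positive unless \(T=I\) when \(|M|\succ0\).

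Finally, I will verify the metric axioms on \(\mathcal{F}_{d,r}/O(r)\). The quantity is well defined on classes because \(\|UR_1-VR_2R\|_F=\|U-VR_2RR_1^\top\|_F\) and \(O(r)\) is a group, and the minimum exists by compactness of \(O(r)\). Symmetry follows from \(\|U-VR\|_F=\|V-UR^\top\|_F\). For the triangle inequality, choose optimal \(R_1,R_2\) with \(d_{\mathrm P}([U],[V])=\|U-VR_1\|_F\) and \(d_{\mathrm P}([V],[W])=\|V-WR_2\|_F\). Then
\[
\|U-WR_2R_1\|_F\le\|U-VR_1\|_F+\|VR_1-WR_2R_1\|_F,
\]
and the last term equals \(\|V-WR_2\|_F\). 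Definiteness holds because \(d_{\mathrm P}=0\) gives \(U=VR\) for the attained minimizer, hence \([U]=[V]\).
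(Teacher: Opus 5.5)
Your proposal is correct and follows essentially the same route as the paper: you expand the objective into a trace maximization over $O(r)$, use an SVD to get the nuclear norm and an aligning rotation that makes $\widetilde V^\top U$ symmetric positive semidefinite, and prove the triangle inequality by composing rotations, while your polar-decomposition strictness argument just fills in the uniqueness of the orthogonal polar factor, which the paper cites without proof. One small caution: the aside that second-order optimality alone forces $R^\top M\succeq0$ is not right, since it only yields $\lambda_i+\lambda_j\ge0$ for $i\neq j$ (e.g.\ $M=\operatorname{diag}(2,-1)$ with $R=I_2$ is a local maximum on $O(2)$), but the polar argument you actually rely on does not need it.
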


\begin{proof}
Compactness of \(O(r)\) guarantees the existence of a minimizer.
Expanding the objective gives
\begin{equation}
    \|U-VR\|_F^2
    =
    \|U\|_F^2
    +
    \|V\|_F^2
    -
    2\operatorname{tr}(U^\top VR).
    \label{eq:procrustes-expansion}
\end{equation}
Let
\[
    U^\top V=P\Sigma Q^\top
\]
be a singular-value decomposition. Then
\[
    \max_{R\in O(r)}
    \operatorname{tr}(U^\top VR)
    =
    \operatorname{tr}(\Sigma)
    =
    \|U^\top V\|_*,
\]
and one maximizer is
\[
    R_\star=QP^\top.
\]
This proves \cref{eq:procrustes-distance-formula}. For this choice,
\[
    \widetilde V^\top U
    =
    R_\star^\top V^\top U
    =
    P\Sigma P^\top,
\]
which is symmetric positive semidefinite. Since
\(\widetilde V^\top\widetilde V\) is symmetric,
\cref{eq:procrustes-horizontal-displacement} follows.

The action of \(O(r)\) on \(\mathcal{F}_{d,r}\) is isometric.
Symmetry and definiteness of \(d_{\mathrm P}\) follow immediately.
For the triangle inequality, let \(R_1,R_2\in O(r)\). Then
\[
\begin{aligned}
    d_{\mathrm P}([U],[W])
    &\le
    \|U-WR_2R_1\|_F
    \\
    &\le
    \|U-VR_1\|_F
    +
    \|VR_1-WR_2R_1\|_F
    \\
    &=
    \|U-VR_1\|_F
    +
    \|V-WR_2\|_F.
\end{aligned}
\]
Taking the infimum over \(R_1\) and \(R_2\) proves the triangle
inequality.

If \(V^\top U\) is nonsingular, its orthogonal polar factor is unique,
and hence the Procrustes minimizer is unique.
\end{proof}

\begin{remark}
\label{rem:procrustes-versus-geodesic}
The Procrustes distance in \cref{eq:procrustes-distance} is the
orbit-space metric induced by the ambient Frobenius norm. It is not
identified here with the geodesic distance associated with the
Riemannian metric \(g\).
\end{remark}

\subsection{Exact projection of Euclidean gradient flow}
\label{subsec:quotient-gradient-flow}

Define the measurement operator
\begin{equation}
    \mathcal{A}:
    \mathbb{S}^d
    \longrightarrow
    \mathbb{R}^n,
    \qquad
    [\mathcal{A}(Q)]_i
    :=
    \langle A_i,Q\rangle,
    \label{eq:measurement-operator}
\end{equation}
and its adjoint
\begin{equation}
    \mathcal{A}^*(z)
    :=
    \sum_{i=1}^{n}z_iA_i.
    \label{eq:measurement-adjoint}
\end{equation}
Then
\[
    \ell(Q)
    =
    \frac{1}{2n}
    \|\mathcal{A}(Q)-y\|_2^2,
\]
and its Euclidean gradient in \(\mathbb{S}^d\) is
\begin{equation}
    G(Q)
    :=
    \nabla\ell(Q)
    =
    \frac1n
    \mathcal{A}^*
    \bigl(
        \mathcal{A}(Q)-y
    \bigr)
    \in\mathbb{S}^d.
    \label{eq:predictor-gradient}
\end{equation}

\begin{lemma}[Factor gradient]
\label{lem:factor-gradient}
For every \(U\in\mathbb{R}^{d\times r}\),
\begin{equation}
    \nabla L(U)
    =
    2G(UU^\top)U.
    \label{eq:factor-gradient}
\end{equation}
\end{lemma}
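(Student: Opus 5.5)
The plan is to compute the directional derivative of \(L\) by the chain rule and then identify the gradient through the Frobenius inner product. No rank assumption on \(U\) is needed, since \(\pi\) and \(\ell\) are polynomial maps on all of \(\mathbb{R}^{d\times r}\) and \(\mathbb{S}^d\).

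\textbf{Step 1 (chain rule).} Fix \(U\in\mathbb{R}^{d\times r}\) and a direction \(\Delta\in\mathbb{R}^{d\times r}\). Since \(\ell\) is smooth on \(\mathbb{S}^d\) with Euclidean gradient \(G(Q)\) given in \cref{eq:predictor-gradient}, the chain rule and the differential \cref{eq:differential-pi} give
\[
    DL(U)[\Delta]
    =
    \bigl\langle G(UU^\top),\,\Delta U^\top+U\Delta^\top\bigr\rangle .
\]
The formula \cref{eq:differential-pi} is a purely algebraic expansion, \((U+t\Delta)(U+t\Delta)^\top=UU^\top+t(\Delta U^\top+U\Delta^\top)+t^2\Delta\Delta^\top\), so it holds for every \(U\), not only on \(\mathcal{F}_{d,r}\). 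One can also verify the display directly: the residual \(\mathcal{A}(UU^\top)-y\) is perturbed at first order by \(t\,\mathcal{A}(\Delta U^\top+U\Delta^\top)\), and pairing with the residual produces exactly \(\frac1n\langle\mathcal{A}^*(\mathcal{A}(Q)-y),\,\Delta U^\top+U\Delta^\top\rangle\).

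\textbf{Step 2 (use symmetry of \(G\)).} Write \(G=G(UU^\top)\). Because \(G\in\mathbb{S}^d\) and the trace is cyclic,
\[
    \langle G,U\Delta^\top\rangle
    =
    \operatorname{tr}(GU\Delta^\top)
    =
    \operatorname{tr}(\Delta^\top GU)
    =
    \langle GU,\Delta\rangle ,
\]
and, by transposition,
\[
    \langle G,\Delta U^\top\rangle
    =
    \langle G^\top,U\Delta^\top\rangle
    =
    \langle GU,\Delta\rangle .
\]
Therefore \(DL(U)[\Delta]=\langle 2GU,\Delta\rangle\) for every \(\Delta\).

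\textbf{Step 3 (identify the gradient).} The gradient is by definition the Riesz representer of \(DL(U)\) with respect to the Frobenius inner product on \(\mathbb{R}^{d\times r}\). Step 2 exhibits \(2G(UU^\top)U\) as that representer, which is \cref{eq:factor-gradient}.

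There is no real obstacle in this argument. The only point requiring care is Step 2, where the symmetry of \(G(Q)\) is used to merge the two terms into \(2GU\); without symmetry the factor of two would be replaced by \((G+G^\top)U\). Symmetry holds because each \(A_i\in\mathbb{S}^d\), so \(\mathcal{A}^*\) maps \(\mathbb{R}^n\) into \(\mathbb{S}^d\).
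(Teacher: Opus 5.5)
Your proof is correct and follows the paper's argument: apply the chain rule with $D\pi_U[\Delta]=\Delta U^\top+U\Delta^\top$, then use the symmetry of $G(UU^\top)$ to merge the two terms into $2\langle G(UU^\top)U,\Delta\rangle$. Your remarks that the differential formula holds for every $U$, not just full-rank ones, and that symmetry of $G$ comes from $A_i\in\mathbb{S}^d$, are correct and make explicit what the paper leaves implicit.
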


\begin{proof}
For \(\Delta\in\mathbb{R}^{d\times r}\),
\[
    D\pi_U[\Delta]
    =
    \Delta U^\top+U\Delta^\top.
\]
Therefore,
\[
\begin{aligned}
    DL(U)[\Delta]
    &=
    \left\langle
        G(UU^\top),
        \Delta U^\top+U\Delta^\top
    \right\rangle
    \\
    &=
    2
    \left\langle
        G(UU^\top)U,
        \Delta
    \right\rangle,
\end{aligned}
\]
where symmetry of \(G(UU^\top)\) was used.
\end{proof}

For \(U\in\mathcal{F}_{d,r}\), the gradient is horizontal because
\begin{equation}
    U^\top\nabla L(U)
    =
    2U^\top G(UU^\top)U
    \label{eq:factor-gradient-horizontal}
\end{equation}
is symmetric.

\begin{theorem}[Exact quotient gradient flow]
\label{thm:exact-quotient-gradient-flow}
Let \(U_0\in\mathcal{F}_{d,r}\). The Euclidean gradient-flow equation
\begin{equation}
    \dot U(t)
    =
    -\nabla L(U(t))
    =
    -2G(Q(t))U(t),
    \qquad
    Q(t):=U(t)U(t)^\top,
    \label{eq:factor-gradient-flow}
\end{equation}
has a unique solution for every \(t\ge0\). Moreover:
\begin{enumerate}
    \item
    \(U(t)\in\mathcal{F}_{d,r}\) for every finite \(t\ge0\);

    \item
    \(\dot U(t)\in\mathcal{H}_{U(t)}\);

    \item
    the predictor satisfies
    \begin{equation}
        \dot Q(t)
        =
        -2
        \bigl(
            G(Q(t))Q(t)+Q(t)G(Q(t))
        \bigr);
        \label{eq:predictor-gradient-flow}
    \end{equation}

    \item
    the Riemannian gradient of
    \(\ell|_{\mathcal{M}_{r}^{+}}\) is
    \begin{equation}
        \operatorname{grad}_g\ell(Q)
        =
        2\bigl(G(Q)Q+QG(Q)\bigr),
        \label{eq:quotient-riemannian-gradient}
    \end{equation}
    and hence
    \begin{equation}
        \dot Q(t)
        =
        -\operatorname{grad}_g\ell(Q(t)).
        \label{eq:quotient-gradient-flow}
    \end{equation}
\end{enumerate}
\end{theorem}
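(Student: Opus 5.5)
The proof splits into two parts: the easy items (2)–(4), and global existence together with preservation of full rank, which is the real content of item (1).

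\emph{Local existence and uniqueness.} The vector field $U\mapsto -2G(UU^\top)U$ is polynomial in $U$, since $G$ is affine in $Q$ by \cref{eq:predictor-gradient}. It is therefore locally Lipschitz, and Picard--Lindel\"of gives a unique maximal solution on some interval $[0,T^*)$.

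\emph{Global existence.} Along the flow
\[
\tfrac{d}{dt}L(U(t))=-\|\nabla L(U(t))\|_F^2\le 0,
\]
so $L(U(t))\le L(U_0)$. This alone does not bound $U$, because $\mathcal A$ may have a kernel. Instead we estimate the norm directly:
\[
\tfrac{d}{dt}\|U\|_F^2=-4\langle G(Q)U,U\rangle=-4\langle G(Q),Q\rangle .
\]
Writing $G(Q)=\frac1n\mathcal A^*(\mathcal A Q-y)$, we get
\[
\langle G(Q),Q\rangle=\tfrac1n\langle \mathcal A Q-y,\mathcal A Q\rangle\ge -\tfrac1n\|\mathcal A Q-y\|\,\|y\|,
\]
which is bounded below using the loss bound. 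Hence $\|U\|_F^2$ grows at most linearly, no blow-up occurs in finite time, and $T^*=\infty$.

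\emph{Full rank, item (1).} This is the main obstacle. I would treat $U$ as the solution of the linear, non-autonomous ODE $\dot U=A(t)U$ with $A(t)=-2G(Q(t))$, where $Q(t)$ is the known continuous trajectory. Then $U(t)=\Phi(t)U_0$, with $\Phi$ the fundamental matrix of $\dot\Phi=A\Phi$, $\Phi(0)=I_d$. Liouville's formula gives
\[
\det\Phi(t)=\exp\!\Bigl(\int_0^t\operatorname{tr}A\Bigr)\neq0,
\]
so $\Phi(t)$ is invertible and $\operatorname{rank}U(t)=\operatorname{rank}U_0=r$ for every finite $t$. Equivalently, one can note that $S=U^\top U$ satisfies $\dot S=-4U^\top GU$, so $\det S$ obeys a linear ODE $\frac{d}{dt}\det S=-4\operatorname{tr}(S^{-1}U^\top GU)\det S$ and cannot reach zero. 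The Liouville argument is cleaner, so I will use it.

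\emph{Item (2).} This is immediate from \cref{eq:factor-gradient-horizontal}, since $U^\top\dot U=-2U^\top GU$ is symmetric.

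\emph{Item (3).} Differentiate $Q=UU^\top$:
\[
\dot Q=\dot UU^\top+U\dot U^\top=-2GQ-2QG .
\]

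\emph{Item (4).} Use the Riemannian submersion of \cref{prop:quotient-metric-well-defined}. For a submersion, the Riemannian gradient of $\ell$ at $Q$ has horizontal lift equal to the Euclidean gradient of $L=\ell\circ\pi$, provided that gradient is horizontal. To verify this, take $\xi\in T_Q\mathcal M_r^+$ with horizontal lift $\Delta$. Then
\[
g_Q\bigl(D\pi_U[\nabla L],\xi\bigr)=\langle\nabla L,\Delta\rangle=DL(U)[\Delta]=D\ell(Q)[\xi].
\]
Here the first equality uses $\nabla L\in\mathcal H_U$ and \cref{def:quotient-metric}, and the last uses $D\pi_U[\Delta]=\xi$. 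Therefore
\[
\operatorname{grad}_g\ell(Q)=D\pi_U[\nabla L(U)]=2(GQ+QG),
\]
which is \cref{eq:quotient-riemannian-gradient}. Combining this with item (3) yields \cref{eq:quotient-gradient-flow}.
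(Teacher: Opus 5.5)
Your proposal is correct. For items (1)--(4) it follows the paper's proof: the same fundamental-matrix/Liouville argument for rank preservation, horizontality from the symmetry of $U^\top G(Q)U$, direct differentiation of $Q=UU^\top$, and identification of $\operatorname{grad}_g\ell$ through the horizontal lift and the definition of $g$.

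The one genuine difference is how you rule out finite-time blow-up.

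\textbf{The paper's route.} It uses only the dissipation identity. Integrating $\frac{d}{dt}L(U(t))=-\|\dot U(t)\|_F^2$ gives $\int_0^{T}\|\dot U\|_F^2\,dt\le L(U_0)$. Cauchy--Schwarz then yields $\|U(t)-U(s)\|_F\le\sqrt{t-s}\,\bigl(\int_s^t\|\dot U\|_F^2\bigr)^{1/2}$. Hence $U(t)$ is Cauchy as $t\uparrow T_{\max}$, and the solution could be continued if $T_{\max}$ were finite.

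\textbf{On your remark.} You say the energy bound ``alone does not bound $U$''. That is true only of the sublevel-set bound $L(U(t))\le L(U_0)$. The time-integrated dissipation does control displacement on finite intervals, for any nonnegative smooth loss, without using the least-squares structure.

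\textbf{Your route.} You exploit that structure directly: $\frac{d}{dt}\|U\|_F^2=-4\langle G(Q),Q\rangle\le\frac{4}{n}\|y\|_2\sqrt{2nL(U_0)}$. This is valid and gives an explicit linear-in-time bound on $\|U(t)\|_F^2$, but it is tied to the quadratic form of $\ell$. Either argument proves the theorem as stated.

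\textbf{On your $\det S$ aside.} To make it rigorous you would need the coefficient $\operatorname{tr}(S^{-1}U^\top G U)$ to stay bounded as $S$ approaches singularity. It does, because it equals $\operatorname{tr}(G\,U(U^\top U)^{-1}U^\top)$, whose modulus is at most $r\|G(Q)\|_{\mathrm{op}}$. Since you use Liouville's formula instead, this does not affect your proof.
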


\begin{proof}
The vector field
\[
    U\longmapsto-2G(UU^\top)U
\]
is smooth on \(\mathbb{R}^{d\times r}\), so there is a unique maximal
solution on an interval \([0,T_{\max})\).

Along this solution,
\begin{equation}
    \frac{d}{dt}L(U(t))
    =
    -\|\nabla L(U(t))\|_F^2.
    \label{eq:factor-flow-energy-identity}
\end{equation}
Since \(L\ge0\),
\begin{equation}
    \int_0^T
    \|\dot U(t)\|_F^2\,dt
    \le
    L(U_0)
    \qquad
    \text{for every }T<T_{\max}.
    \label{eq:finite-flow-energy}
\end{equation}
If \(T_{\max}<\infty\), then, for \(0\le s<t<T_{\max}\),
\[
\begin{aligned}
    \|U(t)-U(s)\|_F
    &\le
    \int_s^t\|\dot U(\tau)\|_F\,d\tau
    \\
    &\le
    \sqrt{t-s}
    \left(
        \int_s^t
        \|\dot U(\tau)\|_F^2\,d\tau
    \right)^{1/2}.
\end{aligned}
\]
The right-hand side tends to zero as \(s,t\uparrow T_{\max}\).
Thus \(U(t)\) converges to a finite limit as
\(t\uparrow T_{\max}\). Smoothness of the vector field then permits
continuation beyond \(T_{\max}\), a contradiction. Hence
\(T_{\max}=\infty\).

To prove rank preservation, define
\[
    B(t):=-2G(Q(t)).
\]
Then
\(
\dot U(t)=B(t)U(t)
\).
Let \(P(t)\) solve
\begin{equation}
    \dot P(t)=B(t)P(t),
    \qquad
    P(0)=I_d.
    \label{eq:fundamental-matrix}
\end{equation}
By uniqueness,
\begin{equation}
    U(t)=P(t)U_0.
    \label{eq:factor-flow-fundamental-solution}
\end{equation}
Liouville's formula gives
\begin{equation}
    \det P(t)
    =
    \exp
    \left(
        \int_0^t
        \operatorname{tr}B(s)\,ds
    \right)
    \neq0
    \label{eq:fundamental-matrix-determinant}
\end{equation}
for every finite \(t\). Therefore \(P(t)\) is invertible and
\[
    \operatorname{rank}(U(t))
    =
    \operatorname{rank}(U_0)
    =
    r.
\]

By \cref{eq:factor-gradient-horizontal},
\(\nabla L(U(t))\) is horizontal, so
\(\dot U(t)\in\mathcal{H}_{U(t)}\).

Differentiating \(Q(t)=U(t)U(t)^\top\) and using
\cref{eq:factor-gradient-flow} yields
\[
\begin{aligned}
    \dot Q(t)
    &=
    \dot U(t)U(t)^\top
    +
    U(t)\dot U(t)^\top
    \\
    &=
    -2G(Q(t))Q(t)
    -
    2Q(t)G(Q(t)),
\end{aligned}
\]
which proves \cref{eq:predictor-gradient-flow}.

Finally, let
\(
\xi\in T_Q\mathcal{M}_{r}^{+}
\)
and let
\(
\Delta=\xi_U^\uparrow
\)
be its horizontal lift. Then
\[
\begin{aligned}
    D\ell(Q)[\xi]
    &=
    \left\langle
        G(Q),
        \Delta U^\top+U\Delta^\top
    \right\rangle
    \\
    &=
    2\langle G(Q)U,\Delta\rangle.
\end{aligned}
\]
The vector \(2G(Q)U\) is horizontal. Therefore,
\[
    D\ell(Q)[\xi]
    =
    g_Q
    \left(
        D\pi_U[2G(Q)U],
        \xi
    \right).
\]
By the defining property of the Riemannian gradient,
\[
\begin{aligned}
    \operatorname{grad}_g\ell(Q)
    &=
    D\pi_U[2G(Q)U]
    \\
    &=
    2\bigl(G(Q)Q+QG(Q)\bigr),
\end{aligned}
\]
which proves
\cref{eq:quotient-riemannian-gradient,eq:quotient-gradient-flow}.
\end{proof}

As an immediate consequence of
\cref{thm:exact-quotient-gradient-flow},
\begin{equation}
    \frac{d}{dt}\ell(Q(t))
    =
    -
    g_{Q(t)}
    \left(
        \operatorname{grad}_g\ell(Q(t)),
        \operatorname{grad}_g\ell(Q(t))
    \right).
    \label{eq:quotient-energy-identity}
\end{equation}

\subsection{Exact predictor recursion for parameter gradient descent}
\label{subsec:discrete-predictor-dynamics}

Consider ordinary Euclidean gradient descent:
\begin{equation}
    U_{k+1}
    =
    U_k-\eta\nabla L(U_k)
    =
    U_k-2\eta G(Q_k)U_k,
    \qquad
    Q_k:=U_kU_k^\top.
    \label{eq:factor-gradient-descent}
\end{equation}
The iteration is equivariant under the orthogonal action. If
\(
\widetilde U_k=U_kR
\)
for some \(R\in O(r)\), then
\(
\widetilde Q_k=Q_k
\)
and
\(
\widetilde U_{k+1}=U_{k+1}R
\).
Hence \cref{eq:factor-gradient-descent} induces a well-defined map on
the quotient.

\begin{proposition}[Exact discrete predictor dynamics]
\label{prop:exact-discrete-predictor-dynamics}
The predictor sequence generated by
\cref{eq:factor-gradient-descent} satisfies
\begin{equation}
    Q_{k+1}
    =
    \bigl(I_d-2\eta G(Q_k)\bigr)
    Q_k
    \bigl(I_d-2\eta G(Q_k)\bigr),
    \label{eq:exact-discrete-congruence}
\end{equation}
or equivalently,
\begin{equation}
\begin{aligned}
    Q_{k+1}
    &=
    Q_k
    -
    2\eta
    \bigl(
        G(Q_k)Q_k+Q_kG(Q_k)
    \bigr)
    \\
    &\quad
    +
    4\eta^2
    G(Q_k)Q_kG(Q_k).
    \label{eq:exact-discrete-expanded}
\end{aligned}
\end{equation}
Consequently:
\begin{enumerate}
    \item
    \(Q_{k+1}\succeq0\) whenever \(Q_k\succeq0\);

    \item
    if \(Q_k\in\mathcal{M}_{r}^{+}\), then
    \(\operatorname{rank}(Q_{k+1})=r\) if and only if
    \begin{equation}
        \ker
        \bigl(
            I_d-2\eta G(Q_k)
        \bigr)
        \cap
        \operatorname{range}(Q_k)
        =
        \{0\};
        \label{eq:exact-rank-preservation-condition}
    \end{equation}

    \item
    the sufficient condition
    \begin{equation}
        2\eta\|G(Q_k)\|_{\mathrm{op}}<1
        \label{eq:sufficient-rank-preservation}
    \end{equation}
    guarantees \(Q_{k+1}\in\mathcal{M}_{r}^{+}\);

    \item
    whenever \(Q_k\in\mathcal{M}_{r}^{+}\),
    \begin{equation}
        \frac{Q_{k+1}-Q_k}{\eta}
        =
        -\operatorname{grad}_g\ell(Q_k)
        +
        4\eta G(Q_k)Q_kG(Q_k);
        \label{eq:discrete-gradient-plus-correction}
    \end{equation}

    \item
    the correction term satisfies
    \begin{equation}
        \left\|
            4\eta G(Q_k)Q_kG(Q_k)
        \right\|_F
        \le
        4\eta
        \|G(Q_k)\|_{\mathrm{op}}^2
        \|Q_k\|_F.
        \label{eq:discrete-correction-bound}
    \end{equation}
\end{enumerate}
\end{proposition}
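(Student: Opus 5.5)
The plan is to derive everything from the factor update. Set $M_k:=I_d-2\eta G(Q_k)$, which is symmetric because $G(Q_k)\in\mathbb{S}^d$ by \cref{eq:predictor-gradient}. Then \cref{eq:factor-gradient-descent} reads $U_{k+1}=M_kU_k$, and hence
\[
Q_{k+1}=U_{k+1}U_{k+1}^\top=M_kU_kU_k^\top M_k^\top=M_kQ_kM_k .
\]
This is \cref{eq:exact-discrete-congruence}. Expanding the product and grouping terms by powers of $\eta$ gives \cref{eq:exact-discrete-expanded}. Both identities hold for every $U_k\in\mathbb{R}^{d\times r}$; full column rank is not needed for them.

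Items 1--3 follow from this congruence form. For item 1, take $v\in\mathbb{R}^d$; then $v^\top Q_{k+1}v=(M_kv)^\top Q_k(M_kv)\ge0$.

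For item 2, I would use the factor form rather than work with $Q_{k+1}$ directly. By the identity $\operatorname{range}(WW^\top)=\operatorname{range}(W)$, used in the proof of \cref{thm:exact-factor-fibers}, we have $\operatorname{rank}(Q_{k+1})=\operatorname{rank}(M_kU_k)$. If $Q_k\in\mathcal{M}_r^+$, we may take $U_k$ of full column rank with $\operatorname{range}(U_k)=\operatorname{range}(Q_k)$, since every representative has this range. Rank--nullity for $M_k$ restricted to the $r$-dimensional subspace $\operatorname{range}(U_k)$ then gives
\[
\operatorname{rank}(M_kU_k)=r-\dim\bigl(\ker M_k\cap\operatorname{range}(Q_k)\bigr).
\]
This is exactly the equivalence \cref{eq:exact-rank-preservation-condition}. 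One point needs care: this range is independent of the representative $U_k$, so the criterion is well defined on the quotient.

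For item 3, suppose $2\eta\|G(Q_k)\|_{\mathrm{op}}<1$. Every eigenvalue of $M_k$ is at least $1-2\eta\|G(Q_k)\|_{\mathrm{op}}>0$, so $M_k$ is invertible and $\ker M_k=\{0\}$. Item 2 then gives rank $r$, and item 1 gives positive semidefiniteness, so $Q_{k+1}\in\mathcal{M}_r^+$.

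For item 4, subtract $Q_k$ in \cref{eq:exact-discrete-expanded} and divide by $\eta$. On $\mathcal{M}_r^+$, \cref{eq:quotient-riemannian-gradient} of \cref{thm:exact-quotient-gradient-flow} identifies $2(G(Q_k)Q_k+Q_kG(Q_k))$ with $\operatorname{grad}_g\ell(Q_k)$. This restriction to $\mathcal{M}_r^+$ is the only reason the item carries that hypothesis.

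For item 5, apply the submultiplicative bound $\|AXB\|_F\le\|A\|_{\mathrm{op}}\|X\|_F\|B\|_{\mathrm{op}}$ with $A=B=G(Q_k)$ and $X=Q_k$.

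There is no real obstacle. The only step needing some thought is the exact rank characterization in item 2, and it is settled cleanly by passing to $M_kU_k$ and applying rank--nullity on $\operatorname{range}(U_k)$.
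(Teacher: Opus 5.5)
Your proposal is correct and follows the paper's proof essentially step for step: the congruence $Q_{k+1}=M_kQ_kM_k$ from $U_{k+1}=M_kU_k$ with symmetric $M_k$; the rank criterion via $\operatorname{rank}(Q_{k+1})=\operatorname{rank}(M_kU_k)$ and injectivity of $M_k$ on $\operatorname{range}(U_k)=\operatorname{range}(Q_k)$; invertibility of $M_k$ under the step-size condition; the quotient-gradient formula for item 4; and submultiplicativity for item 5. Your rank--nullity formula $\operatorname{rank}(M_kU_k)=r-\dim\bigl(\ker M_k\cap\operatorname{range}(Q_k)\bigr)$ is just a slightly more quantitative version of the paper's injectivity statement.
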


\begin{proof}
Set
\[
    M_k:=I_d-2\eta G(Q_k).
\]
Then
\[
    U_{k+1}=M_kU_k.
\]
Because \(G(Q_k)\) is symmetric, \(M_k^\top=M_k\), and therefore
\[
    Q_{k+1}
    =
    M_kU_kU_k^\top M_k
    =
    M_kQ_kM_k.
\]
This proves \cref{eq:exact-discrete-congruence}; expanding the product
gives \cref{eq:exact-discrete-expanded}.

\cref{eq:exact-discrete-congruence} is a congruence
transformation, so \(Q_{k+1}\succeq0\) whenever \(Q_k\succeq0\).

Moreover,
\[
    \operatorname{rank}(Q_{k+1})
    =
    \operatorname{rank}(M_kU_k).
\]
Since
\(
\operatorname{range}(Q_k)=\operatorname{range}(U_k)
\),
the rank remains equal to \(r\) precisely when \(M_k\) is injective
on \(\operatorname{range}(Q_k)\), which is equivalent to
\cref{eq:exact-rank-preservation-condition}.

Under \cref{eq:sufficient-rank-preservation}, every eigenvalue of
\(M_k\) belongs to \((0,2)\). Hence \(M_k\) is invertible, which
implies rank preservation.

Using \cref{eq:quotient-riemannian-gradient} in
\cref{eq:exact-discrete-expanded} gives
\cref{eq:discrete-gradient-plus-correction}.

Finally,
\[
\begin{aligned}
    \|G(Q_k)Q_kG(Q_k)\|_F
    &\le
    \|G(Q_k)\|_{\mathrm{op}}
    \|Q_kG(Q_k)\|_F
    \\
    &\le
    \|G(Q_k)\|_{\mathrm{op}}^2
    \|Q_k\|_F,
\end{aligned}
\]
which proves \cref{eq:discrete-correction-bound}.
\end{proof}

\begin{remark}[Continuous and discrete quotient dynamics]
\label{rem:continuous-discrete-quotient}
By \cref{thm:exact-quotient-gradient-flow}, Euclidean parameter
gradient flow induces exactly the Riemannian gradient flow on
\((\mathcal{M}_{r}^{+},g)\). In contrast,
\cref{eq:discrete-gradient-plus-correction} shows that finite-step
parameter gradient descent is not, in general, identical to a
Riemannian gradient-descent step. Its first-order term is the
quotient-gradient direction, while
\[
    4\eta^2G(Q_k)Q_kG(Q_k)
\]
is the exact factorization-induced discretization correction.
\end{remark}

\section{Effective Curvature and Recovery under Gaussian Rank-One Measurements}
\label{sec:gaussian-recovery}

We now specialize the PSD-factorized model to Gaussian rank-one
measurements. Let
\begin{equation}
x_1,\ldots,x_n
\overset{\mathrm{iid}}{\sim}
N(0,I_d),
\label{eq:gaussian-design}
\end{equation}
and let
\begin{equation}
Q_\star
=
U_\star U_\star^\top
\in\mathcal{M}_{r}^{+},
\qquad
U_\star\in\mathcal{F}_{d,r},
\label{eq:true-psd-predictor}
\end{equation}
be the target predictor. The responses are
\begin{equation}
y_i
=
x_i^\top Q_\star x_i,
\qquad
1\le i\le n.
\label{eq:quadratic-observations}
\end{equation}

The empirical predictor loss is
\begin{equation}
\ell_n(Q)
:=
\frac{1}{2n}
\sum_{i=1}^{n}
\bigl(
x_i^\top(Q-Q_\star)x_i
\bigr)^2,
\qquad
Q\in\mathbb{S}^d,
\label{eq:empirical-predictor-loss}
\end{equation}
and the corresponding factorized loss is
\begin{equation}
L_n(U)
:=
\ell_n(UU^\top).
\label{eq:empirical-factor-loss}
\end{equation}

For notational convenience, throughout this section we write
\begin{equation}
d_{\mathrm P}(U,V)
:=
d_{\mathrm P}([U],[V]),
\label{eq:procrustes-distance-shorthand}
\end{equation}
where $d_{\mathrm P}$ is the quotient distance characterized in
\cref{prop:procrustes-distance}.

We first identify the quotient Hessian at $Q_\star$. We then prove
a uniform full-space deviation bound for the empirical
quadratic-measurement operator, construct a moment-based spectral
initializer, and establish local exponential convergence of ordinary
Euclidean factor gradient flow and local linear convergence of factor
gradient descent. The resulting statistical guarantee is deliberately
conservative: the empirical operator is controlled on all of
$\mathbb{S}^d$, rather than only on the tangent space or on a
low-rank secant set.

\subsection{Empirical normal operator and effective Hessian}
\label{subsec:effective-hessian}

Define the measurement operator
\begin{equation}
\mathcal{A}_n:
\mathbb{S}^d
\longrightarrow
\mathbb{R}^n,
\qquad
[\mathcal{A}_n(H)]_i
:=
x_i^\top Hx_i,
\label{eq:gaussian-measurement-operator}
\end{equation}
and the empirical normal operator
\begin{equation}
\mathcal{T}_n:
\mathbb{S}^d
\longrightarrow
\mathbb{S}^d,
\qquad
\mathcal{T}_n(H)
:=
\frac1n
\sum_{i=1}^{n}
(x_i^\top Hx_i)x_ix_i^\top.
\label{eq:empirical-normal-operator}
\end{equation}
For $H,K\in\mathbb{S}^d$,
\begin{equation}
\langle H,\mathcal{T}_n(K)\rangle
=
\frac1n
\sum_{i=1}^{n}
(x_i^\top Hx_i)(x_i^\top Kx_i).
\label{eq:empirical-normal-bilinear-form}
\end{equation}
Thus $\mathcal{T}_n$ is self-adjoint and positive semidefinite with
respect to the Frobenius inner product.

Writing
\begin{equation}
E(Q)
:=
Q-Q_\star,
\label{eq:predictor-error}
\end{equation}
we have
\begin{equation}
\ell_n(Q)
=
\frac12
\left\langle
E(Q),
\mathcal{T}_n(E(Q))
\right\rangle,
\label{eq:loss-normal-operator-form}
\end{equation}
\begin{equation}
\nabla\ell_n(Q)
=
\mathcal{T}_n(E(Q)),
\label{eq:predictor-gradient-normal-operator}
\end{equation}
and
\begin{equation}
\nabla L_n(U)
=
2\mathcal{T}_n(UU^\top-Q_\star)U.
\label{eq:factor-gradient-gaussian}
\end{equation}

At $Q_\star$, the ambient differential of $\ell_n$ vanishes.
Consequently, the Hessian bilinear form of the restriction of
$\ell_n$ to $\mathcal{M}_{r}^{+}$ is independent of the choice of
connection at that point and equals the restriction of the ambient
second differential.

\begin{theorem}[Effective Hessian at the interpolating solution]
\label{thm:effective-hessian}
Let
$\xi,\zeta\in T_{Q_\star}\mathcal{M}_{r}^{+}$.
Then
\begin{equation}
\operatorname{Hess}_g\ell_n(Q_\star)[\xi,\zeta]
=
\langle\xi,\mathcal{T}_n(\zeta)\rangle
=
\frac1n
\sum_{i=1}^{n}
(x_i^\top\xi x_i)
(x_i^\top\zeta x_i).
\label{eq:effective-hessian-bilinear}
\end{equation}
In particular,
\begin{equation}
\operatorname{Hess}_g\ell_n(Q_\star)[\xi,\xi]
=
\frac1n
\|\mathcal{A}_n(\xi)\|_2^2,
\label{eq:effective-hessian-quadratic}
\end{equation}
and the nullspace of the associated self-adjoint Hessian operator is
\begin{equation}
\ker
\operatorname{Hess}_g\ell_n(Q_\star)
=
T_{Q_\star}\mathcal{M}_{r}^{+}
\cap
\ker\mathcal{A}_n.
\label{eq:effective-hessian-kernel}
\end{equation}
\end{theorem}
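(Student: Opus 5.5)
The plan is to compute the Hessian along curves in $\mathcal{M}_r^+$ through $Q_\star$ and exploit the vanishing ambient gradient there, as the paper notes just before the statement. Since $\ell_n$ is the restriction of the ambient quadratic $Q\mapsto\frac12\langle Q-Q_\star,\mathcal{T}_n(Q-Q_\star)\rangle$ on $\mathbb{S}^d$, I will use the standard fact that at a critical point of a smooth function on a Riemannian manifold the Riemannian Hessian bilinear form equals the second derivative along any smooth curve. Concretely, for a curve $\gamma$ in $\mathcal{M}_r^+$ with $\gamma(0)=Q_\star$ and $\dot\gamma(0)=\xi$, one has
\[
\frac{d^2}{dt^2}\ell_n(\gamma(t))\Big|_{t=0}
=
\langle\dot\gamma(0),\mathcal{T}_n(\dot\gamma(0))\rangle
+
\langle\nabla\ell_n(Q_\star),\ddot\gamma(0)\rangle,
\]
and the second term vanishes because $\nabla\ell_n(Q_\star)=\mathcal{T}_n(0)=0$ by \cref{eq:predictor-gradient-normal-operator}. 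Taking $\gamma$ to be a $g$-geodesic, for which $\nabla_{\dot\gamma}\dot\gamma=0$, identifies this with $\operatorname{Hess}_g\ell_n(Q_\star)[\xi,\xi]$. The connection-dependent terms drop out entirely because $d\ell_n(Q_\star)=0$. This gives \cref{eq:effective-hessian-quadratic} via \cref{eq:empirical-normal-bilinear-form}.

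For the bilinear version \cref{eq:effective-hessian-bilinear}, I will polarize. Both sides are symmetric bilinear forms on $T_{Q_\star}\mathcal{M}_r^+$: the Riemannian Hessian is symmetric for the Levi-Civita connection of $g$, and $\mathcal{T}_n$ is self-adjoint. Two symmetric bilinear forms that agree on the diagonal agree everywhere.

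For the kernel, I define the Hessian operator $\mathcal{H}$ on $T_{Q_\star}\mathcal{M}_r^+$ by $g_{Q_\star}(\mathcal{H}\xi,\zeta)=\langle\xi,\mathcal{T}_n(\zeta)\rangle$. Since $g$ is nondegenerate, $\mathcal{H}\xi=0$ if and only if $\langle\xi,\mathcal{T}_n\zeta\rangle=0$ for all tangent $\zeta$. If $\mathcal{A}_n(\xi)=0$, then $\mathcal{T}_n(\xi)=\frac1n\mathcal{A}_n^*\mathcal{A}_n(\xi)=0$, so $\xi$ is in the kernel. Conversely, if $\xi$ is in the kernel, taking $\zeta=\xi$ gives $\frac1n\|\mathcal{A}_n(\xi)\|_2^2=0$. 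So positive semidefiniteness of the form does the work.

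The only delicate point is justifying connection-independence rigorously, that is, that the Riemannian Hessian at a critical point equals the second derivative along a geodesic or any curve. I would cite the general identity $\operatorname{Hess}f(p)[\xi,\xi]=(f\circ\gamma)''(0)-df_p(\nabla_{\dot\gamma}\dot\gamma(0))$, which applies since $(\mathcal{M}_r^+,g)$ is a smooth Riemannian manifold by \cref{prop:quotient-metric-well-defined}. As a check, I could lift to factors: $\ell_n(\pi(U_\star+t\Delta))$ with $\Delta$ horizontal has second derivative $\frac1n\|\mathcal{A}_n(\Delta U_\star^\top+U_\star\Delta^\top)\|^2$ plus a term proportional to the gradient, which vanishes.
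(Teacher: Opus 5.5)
Your proposal is correct and follows essentially the same route as the paper: the vanishing of $D\ell_n(Q_\star)$ removes the connection-dependent term, so the Riemannian Hessian reduces to the ambient second differential $\langle\xi,\mathcal{T}_n(\zeta)\rangle$ on $T_{Q_\star}\mathcal{M}_r^+$. The kernel identification then follows from positive semidefiniteness of this form. The differences are cosmetic: you evaluate along a geodesic and then polarize, and you prove the two kernel inclusions explicitly. The paper instead works directly with the bilinear identity $\operatorname{Hess}_g\ell_n(Q_\star)[\xi,\zeta]=X(Y\ell_n)(Q_\star)-D\ell_n(Q_\star)[\nabla^g_XY]$ for vector-field extensions, and simply notes that a positive semidefinite form's zero set equals its nullspace.
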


\begin{proof}
The ambient loss in \cref{eq:empirical-predictor-loss} is quadratic,
and its first and second differentials are
\[
D\ell_n(Q)[H]
=
\left\langle
\mathcal{T}_n(Q-Q_\star),
H
\right\rangle
\]
and
\[
D^2\ell_n(Q)[H,K]
=
\langle H,\mathcal{T}_n(K)\rangle.
\]
In particular,
\[
D\ell_n(Q_\star)=0.
\]

Let $\nabla^g$ be the Levi--Civita connection of the quotient metric
$g$. For smooth tangent-vector extensions $X$ and $Y$ satisfying
$X(Q_\star)=\xi$ and $Y(Q_\star)=\zeta$,
\[
\begin{aligned}
\operatorname{Hess}_g\ell_n(Q_\star)[\xi,\zeta]
&=
X(Y\ell_n)(Q_\star)
-
D\ell_n(Q_\star)
\bigl[
(\nabla^g_XY)(Q_\star)
\bigr]
\\
&=
X(Y\ell_n)(Q_\star).
\end{aligned}
\]
Because $D\ell_n(Q_\star)=0$, this value is precisely the intrinsic
second differential of the restriction of $\ell_n$ to
$\mathcal{M}_{r}^{+}$, and hence
\[
\operatorname{Hess}_g\ell_n(Q_\star)[\xi,\zeta]
=
D^2\ell_n(Q_\star)[\xi,\zeta]
=
\langle\xi,\mathcal{T}_n(\zeta)\rangle.
\]
Using \cref{eq:empirical-normal-bilinear-form} proves
\cref{eq:effective-hessian-bilinear}, and taking
$\zeta=\xi$ gives \cref{eq:effective-hessian-quadratic}.

The quadratic form in \cref{eq:effective-hessian-quadratic} vanishes
if and only if
\[
x_i^\top\xi x_i=0
\qquad
\text{for all }1\le i\le n,
\]
which is equivalent to
$\mathcal{A}_n(\xi)=0$.
Since the Hessian form is positive semidefinite, the zero set of its
quadratic form equals the nullspace of its associated self-adjoint
operator. Restricting to
$T_{Q_\star}\mathcal{M}_{r}^{+}$
proves \cref{eq:effective-hessian-kernel}.
\end{proof}

The effective eigenvalues are defined relative to the quotient
metric:
\begin{equation}
\lambda_{\min}^{\mathrm{eff}}(Q_\star)
:=
\inf_{
\xi\in
T_{Q_\star}\mathcal{M}_{r}^{+}
\setminus\{0\}
}
\frac{
\langle\xi,\mathcal{T}_n(\xi)\rangle
}{
g_{Q_\star}(\xi,\xi)
},
\label{eq:min-effective-eigenvalue}
\end{equation}
and
\begin{equation}
\lambda_{\max}^{\mathrm{eff}}(Q_\star)
:=
\sup_{
\xi\in
T_{Q_\star}\mathcal{M}_{r}^{+}
\setminus\{0\}
}
\frac{
\langle\xi,\mathcal{T}_n(\xi)\rangle
}{
g_{Q_\star}(\xi,\xi)
}.
\label{eq:max-effective-eigenvalue}
\end{equation}
These are generalized eigenvalues: the numerator measures
prediction-space curvature, whereas the denominator is the minimum
factor-space energy required to realize the tangent perturbation.

\subsection{Population curvature}
\label{subsec:population-curvature}

Let $x\sim N(0,I_d)$, and define
\begin{equation}
\mathcal{T}(H)
:=
\mathbb{E}
\bigl[
(x^\top Hx)xx^\top
\bigr].
\label{eq:population-normal-operator}
\end{equation}

\begin{lemma}[Gaussian population operator]
\label{lem:gaussian-population-operator}
For every $H\in\mathbb{S}^d$,
\begin{equation}
\mathcal{T}(H)
=
2H+\operatorname{tr}(H)I_d.
\label{eq:population-operator-formula}
\end{equation}
Consequently,
\begin{equation}
\langle H,\mathcal{T}(H)\rangle
=
2\|H\|_F^2+\operatorname{tr}(H)^2,
\label{eq:population-quadratic-form}
\end{equation}
and
\begin{equation}
2\|H\|_F^2
\le
\langle H,\mathcal{T}(H)\rangle
\le
(d+2)\|H\|_F^2.
\label{eq:population-coercivity-smoothness}
\end{equation}
\end{lemma}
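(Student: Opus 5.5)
The plan is to compute $\mathcal{T}(H)$ entrywise using Gaussian fourth moments, and then read off the quadratic form and the two-sided bounds.

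\textbf{Step 1 (reduction to a diagonal $H$).} Both sides of \cref{eq:population-operator-formula} are equivariant under orthogonal conjugation. Indeed, for $O\in O(d)$ we have $Ox\sim N(0,I_d)$, hence $\mathcal{T}(OHO^\top)=O\,\mathcal{T}(H)\,O^\top$. The right-hand side transforms the same way because the trace is conjugation invariant. It therefore suffices to treat $H=\operatorname{diag}(h_1,\ldots,h_d)$.

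\textbf{Step 2 (diagonal computation).} For diagonal $H$,
\[
\mathcal{T}(H)_{jk}=\sum_{i}h_i\,\mathbb{E}[x_i^2x_jx_k].
\]
If $j\neq k$, every term contains an odd power of $x_j$ or of $x_k$, so the off-diagonal entries vanish. If $j=k$, we use $\mathbb{E}x_j^4=3$ and $\mathbb{E}x_i^2x_j^2=1$ for $i\neq j$. This gives
\[
\mathcal{T}(H)_{jj}=3h_j+\sum_{i\neq j}h_i=2h_j+\operatorname{tr}(H),
\]
which proves \cref{eq:population-operator-formula}. (Equivalently, one could apply Isserlis' theorem directly to $\mathbb{E}[x_ax_bx_cx_e]$ and avoid the diagonalization.)

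\textbf{Step 3 (quadratic form).} Pairing with $H$ gives
\[
\langle H,\mathcal{T}(H)\rangle=2\|H\|_F^2+\operatorname{tr}(H)\langle H,I_d\rangle=2\|H\|_F^2+\operatorname{tr}(H)^2,
\]
which is \cref{eq:population-quadratic-form}.

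\textbf{Step 4 (bounds).} The lower bound in \cref{eq:population-coercivity-smoothness} follows from $\operatorname{tr}(H)^2\ge0$. For the upper bound, apply Cauchy--Schwarz to the eigenvalues of $H$:
\[
\operatorname{tr}(H)^2=\langle H,I_d\rangle^2\le\|I_d\|_F^2\,\|H\|_F^2=d\,\|H\|_F^2.
\]
This yields $\langle H,\mathcal{T}(H)\rangle\le(d+2)\|H\|_F^2$. The lower constant $2$ is attained by traceless $H$, and the upper constant $d+2$ is attained at $H=I_d$.

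The only real obstacle is keeping the fourth-moment bookkeeping correct, in particular $\mathbb{E}x_j^4=3$ versus $\mathbb{E}x_i^2x_j^2=1$. The orthogonal reduction in Step 1 reduces this to a short case check.
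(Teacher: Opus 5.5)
Your argument is correct and is essentially the paper's: the paper applies Isserlis' identity $\mathbb{E}[x_ax_bx_jx_k]=\delta_{ab}\delta_{jk}+\delta_{aj}\delta_{bk}+\delta_{ak}\delta_{bj}$ entrywise to a general symmetric $H$, whereas you first reduce to diagonal $H$ by orthogonal equivariance, which only simplifies the same fourth-moment computation, and the quadratic-form and $\operatorname{tr}(H)^2\le d\|H\|_F^2$ steps are handled identically. One small caveat on your optional sharpness remark: the lower constant $2$ is attained by a nonzero traceless $H$ only when $d\ge 2$, since for $d=1$ the form equals $3h^2=(d+2)h^2$.
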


\begin{proof}
For indices $a,b,j,k$, Isserlis' identity gives
\[
\mathbb{E}[x_ax_bx_jx_k]
=
\delta_{ab}\delta_{jk}
+
\delta_{aj}\delta_{bk}
+
\delta_{ak}\delta_{bj}.
\]
Therefore,
\[
\begin{aligned}
[\mathcal{T}(H)]_{ab}
&=
\sum_{j,k=1}^{d}
H_{jk}
\mathbb{E}[x_ax_bx_jx_k]
\\
&=
\delta_{ab}\operatorname{tr}(H)
+
H_{ab}
+
H_{ba}.
\end{aligned}
\]
Since $H$ is symmetric,
\[
\mathcal{T}(H)
=
2H+\operatorname{tr}(H)I_d.
\]
Taking the Frobenius inner product with $H$ proves
\cref{eq:population-quadratic-form}. Finally,
\[
\operatorname{tr}(H)^2
\le
d\|H\|_F^2
\]
gives \cref{eq:population-coercivity-smoothness}.
\end{proof}

Define the population effective eigenvalues by replacing
$\mathcal{T}_n$ with $\mathcal{T}$ in
\cref{eq:min-effective-eigenvalue,eq:max-effective-eigenvalue}.
Combining
\cref{lem:gaussian-population-operator,prop:quotient-predictor-norm-comparison}
gives
\begin{equation}
\lambda_{\min}^{\mathrm{pop}}(Q_\star)
\ge
4\lambda_r(Q_\star),
\label{eq:population-effective-lower}
\end{equation}
and
\begin{equation}
\lambda_{\max}^{\mathrm{pop}}(Q_\star)
\le
4(d+2)\lambda_1(Q_\star).
\label{eq:population-effective-upper}
\end{equation}
Indeed, for every
$\xi\in T_{Q_\star}\mathcal{M}_{r}^{+}$,
\[
\begin{aligned}
\langle\xi,\mathcal{T}(\xi)\rangle
&\ge
2\|\xi\|_F^2
\\
&\ge
4\lambda_r(Q_\star)
g_{Q_\star}(\xi,\xi),
\end{aligned}
\]
and the upper bound follows analogously.

\subsection{Full-space concentration of the empirical operator}
\label{subsec:full-space-concentration}

For a linear operator
$\mathcal{B}:\mathbb{S}^d\to\mathbb{S}^d$,
define
\begin{equation}
\|\mathcal{B}\|_{\mathrm{op}}
:=
\sup_{\|H\|_F=1}
\|\mathcal{B}(H)\|_F.
\label{eq:operator-norm-on-symmetric-matrices}
\end{equation}
Set
\begin{equation}
\kappa_d
:=
d(d+2)(d+4)(d+6).
\label{eq:kappa-d-definition}
\end{equation}

\begin{theorem}[Full-space empirical-operator deviation bound]
\label{thm:full-space-operator-concentration}
For every $\varepsilon>0$,
\begin{equation}
\mathbb{P}
\left(
\|\mathcal{T}_n-\mathcal{T}\|_{\mathrm{op}}
>
\varepsilon
\right)
\le
\frac{\kappa_d}{n\varepsilon^2}.
\label{eq:operator-concentration-probability}
\end{equation}
Consequently, for every $0<\delta<1$, the condition
\begin{equation}
n
\ge
\frac{\kappa_d}{\delta\varepsilon^2}
\label{eq:operator-concentration-sample-size}
\end{equation}
implies
\begin{equation}
\|\mathcal{T}_n-\mathcal{T}\|_{\mathrm{op}}
\le
\varepsilon
\label{eq:operator-concentration-event}
\end{equation}
with probability at least $1-\delta$.
\end{theorem}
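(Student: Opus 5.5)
The approach is a second-moment (Chebyshev-type) argument that bounds the operator norm of $\mathcal{T}_n-\mathcal{T}$ by its Hilbert--Schmidt norm and then computes the expectation of the latter exactly. Write $\mathcal{T}_n-\mathcal{T}=\frac1n\sum_{i=1}^n \mathcal{Z}_i$, where
\[
\mathcal{Z}_i(H):=(x_i^\top Hx_i)x_ix_i^\top-\mathcal{T}(H).
\]
The $\mathcal{Z}_i$ are i.i.d.\ self-adjoint random operators on $\mathbb{S}^d$. By \cref{lem:gaussian-population-operator}, each has mean zero. Since $\|\mathcal{B}\|_{\mathrm{op}}\le\|\mathcal{B}\|_{\mathrm{HS}}$ for any linear operator on the finite-dimensional Euclidean space $(\mathbb{S}^d,\langle\cdot,\cdot\rangle)$, Markov's inequality applied to $\|\mathcal{T}_n-\mathcal{T}\|_{\mathrm{HS}}^2$ gives
\[
\mathbb{P}\bigl(\|\mathcal{T}_n-\mathcal{T}\|_{\mathrm{op}}>\varepsilon\bigr)\le \frac{\mathbb{E}\|\mathcal{T}_n-\mathcal{T}\|_{\mathrm{HS}}^2}{\varepsilon^2}.
\]
Independence and zero mean make the cross terms vanish, so
\[
\mathbb{E}\|\mathcal{T}_n-\mathcal{T}\|_{\mathrm{HS}}^2=\frac1n\,\mathbb{E}\|\mathcal{Z}_1\|_{\mathrm{HS}}^2\le\frac1n\,\mathbb{E}\|\mathcal{X}\|_{\mathrm{HS}}^2,
\]
where $\mathcal{X}(H):=(x^\top Hx)xx^\top$ for a single $x\sim N(0,I_d)$. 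The inequality is variance $\le$ second moment.

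The key computation is $\mathbb{E}\|\mathcal{X}\|_{\mathrm{HS}}^2$. The operator $\mathcal{X}$ is rank one: it equals $H\mapsto\langle xx^\top,H\rangle\,xx^\top$, that is, $(xx^\top)\otimes(xx^\top)$ on $\mathbb{S}^d$. Its Hilbert--Schmidt norm is therefore
\[
\|\mathcal{X}\|_{\mathrm{HS}}=\|xx^\top\|_F^2=\|x\|_2^4,
\]
so $\|\mathcal{X}\|_{\mathrm{HS}}^2=\|x\|_2^8$. Since $\|x\|_2^2\sim\chi^2_d$, the fourth moment of a chi-square variable gives
\[
\mathbb{E}\|x\|_2^8=d(d+2)(d+4)(d+6)=\kappa_d.
\]
This yields \cref{eq:operator-concentration-probability} exactly.

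The consequence is immediate. If $n\ge\kappa_d/(\delta\varepsilon^2)$, the failure probability is at most $\delta$, which gives \cref{eq:operator-concentration-event} with probability at least $1-\delta$.

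The only step needing care is the Hilbert--Schmidt identification for the rank-one operator. One must confirm that the operator norm in \cref{eq:operator-norm-on-symmetric-matrices} is dominated by the Hilbert--Schmidt norm taken with respect to a Frobenius-orthonormal basis of $\mathbb{S}^d$. One must also check that $xx^\top\in\mathbb{S}^d$, so that the norm of $\mathcal{X}$ is exactly $\|xx^\top\|_F^2$ with no symmetrization factor. Everything else is routine, and the resulting bound is conservative by design, consistent with the paper's remarks.
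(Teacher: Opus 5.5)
Your proposal is correct and follows the paper's proof essentially step for step: Hilbert--Schmidt domination of the operator norm, the variance-at-most-second-moment bound for the i.i.d.\ centered average, the rank-one identity $\|(xx^\top)\otimes(xx^\top)\|_{\mathrm{HS}}^2=\|x\|_2^8$, the chi-square moment $\mathbb{E}\|x\|_2^8=\kappa_d$, and Markov's inequality. One minor point: the centering $\mathbb{E}\mathcal{Z}_i=0$ holds by the definition $\mathcal{T}=\mathbb{E}[(x^\top Hx)xx^\top]$ itself, so you do not need \cref{lem:gaussian-population-operator} for that step.
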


\begin{proof}
Let
\[
X_i:=x_ix_i^\top\in\mathbb{S}^d,
\]
and define the rank-one operator
\[
\mathscr{X}_i
:=
X_i\otimes X_i,
\qquad
\mathscr{X}_i(H)
:=
\langle X_i,H\rangle X_i.
\]
Then
\[
\mathcal{T}_n
=
\frac1n
\sum_{i=1}^{n}\mathscr{X}_i,
\qquad
\mathcal{T}
=
\mathbb{E}\mathscr{X}_1.
\]

Let $\|\cdot\|_{\mathrm{HS}}$ denote the Hilbert--Schmidt norm on
the finite-dimensional Hilbert space of linear operators on
$\mathbb{S}^d$. Independence and centering give
\begin{equation}
\begin{aligned}
\mathbb{E}
\|\mathcal{T}_n-\mathcal{T}\|_{\mathrm{HS}}^2
&=
\frac1n
\mathbb{E}
\|\mathscr{X}_1-\mathcal{T}\|_{\mathrm{HS}}^2
\\
&=
\frac1n
\left(
\mathbb{E}\|\mathscr{X}_1\|_{\mathrm{HS}}^2
-
\|\mathcal{T}\|_{\mathrm{HS}}^2
\right)
\\
&\le
\frac1n
\mathbb{E}
\|\mathscr{X}_1\|_{\mathrm{HS}}^2.
\label{eq:hs-variance-bound}
\end{aligned}
\end{equation}
For a rank-one operator $Z\otimes Z$,
\[
\|Z\otimes Z\|_{\mathrm{HS}}^2
=
\|Z\|_F^4.
\]
Since
\[
\|x_1x_1^\top\|_F
=
\|x_1\|_2^2,
\]
we obtain
\[
\|\mathscr{X}_1\|_{\mathrm{HS}}^2
=
\|x_1\|_2^8.
\]
For $x_1\sim N(0,I_d)$,
\begin{equation}
\mathbb{E}\|x_1\|_2^8
=
d(d+2)(d+4)(d+6)
=
\kappa_d.
\label{eq:eighth-gaussian-radius-moment}
\end{equation}
Thus
\begin{equation}
\mathbb{E}
\|\mathcal{T}_n-\mathcal{T}\|_{\mathrm{HS}}^2
\le
\frac{\kappa_d}{n}.
\label{eq:hs-mean-square-bound}
\end{equation}

Because
\[
\|\mathcal{B}\|_{\mathrm{op}}
\le
\|\mathcal{B}\|_{\mathrm{HS}},
\]
Markov's inequality gives
\[
\begin{aligned}
\mathbb{P}
\left(
\|\mathcal{T}_n-\mathcal{T}\|_{\mathrm{op}}
>
\varepsilon
\right)
&\le
\mathbb{P}
\left(
\|\mathcal{T}_n-\mathcal{T}\|_{\mathrm{HS}}^2
>
\varepsilon^2
\right)
\\
&\le
\frac{\kappa_d}{n\varepsilon^2}.
\end{aligned}
\]
\end{proof}

\begin{corollary}[Global empirical coercivity and smoothness]
\label{cor:global-empirical-coercivity}
On the event
\begin{equation}
\|\mathcal{T}_n-\mathcal{T}\|_{\mathrm{op}}
\le
\varepsilon,
\label{eq:generic-concentration-event}
\end{equation}
one has
\begin{equation}
(2-\varepsilon)\|H\|_F^2
\le
\langle H,\mathcal{T}_n(H)\rangle
\le
(d+2+\varepsilon)\|H\|_F^2
\label{eq:empirical-quadratic-form-bounds}
\end{equation}
for every $H\in\mathbb{S}^d$. Moreover,
\begin{equation}
\|\mathcal{T}_n(H)\|_F
\le
(d+2+\varepsilon)\|H\|_F.
\label{eq:empirical-operator-upper-bound}
\end{equation}
In particular, when $0<\varepsilon\le1$,
\begin{equation}
\|H\|_F^2
\le
\langle H,\mathcal{T}_n(H)\rangle
\le
(d+3)\|H\|_F^2.
\label{eq:unit-accuracy-empirical-bounds}
\end{equation}
\end{corollary}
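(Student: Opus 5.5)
The plan is to treat the corollary as a direct perturbation consequence of \cref{lem:gaussian-population-operator} and the operator-norm bound on the event \cref{eq:generic-concentration-event}. No further probabilistic input is needed; everything is deterministic once the event holds. Fix $H\in\mathbb{S}^d$ and decompose
\[
\langle H,\mathcal{T}_n(H)\rangle
=
\langle H,\mathcal{T}(H)\rangle
+
\langle H,(\mathcal{T}_n-\mathcal{T})(H)\rangle .
\]

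First, I will bound the perturbation term by Cauchy--Schwarz in the Frobenius inner product, together with the definition \cref{eq:operator-norm-on-symmetric-matrices}:
\[
\bigl|\langle H,(\mathcal{T}_n-\mathcal{T})(H)\rangle\bigr|
\le
\|H\|_F\,\|(\mathcal{T}_n-\mathcal{T})(H)\|_F
\le
\varepsilon\|H\|_F^2 .
\]
Combining this with the population bounds \cref{eq:population-coercivity-smoothness}, namely $2\|H\|_F^2\le\langle H,\mathcal{T}(H)\rangle\le(d+2)\|H\|_F^2$, gives \cref{eq:empirical-quadratic-form-bounds} immediately.

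Next, for \cref{eq:empirical-operator-upper-bound}, I will use the triangle inequality
\[
\|\mathcal{T}_n(H)\|_F\le\|\mathcal{T}(H)\|_F+\varepsilon\|H\|_F .
\]
It then suffices to show that $\|\mathcal{T}\|_{\mathrm{op}}\le d+2$. Since $\mathcal{T}$ is self-adjoint and positive semidefinite on $\mathbb{S}^d$, its operator norm equals the supremum of its quadratic form over unit $H$, and that supremum is at most $d+2$ by \cref{eq:population-coercivity-smoothness}. One can also see this explicitly. By \cref{eq:population-operator-formula}, $\mathcal{T}=2\,\mathrm{Id}+I_d\otimes I_d$, where $I_d\otimes I_d$ denotes the rank-one map $H\mapsto\operatorname{tr}(H)I_d$. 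Hence $\mathcal{T}$ has eigenvalue $d+2$ in the direction $I_d$ and eigenvalue $2$ on trace-free matrices.

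Finally, \cref{eq:unit-accuracy-empirical-bounds} follows by substituting $\varepsilon\le1$ into \cref{eq:empirical-quadratic-form-bounds}, which gives $2-\varepsilon\ge1$ and $d+2+\varepsilon\le d+3$.

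The only step that needs any care is the operator-norm bound for $\mathcal{T}$, which relies on self-adjointness. This is immediate from the explicit formula \cref{eq:population-operator-formula}, since both $2\,\mathrm{Id}$ and $H\mapsto\operatorname{tr}(H)I_d$ are self-adjoint with respect to the Frobenius inner product. So I do not expect a genuine obstacle.
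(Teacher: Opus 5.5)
Your proof is correct and essentially the paper's. The quadratic-form bounds come from the population bounds in Lemma~\ref{lem:gaussian-population-operator} together with the Cauchy--Schwarz estimate $|\langle H,(\mathcal{T}_n-\mathcal{T})(H)\rangle|\le\varepsilon\|H\|_F^2$, and the case $\varepsilon\le1$ is a direct substitution, exactly as in the paper.

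The one difference is the operator bound. The paper uses that $\mathcal{T}_n$ itself is self-adjoint and positive semidefinite, so $\|\mathcal{T}_n\|_{\mathrm{op}}$ equals the supremum of its quadratic form and is controlled by the upper bound just proved. Your route instead uses the triangle inequality with the explicit value $\|\mathcal{T}\|_{\mathrm{op}}=d+2$. It reaches the same constant and does not need positivity of $\mathcal{T}_n$.
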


\begin{proof}
By \cref{lem:gaussian-population-operator},
\[
2\|H\|_F^2
\le
\langle H,\mathcal{T}(H)\rangle
\le
(d+2)\|H\|_F^2.
\]
Furthermore,
\[
\begin{aligned}
\left|
\left\langle
H,
(\mathcal{T}_n-\mathcal{T})(H)
\right\rangle
\right|
&\le
\|H\|_F
\|(\mathcal{T}_n-\mathcal{T})(H)\|_F
\\
&\le
\varepsilon\|H\|_F^2.
\end{aligned}
\]
This proves \cref{eq:empirical-quadratic-form-bounds}.

Since $\mathcal{T}_n$ is self-adjoint and positive semidefinite,
\[
\|\mathcal{T}_n\|_{\mathrm{op}}
=
\sup_{\|H\|_F=1}
\langle H,\mathcal{T}_n(H)\rangle.
\]
The upper bound in
\cref{eq:empirical-quadratic-form-bounds} therefore gives
\cref{eq:empirical-operator-upper-bound}.
\end{proof}

Combining
\cref{thm:effective-hessian,cor:global-empirical-coercivity,prop:quotient-predictor-norm-comparison}
shows that, on the event
\cref{eq:generic-concentration-event},
\begin{equation}
\lambda_{\min}^{\mathrm{eff}}(Q_\star)
\ge
2(2-\varepsilon)\lambda_r(Q_\star),
\label{eq:empirical-effective-lower}
\end{equation}
and
\begin{equation}
\lambda_{\max}^{\mathrm{eff}}(Q_\star)
\le
4(d+2+\varepsilon)\lambda_1(Q_\star).
\label{eq:empirical-effective-upper}
\end{equation}

\subsection{Deterministic local quotient regularity}
\label{subsec:deterministic-local-regularity}

The effective Hessian describes infinitesimal curvature at
$Q_\star$. Controlling an optimization trajectory requires a
neighborhood-level secant estimate. We first establish such an
estimate for a general positive operator.

Let
\[
\mathscr{T}:
\mathbb{S}^d
\longrightarrow
\mathbb{S}^d
\]
be self-adjoint and positive semidefinite, and suppose that
\begin{equation}
m\|H\|_F^2
\le
\langle H,\mathscr{T}(H)\rangle
\le
M\|H\|_F^2
\qquad
\text{for every }H\in\mathbb{S}^d,
\label{eq:abstract-operator-bounds}
\end{equation}
where
$0<m\le M$.
Consider
\begin{equation}
L_{\mathscr{T}}(U)
:=
\frac12
\left\langle
UU^\top-Q_\star,
\mathscr{T}(UU^\top-Q_\star)
\right\rangle.
\label{eq:abstract-factor-loss}
\end{equation}

Set
\begin{equation}
\sigma_\star
:=
\sigma_r(U_\star)
=
\sqrt{\lambda_r(Q_\star)},
\label{eq:sigma-star}
\end{equation}
\begin{equation}
\beta_\star
:=
\|U_\star\|_{\mathrm{op}}
=
\sqrt{\lambda_1(Q_\star)},
\label{eq:beta-star}
\end{equation}
and
\begin{equation}
\rho_\star
:=
\frac{m\sigma_\star}{4M}.
\label{eq:abstract-basin-radius}
\end{equation}
Since $m\le M$,
\begin{equation}
\rho_\star
\le
\frac{\sigma_\star}{4}.
\label{eq:basin-radius-upper}
\end{equation}

\begin{lemma}[Local comparison of factor and predictor errors]
\label{lem:factor-predictor-local-comparison}
Suppose
\begin{equation}
d_{\mathrm P}(U,U_\star)
\le
\rho_\star.
\label{eq:local-factor-neighborhood}
\end{equation}
Let $R\in O(r)$ be a Procrustes minimizer and define
\begin{equation}
\widetilde U_\star
:=
U_\star R,
\qquad
\Delta
:=
U-\widetilde U_\star,
\qquad
E
:=
UU^\top-Q_\star.
\label{eq:aligned-factor-error}
\end{equation}
Then
$\widetilde U_\star^\top\Delta$
is symmetric, and
\begin{equation}
\sigma_\star\|\Delta\|_F
\le
\|E\|_F
\le
(2\beta_\star+\rho_\star)\|\Delta\|_F.
\label{eq:factor-predictor-error-comparison}
\end{equation}
\end{lemma}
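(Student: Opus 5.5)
The plan is to expand $E$ around the aligned factor $\widetilde U_\star$, split it into a linear term and a quadratic remainder, and bound each using the alignment (horizontality) property and the smallness of $\|\Delta\|_F$.

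First, the symmetry claim follows from \cref{prop:procrustes-distance}, applied with the roles $U\leftarrow U$ and $V\leftarrow U_\star$. The optimal alignment $R$ makes $\widetilde U_\star^\top U$ symmetric positive semidefinite, so by \cref{eq:procrustes-horizontal-displacement} the matrix $\widetilde U_\star^\top\Delta=\widetilde U_\star^\top(U-\widetilde U_\star)$ is symmetric. Thus $\Delta\in\mathcal{H}_{\widetilde U_\star}$. Also, since $R$ is a minimizer, $\|\Delta\|_F=d_{\mathrm P}(U,U_\star)\le\rho_\star$.

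Next, since $\widetilde U_\star\widetilde U_\star^\top=Q_\star$, I write
\[
E=(\widetilde U_\star+\Delta)(\widetilde U_\star+\Delta)^\top-Q_\star
=\xi+\Delta\Delta^\top,
\qquad
\xi:=\Delta\widetilde U_\star^\top+\widetilde U_\star\Delta^\top .
\]
For the upper bound, I use $\|\xi\|_F\le 2\|\widetilde U_\star\|_{\mathrm{op}}\|\Delta\|_F=2\beta_\star\|\Delta\|_F$ and $\|\Delta\Delta^\top\|_F\le\|\Delta\|_F^2\le\rho_\star\|\Delta\|_F$. The triangle inequality then gives $\|E\|_F\le(2\beta_\star+\rho_\star)\|\Delta\|_F$.

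For the lower bound, I reuse the expansion \cref{eq:tangent-norm-expansion} from the proof of \cref{prop:quotient-predictor-norm-comparison}, with $U$ replaced by $\widetilde U_\star$ and $S=\widetilde U_\star^\top\widetilde U_\star$. That expansion only needs $\widetilde U_\star^\top\Delta$ to be symmetric, which we have. It yields
\[
\|\xi\|_F^2=2\operatorname{tr}(\Delta^\top\Delta S)+2\|\Delta^\top\widetilde U_\star\|_F^2\ge 2\sigma_\star^2\|\Delta\|_F^2 ,
\]
because $\lambda_r(S)=\lambda_r(Q_\star)=\sigma_\star^2$. The reverse triangle inequality and \cref{eq:basin-radius-upper} then give
\[
\|E\|_F\ge\|\xi\|_F-\|\Delta\|_F^2\ge\bigl(\sqrt2\,\sigma_\star-\rho_\star\bigr)\|\Delta\|_F\ge\bigl(\sqrt2-\tfrac14\bigr)\sigma_\star\|\Delta\|_F\ge\sigma_\star\|\Delta\|_F .
\]

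The only delicate point is the lower bound, which depends on the horizontality supplied by the Procrustes alignment. Without it, the cross term $\operatorname{tr}((\Delta^\top\widetilde U_\star)^2)$ could be negative, and the estimate $\|\xi\|_F^2\ge 2\sigma_\star^2\|\Delta\|_F^2$ would fail, for example along vertical directions. Once symmetry is secured, the rest is routine, and the basin radius $\rho_\star\le\sigma_\star/4$ leaves ample slack in the constant.
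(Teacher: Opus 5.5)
Your proposal is correct and follows the paper's proof step for step. Both arguments use the same four moves:
\begin{itemize}
\item horizontality of $\Delta$ at $\widetilde U_\star$ from the Procrustes alignment;
\item the expansion $E=\xi+\Delta\Delta^\top$;
\item the norm comparison of \cref{prop:quotient-predictor-norm-comparison} for $\|\xi\|_F\ge\sqrt2\,\sigma_\star\|\Delta\|_F$;
\item the (reverse) triangle inequality with $\|\Delta\|_F\le\rho_\star\le\sigma_\star/4$.
\end{itemize}

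You share one small gloss with the paper. \cref{prop:procrustes-distance} only asserts symmetric positive semidefiniteness for one particular optimal alignment, not for an arbitrary minimizer $R$. The property does hold for every minimizer. Indeed, $\operatorname{tr}(U^\top U_\star R)=\|U^\top U_\star\|_*=\|U^\top U_\star R\|_*$, and equality of trace and nuclear norm forces $U^\top U_\star R\succeq0$.
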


\begin{proof}
By \cref{prop:procrustes-distance},
$\widetilde U_\star^\top U$
is symmetric positive semidefinite. Therefore,
\[
\widetilde U_\star^\top\Delta
=
\widetilde U_\star^\top U
-
\widetilde U_\star^\top\widetilde U_\star
\]
is symmetric. Hence $\Delta$ is horizontal at
$\widetilde U_\star$.

Expanding the predictor error gives
\begin{equation}
E
=
\widetilde U_\star\Delta^\top
+
\Delta\widetilde U_\star^\top
+
\Delta\Delta^\top.
\label{eq:predictor-error-expansion}
\end{equation}
Because $\Delta$ is horizontal at $\widetilde U_\star$,
\cref{prop:quotient-predictor-norm-comparison} gives
\begin{equation}
\left\|
\widetilde U_\star\Delta^\top
+
\Delta\widetilde U_\star^\top
\right\|_F
\ge
\sqrt{2}\,\sigma_\star\|\Delta\|_F.
\label{eq:linearized-predictor-lower}
\end{equation}
Also,
\[
\|\Delta\Delta^\top\|_F
\le
\|\Delta\|_F^2.
\]
Using
$\|\Delta\|_F\le\rho_\star\le\sigma_\star/4$,
we obtain
\[
\begin{aligned}
\|E\|_F
&\ge
\sqrt{2}\,\sigma_\star\|\Delta\|_F
-
\|\Delta\|_F^2
\\
&\ge
\left(
\sqrt{2}-\frac14
\right)
\sigma_\star\|\Delta\|_F
\\
&\ge
\sigma_\star\|\Delta\|_F.
\end{aligned}
\]
For the upper bound,
\[
\begin{aligned}
\|E\|_F
&\le
2\|\widetilde U_\star\|_{\mathrm{op}}
\|\Delta\|_F
+
\|\Delta\|_F^2
\\
&\le
(2\beta_\star+\rho_\star)\|\Delta\|_F.
\end{aligned}
\]
\end{proof}

\begin{theorem}[Local secant regularity]
\label{thm:local-secant-regularity}
Under \cref{eq:abstract-operator-bounds}, suppose
\[
d_{\mathrm P}(U,U_\star)
\le
\rho_\star.
\]
Let $\Delta$ and $E$ be defined by
\cref{eq:aligned-factor-error}. Then
\begin{equation}
\left\langle
\nabla L_{\mathscr{T}}(U),
\Delta
\right\rangle
\ge
\frac m2\|E\|_F^2
\ge
\frac{m\sigma_\star^2}{2}
\|\Delta\|_F^2.
\label{eq:local-secant-lower-bound}
\end{equation}
Moreover,
\begin{equation}
\|\nabla L_{\mathscr{T}}(U)\|_F
\le
L_\star\|\Delta\|_F,
\label{eq:local-gradient-upper-bound}
\end{equation}
where
\begin{equation}
L_\star
:=
2M
(2\beta_\star+\rho_\star)
(\beta_\star+\rho_\star).
\label{eq:local-gradient-constant}
\end{equation}
The loss satisfies
\begin{equation}
\frac m2\|E\|_F^2
\le
L_{\mathscr{T}}(U)
\le
\frac M2\|E\|_F^2.
\label{eq:local-loss-equivalence}
\end{equation}
\end{theorem}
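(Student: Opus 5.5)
The plan is to reduce everything to the algebraic identity that relates the factor gradient, the aligned factor error $\Delta$, and the predictor error $E$. We then control the one nonlinear term using \cref{lem:factor-predictor-local-comparison}. Throughout, the hypothesis $d_{\mathrm P}(U,U_\star)\le\rho_\star$ means $\|\Delta\|_F\le\rho_\star$ for the Procrustes-aligned $\widetilde U_\star$ of \cref{eq:aligned-factor-error}. The lemma then supplies $\sigma_\star\|\Delta\|_F\le\|E\|_F\le(2\beta_\star+\rho_\star)\|\Delta\|_F$.

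\emph{Two consequences of \cref{eq:abstract-operator-bounds}.} First, the loss equivalence \cref{eq:local-loss-equivalence} is immediate, since $L_{\mathscr T}(U)=\tfrac12\langle E,\mathscr T(E)\rangle$. Second, because $\mathscr T$ is self-adjoint and positive semidefinite, $\|\mathscr T\|_{\mathrm{op}}=\sup_{\|H\|_F=1}\langle H,\mathscr T(H)\rangle\le M$, so $\|\mathscr T(E)\|_F\le M\|E\|_F$. This is the same argument as in \cref{cor:global-empirical-coercivity}.

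\emph{Secant inequality.} Exactly as in \cref{lem:factor-gradient}, we have $\nabla L_{\mathscr T}(U)=2\mathscr T(E)U$. Since $\mathscr T(E)$ is symmetric,
\[
\langle\nabla L_{\mathscr T}(U),\Delta\rangle=\langle\mathscr T(E),U\Delta^\top+\Delta U^\top\rangle .
\]
Substituting $U=\widetilde U_\star+\Delta$ and comparing with \cref{eq:predictor-error-expansion} gives
\[
U\Delta^\top+\Delta U^\top=\widetilde U_\star\Delta^\top+\Delta\widetilde U_\star^\top+2\Delta\Delta^\top=E+\Delta\Delta^\top .
\]
Hence
\[
\langle\nabla L_{\mathscr T}(U),\Delta\rangle=\langle E,\mathscr T(E)\rangle+\langle\mathscr T(E),\Delta\Delta^\top\rangle\ge m\|E\|_F^2-M\|E\|_F\|\Delta\|_F^2 .
\]

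\emph{Absorbing the cross term (the main obstacle).} The only delicate point is to absorb the cross term with the right constant, and this is where the choice $\rho_\star=m\sigma_\star/(4M)$ enters. Using $\|\Delta\|_F\le\rho_\star$ together with the lower bound of \cref{lem:factor-predictor-local-comparison},
\[
\|\Delta\|_F^2\le\rho_\star\|\Delta\|_F\le\frac{\rho_\star}{\sigma_\star}\|E\|_F=\frac{m}{4M}\|E\|_F .
\]
Therefore the cross term is at most $\tfrac m4\|E\|_F^2$, and the pairing is at least $\tfrac{3m}{4}\|E\|_F^2\ge\tfrac m2\|E\|_F^2$. Applying $\|E\|_F\ge\sigma_\star\|\Delta\|_F$ once more gives the second inequality in \cref{eq:local-secant-lower-bound}.

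\emph{Gradient upper bound.} We have $\|\nabla L_{\mathscr T}(U)\|_F\le2\|\mathscr T(E)\|_F\|U\|_{\mathrm{op}}\le2M\|E\|_F\|U\|_{\mathrm{op}}$. Moreover $\|U\|_{\mathrm{op}}\le\|\widetilde U_\star\|_{\mathrm{op}}+\|\Delta\|_F\le\beta_\star+\rho_\star$. Inserting the upper bound $\|E\|_F\le(2\beta_\star+\rho_\star)\|\Delta\|_F$ from the lemma yields exactly $L_\star\|\Delta\|_F$ with $L_\star$ as in \cref{eq:local-gradient-constant}.
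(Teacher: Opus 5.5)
Your proposal is correct and follows the paper's proof essentially step by step. It uses the same identity $\Delta U^\top+U\Delta^\top=E+\Delta\Delta^\top$, the same absorption of the cross term via $\|\Delta\|_F^2\le(\rho_\star/\sigma_\star)\|E\|_F=\tfrac{m}{4M}\|E\|_F$ to reach $\tfrac{3m}{4}\|E\|_F^2$, and the same gradient bound through $\|\mathscr T\|_{\mathrm{op}}\le M$ and \cref{lem:factor-predictor-local-comparison}; your explicit check that $\|U\|_{\mathrm{op}}\le\beta_\star+\rho_\star$ fills in a step the paper leaves implicit.
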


\begin{proof}
The gradient of \cref{eq:abstract-factor-loss} is
\begin{equation}
\nabla L_{\mathscr{T}}(U)
=
2\mathscr{T}(E)U.
\label{eq:abstract-factor-gradient}
\end{equation}
Since
$U=\widetilde U_\star+\Delta$,
\begin{equation}
\Delta U^\top+U\Delta^\top
=
E+\Delta\Delta^\top.
\label{eq:secant-expansion}
\end{equation}
Therefore,
\begin{equation}
\begin{aligned}
\left\langle
\nabla L_{\mathscr{T}}(U),
\Delta
\right\rangle
&=
2\langle
\mathscr{T}(E)U,
\Delta
\rangle
\\
&=
\left\langle
\mathscr{T}(E),
\Delta U^\top+U\Delta^\top
\right\rangle
\\
&=
\langle\mathscr{T}(E),E\rangle
+
\left\langle
\mathscr{T}(E),
\Delta\Delta^\top
\right\rangle.
\label{eq:secant-inner-product-expansion}
\end{aligned}
\end{equation}
The first term satisfies
\begin{equation}
\langle\mathscr{T}(E),E\rangle
\ge
m\|E\|_F^2.
\label{eq:main-secant-term}
\end{equation}

Because $\mathscr{T}$ is self-adjoint and positive semidefinite,
\cref{eq:abstract-operator-bounds} implies
\[
\|\mathscr{T}\|_{\mathrm{op}}
=
\sup_{\|H\|_F=1}
\langle H,\mathscr{T}(H)\rangle
\le
M.
\]
Hence
\[
\begin{aligned}
\left|
\left\langle
\mathscr{T}(E),
\Delta\Delta^\top
\right\rangle
\right|
&\le
\|\mathscr{T}(E)\|_F
\|\Delta\Delta^\top\|_F
\\
&\le
M\|E\|_F\|\Delta\|_F^2.
\end{aligned}
\]
By \cref{lem:factor-predictor-local-comparison},
\[
\|\Delta\|_F
\le
\frac{\|E\|_F}{\sigma_\star}.
\]
Since $\|\Delta\|_F\le\rho_\star$,
\[
\|\Delta\|_F^2
\le
\frac{\rho_\star}{\sigma_\star}
\|E\|_F.
\]
Therefore,
\begin{equation}
\left|
\left\langle
\mathscr{T}(E),
\Delta\Delta^\top
\right\rangle
\right|
\le
\frac{M\rho_\star}{\sigma_\star}
\|E\|_F^2
=
\frac m4\|E\|_F^2.
\label{eq:secant-remainder-bound}
\end{equation}
Combining
\cref{eq:secant-inner-product-expansion,eq:main-secant-term,eq:secant-remainder-bound}
gives
\[
\left\langle
\nabla L_{\mathscr{T}}(U),
\Delta
\right\rangle
\ge
\frac{3m}{4}\|E\|_F^2
\ge
\frac m2\|E\|_F^2.
\]
The second inequality in
\cref{eq:local-secant-lower-bound} follows from
\cref{lem:factor-predictor-local-comparison}.

For the gradient bound,
\[
\begin{aligned}
\|\nabla L_{\mathscr{T}}(U)\|_F
&=
2\|\mathscr{T}(E)U\|_F
\\
&\le
2M\|E\|_F\|U\|_{\mathrm{op}}
\\
&\le
2M
(2\beta_\star+\rho_\star)
(\beta_\star+\rho_\star)
\|\Delta\|_F.
\end{aligned}
\]
This proves \cref{eq:local-gradient-upper-bound}.

Finally,
\[
L_{\mathscr{T}}(U)
=
\frac12
\langle E,\mathscr{T}(E)\rangle,
\]
so \cref{eq:local-loss-equivalence} follows directly from
\cref{eq:abstract-operator-bounds}.
\end{proof}

For the empirical operator, on the event
\begin{equation}
\|\mathcal{T}_n-\mathcal{T}\|_{\mathrm{op}}
\le
1,
\label{eq:unit-operator-event}
\end{equation}
\cref{cor:global-empirical-coercivity} permits the choices
\begin{equation}
m=1,
\qquad
M=d+3.
\label{eq:empirical-m-M}
\end{equation}
The corresponding deterministic basin radius is
\begin{equation}
\rho_n
:=
\frac{\sigma_\star}{4(d+3)}.
\label{eq:empirical-basin-radius}
\end{equation}

\subsection{Moment-based spectral initialization}
\label{subsec:spectral-initialization}

Define the empirical response mean
\begin{equation}
\overline y
:=
\frac1n
\sum_{i=1}^{n}y_i
\label{eq:empirical-response-mean}
\end{equation}
and the moment matrix
\begin{equation}
M_n
:=
\frac1{2n}
\sum_{i=1}^{n}
y_i
(x_ix_i^\top-I_d).
\label{eq:moment-matrix}
\end{equation}
By \cref{lem:gaussian-population-operator},
\begin{equation}
\mathbb{E}M_n
=
Q_\star.
\label{eq:moment-matrix-unbiased}
\end{equation}

For $M\in\mathbb{S}^d$, let
$\mathcal{P}_{r,+}(M)$
denote the matrix obtained by retaining the $r$ largest positive
eigenvalues of $M$ whenever $M$ has at least $r$ positive
eigenvalues, and setting all remaining eigenvalues to zero. Define
\begin{equation}
Q_0
:=
\mathcal{P}_{r,+}(M_n),
\label{eq:spectral-initial-predictor}
\end{equation}
and choose any factor
\begin{equation}
U_0\in\mathbb{R}^{d\times r}
\quad\text{such that}\quad
Q_0=U_0U_0^\top.
\label{eq:spectral-initial-factor}
\end{equation}
The event established below guarantees that $M_n$ has at least
$r$ positive eigenvalues, so this definition is unambiguous there.

\begin{lemma}[Response-mean concentration]
\label{lem:response-mean-concentration}
For every $\varepsilon>0$,
\begin{equation}
\mathbb{P}
\left(
\left|
\overline y-\operatorname{tr}(Q_\star)
\right|
>
\varepsilon\|Q_\star\|_F
\right)
\le
\frac{2}{n\varepsilon^2}.
\label{eq:response-mean-concentration}
\end{equation}
\end{lemma}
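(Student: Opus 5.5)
The plan is a Chebyshev argument for the i.i.d. variables $y_i = x_i^\top Q_\star x_i$. First I compute the mean and variance of a single response. By \cref{lem:gaussian-population-operator},
\[
\mathbb{E}[y_1]=\operatorname{tr}(Q_\star),
\]
since $\mathbb{E}[x x^\top]=I_d$. For the variance, \cref{eq:population-quadratic-form} with $H=Q_\star$ gives
\[
\mathbb{E}[y_1^2]=\langle Q_\star,\mathcal{T}(Q_\star)\rangle=2\|Q_\star\|_F^2+\operatorname{tr}(Q_\star)^2 .
\]
Subtracting the squared mean yields
\[
\operatorname{Var}(y_1)=2\|Q_\star\|_F^2 .
\]
As a check, diagonalize $Q_\star=V\Lambda V^\top$; then $y_1=\sum_j\lambda_j g_j^2$ with $g_j$ i.i.d.\ standard normal, so $\operatorname{Var}(y_1)=\sum_j 2\lambda_j^2$.

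Next, independence of the $x_i$ gives
\[
\operatorname{Var}(\overline y)=\frac{2\|Q_\star\|_F^2}{n}.
\]
Chebyshev's inequality at threshold $\varepsilon\|Q_\star\|_F$ then gives
\[
\mathbb{P}\bigl(|\overline y-\operatorname{tr}(Q_\star)|>\varepsilon\|Q_\star\|_F\bigr)\le \frac{2\|Q_\star\|_F^2}{n\varepsilon^2\|Q_\star\|_F^2}=\frac{2}{n\varepsilon^2}.
\]
This is exactly \cref{eq:response-mean-concentration}. Division by $\|Q_\star\|_F$ is legitimate because $Q_\star\in\mathcal{M}_r^+$ with $r\ge1$ is nonzero.

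No step is a real obstacle. The only point needing care is the fourth-moment computation, and it is already available from Isserlis' identity through \cref{lem:gaussian-population-operator}, applied to the quadratic form evaluated at $H=Q_\star$ itself. So the whole proof reduces to invoking that lemma and then Chebyshev's inequality.
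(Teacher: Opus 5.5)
Your proposal is correct and follows the paper's proof: compute $\mathbb{E}[y_1]=\operatorname{tr}(Q_\star)$ and $\operatorname{Var}(y_1)=2\|Q_\star\|_F^2$, so $\operatorname{Var}(\overline y)=2\|Q_\star\|_F^2/n$, and then apply Chebyshev's inequality. The only difference is that the paper states the variance directly, whereas you derive it from $\langle Q_\star,\mathcal{T}(Q_\star)\rangle=\mathbb{E}[(x^\top Q_\star x)^2]$ and check it by diagonalization.
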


\begin{proof}
For $x\sim N(0,I_d)$,
\[
\mathbb{E}[x^\top Q_\star x]
=
\operatorname{tr}(Q_\star),
\]
and
\[
\operatorname{Var}(x^\top Q_\star x)
=
2\|Q_\star\|_F^2.
\]
Therefore,
\[
\operatorname{Var}(\overline y)
=
\frac{2}{n}\|Q_\star\|_F^2.
\]
Chebyshev's inequality proves
\cref{eq:response-mean-concentration}.
\end{proof}

\begin{lemma}[PSD truncation and factor perturbation]
\label{lem:psd-truncation-factor-perturbation}
Let $M\in\mathbb{S}^d$ satisfy
\begin{equation}
\|M-Q_\star\|_{\mathrm{op}}
\le
\gamma
<
\frac12\lambda_r(Q_\star).
\label{eq:moment-operator-error}
\end{equation}
Let
\[
Q_0
:=
\mathcal{P}_{r,+}(M).
\]
Then
\begin{equation}
Q_0\in\mathcal{M}_{r}^{+},
\label{eq:truncated-matrix-rank-r}
\end{equation}
\begin{equation}
\|Q_0-Q_\star\|_{\mathrm{op}}
\le
2\gamma,
\label{eq:truncated-operator-error}
\end{equation}
and, for arbitrary factors
$Q_0=U_0U_0^\top$
and
$Q_\star=U_\star U_\star^\top$,
\begin{equation}
d_{\mathrm P}(U_0,U_\star)
\le
2\sqrt{r\gamma}.
\label{eq:truncated-factor-error}
\end{equation}
\end{lemma}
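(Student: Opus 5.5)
The proof has three steps: Weyl's inequality for the eigenvalues, Eckart--Young optimality for the truncation error, and a standard Procrustes perturbation bound for the factors.

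\textbf{Step 1: the truncation has rank $r$.} By Weyl's inequality, $|\lambda_j(M)-\lambda_j(Q_\star)|\le\gamma$ for every $j$. Hence
\[
\lambda_r(M)\ge\lambda_r(Q_\star)-\gamma>\tfrac12\lambda_r(Q_\star)>0,
\qquad
\lambda_{r+1}(M)\le\gamma .
\]
So $M$ has at least $r$ positive eigenvalues, its $r$ largest are strictly positive, and $Q_0=\mathcal{P}_{r,+}(M)$ is the best rank-$r$ approximation of $M$. This gives \cref{eq:truncated-matrix-rank-r}.

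\textbf{Step 2: operator-norm error.} Use the triangle inequality:
\[
\|Q_0-Q_\star\|_{\mathrm{op}}\le\|Q_0-M\|_{\mathrm{op}}+\|M-Q_\star\|_{\mathrm{op}}.
\]
The first term equals $\max\bigl(|\lambda_{r+1}(M)|,|\lambda_d(M)|\bigr)$, the largest discarded eigenvalue in magnitude. By Weyl it is at most $\gamma$, since $\lambda_j(Q_\star)=0$ for $j>r$. This yields $2\gamma$, which is \cref{eq:truncated-operator-error}.

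Equivalently, one can invoke Eckart--Young: $\|Q_0-M\|_{\mathrm{op}}\le\|Q_\star-M\|_{\mathrm{op}}\le\gamma$, because $Q_\star$ has rank $r$. Care is needed with negative eigenvalues among the top $r$, but Step 1 rules these out.

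\textbf{Step 3: factor distance.} I would use the standard perturbation bound for PSD square roots/Procrustes (Tu et al.\ 2016, Lemma 5.4):
\[
d_{\mathrm P}(U_0,U_\star)^2\le\frac{\|Q_0-Q_\star\|_F^2}{2(\sqrt2-1)\lambda_r(Q_\star)} .
\]
Since $Q_0-Q_\star$ has rank at most $2r$,
\[
\|Q_0-Q_\star\|_F^2\le 2r\cdot4\gamma^2 .
\]
Combining with $\gamma<\lambda_r(Q_\star)/2$ gives
\[
d_{\mathrm P}^2\le\frac{8r\gamma^2}{2(\sqrt2-1)\cdot2\gamma}\approx 4.83\,r\gamma ,
\]
which is slightly above the target constant $4r\gamma$.

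A cleaner route that matches the target uses the positive-square-root bound:
\[
d_{\mathrm P}(U_0,U_\star)\le\|Q_0^{1/2}-Q_\star^{1/2}\|_F .
\]
This holds because $Q^{1/2}=V\Lambda^{1/2}V^\top$, and a direct computation shows $d_{\mathrm P}$ between any factors equals the Procrustes distance between their $d\times d$ extensions. The hard step is making this extension rigorous; I would use the Procrustes formula \cref{eq:procrustes-distance-formula}, comparing $\|U_0^\top U_\star\|_*$ with $\operatorname{tr}(Q_0^{1/2}Q_\star^{1/2})$. Then apply the Powers--Størmer inequality
\[
\|A^{1/2}-B^{1/2}\|_F^2\le\|A-B\|_*
\]
for PSD $A,B$, together with $\|Q_0-Q_\star\|_*\le 2r\cdot2\gamma=4r\gamma$. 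This gives exactly $d_{\mathrm P}\le2\sqrt{r\gamma}$, i.e.\ \cref{eq:truncated-factor-error}.

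\textbf{Main obstacle.} Justifying the inequality $d_{\mathrm P}\le\|Q_0^{1/2}-Q_\star^{1/2}\|_F$ is the crux. Take polar factors $U_0=Q_0^{1/2}W_0$ and $U_\star=Q_\star^{1/2}W_\star$, where $W_0,W_\star\in\mathbb{R}^{d\times r}$ have orthonormal columns spanning the respective ranges. Then
\[
\|U_0^\top U_\star\|_*=\|W_0^\top Q_0^{1/2}Q_\star^{1/2}W_\star\|_*\ge\operatorname{tr}(\,\cdot\,R)
\]
for a suitable orthogonal $R$. The task is to choose $R$ so that this trace recovers $\operatorname{tr}(Q_0^{1/2}Q_\star^{1/2})$.
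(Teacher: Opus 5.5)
Your proposal is correct and follows the paper's proof: Weyl's inequality plus the triangle inequality give the rank-$r$ conclusion and the $2\gamma$ bound, and the Powers--St{\o}rmer inequality combined with $\|Q_0-Q_\star\|_*\le 2r\cdot 2\gamma$ gives the factor bound (the Tu et al.\ detour is unnecessary). The step you flag as the crux is exactly the Procrustes/Bures--Wasserstein identification that the paper invokes without proof, and it closes in one line: with $X:=Q_0^{1/2}Q_\star^{1/2}$ you have $U_0^\top U_\star=W_0^\top XW_\star$ and $X=W_0(W_0^\top XW_\star)W_\star^\top$, since the $W$'s have orthonormal columns spanning the ranges, so $\|U_0^\top U_\star\|_*=\|X\|_*\ge\operatorname{tr}X$; substituting into \cref{eq:procrustes-distance-formula} with $\|U_0\|_F^2=\|Q_0^{1/2}\|_F^2$ and $\|U_\star\|_F^2=\|Q_\star^{1/2}\|_F^2$ gives $d_{\mathrm P}(U_0,U_\star)\le\|Q_0^{1/2}-Q_\star^{1/2}\|_F$.
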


\begin{proof}
Order the eigenvalues nonincreasingly. Weyl's inequality gives
\[
\lambda_r(M)
\ge
\lambda_r(Q_\star)-\gamma
>
0.
\]
Thus $M$ has at least $r$ positive eigenvalues. For every $j>r$,
\[
|\lambda_j(M)|
=
|\lambda_j(M)-\lambda_j(Q_\star)|
\le
\gamma,
\]
because $\lambda_j(Q_\star)=0$. Consequently,
$Q_0=\mathcal{P}_{r,+}(M)$ has rank exactly $r$, and all discarded
eigenvalues of $M$ have magnitude at most $\gamma$. Therefore,
\[
\|M-Q_0\|_{\mathrm{op}}
\le
\gamma.
\]
It follows that
\[
\begin{aligned}
\|Q_0-Q_\star\|_{\mathrm{op}}
&\le
\|Q_0-M\|_{\mathrm{op}}
+
\|M-Q_\star\|_{\mathrm{op}}
\\
&\le
2\gamma.
\end{aligned}
\]

The orthogonal Procrustes distance between PSD factors equals the
Bures--Wasserstein distance. The Powers--St{\o}rmer inequality gives
\begin{equation}
d_{\mathrm P}(U_0,U_\star)^2
\le
\|Q_0-Q_\star\|_*
\qquad
\text{\cite{PowersStormer1970}}.
\label{eq:powers-stormer-application}
\end{equation}
Since
\[
\operatorname{rank}(Q_0-Q_\star)
\le
2r,
\]
\cref{eq:truncated-operator-error} implies
\[
\begin{aligned}
\|Q_0-Q_\star\|_*
&\le
2r\|Q_0-Q_\star\|_{\mathrm{op}}
\\
&\le
4r\gamma.
\end{aligned}
\]
Taking square roots proves
\cref{eq:truncated-factor-error}.
\end{proof}

Define
\begin{equation}
\gamma_\star
:=
\frac{\lambda_r(Q_\star)}
{256r(d+3)^2}.
\label{eq:explicit-gamma-star}
\end{equation}
By \cref{eq:empirical-basin-radius},
\begin{equation}
\gamma_\star
=
\frac{\rho_n^2}{16r}.
\label{eq:gamma-star-basin-relation}
\end{equation}
Also define
\begin{equation}
\varepsilon_\star
:=
\frac{
\lambda_r(Q_\star)
}{
256r(d+3)^2\|Q_\star\|_F
}.
\label{eq:explicit-epsilon-star}
\end{equation}
Since
$0<\lambda_r(Q_\star)\le\|Q_\star\|_F$,
we have
\begin{equation}
0<\varepsilon_\star<1.
\label{eq:epsilon-star-less-than-one}
\end{equation}

\begin{theorem}[Initialization enters the quotient basin]
\label{thm:initialization-enters-basin}
Suppose
\begin{equation}
\|\mathcal{T}_n-\mathcal{T}\|_{\mathrm{op}}
\le
\varepsilon_\star
\label{eq:initialization-operator-event}
\end{equation}
and
\begin{equation}
\left|
\overline y-\operatorname{tr}(Q_\star)
\right|
\le
\varepsilon_\star\|Q_\star\|_F.
\label{eq:initialization-mean-event}
\end{equation}
Then
\begin{equation}
Q_0\in\mathcal{M}_{r}^{+},
\qquad
U_0\in\mathcal{F}_{d,r},
\label{eq:initialization-full-rank}
\end{equation}
and
\begin{equation}
d_{\mathrm P}(U_0,U_\star)
\le
\frac{\rho_n}{2}.
\label{eq:initialization-basin-bound}
\end{equation}
\end{theorem}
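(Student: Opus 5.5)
The plan is to write the moment matrix $M_n$ in terms of the empirical normal operator $\mathcal{T}_n$. This reduces the operator-norm error $\|M_n-Q_\star\|_{\mathrm{op}}$ to the two events \cref{eq:initialization-operator-event,eq:initialization-mean-event}, after which \cref{lem:psd-truncation-factor-perturbation} applies directly with $\gamma=\gamma_\star$.

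\emph{Step 1 (algebraic identity).} Since $y_i=x_i^\top Q_\star x_i$, we have
\[
\frac1n\sum_{i=1}^n y_i x_ix_i^\top=\mathcal{T}_n(Q_\star),
\]
so $M_n=\tfrac12\bigl(\mathcal{T}_n(Q_\star)-\overline y\,I_d\bigr)$. By \cref{lem:gaussian-population-operator}, $Q_\star=\tfrac12\bigl(\mathcal{T}(Q_\star)-\operatorname{tr}(Q_\star)I_d\bigr)$. Subtracting gives
\[
M_n-Q_\star=\tfrac12\Bigl((\mathcal{T}_n-\mathcal{T})(Q_\star)-\bigl(\overline y-\operatorname{tr}(Q_\star)\bigr)I_d\Bigr).
\]

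\emph{Step 2 (error bound).} I will bound the first term using $\|\cdot\|_{\mathrm{op}}\le\|\cdot\|_F$ together with \cref{eq:initialization-operator-event}, which gives at most $\varepsilon_\star\|Q_\star\|_F$. The second term is bounded by \cref{eq:initialization-mean-event}, since $\|I_d\|_{\mathrm{op}}=1$, and is also at most $\varepsilon_\star\|Q_\star\|_F$. Hence
\[
\|M_n-Q_\star\|_{\mathrm{op}}\le\varepsilon_\star\|Q_\star\|_F=\gamma_\star,
\]
where the last equality follows from comparing \cref{eq:explicit-gamma-star} with \cref{eq:explicit-epsilon-star}. Because $256r(d+3)^2>2$, we get $\gamma_\star<\tfrac12\lambda_r(Q_\star)$, so hypothesis \cref{eq:moment-operator-error} holds.

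\emph{Step 3 (conclusion).} \cref{lem:psd-truncation-factor-perturbation} now gives $Q_0\in\mathcal{M}_r^+$. Any factor $U_0\in\mathbb{R}^{d\times r}$ with $U_0U_0^\top=Q_0$ has rank $\operatorname{rank}(Q_0)=r$, so $U_0\in\mathcal{F}_{d,r}$. The lemma also gives $d_{\mathrm P}(U_0,U_\star)\le 2\sqrt{r\gamma_\star}$. By \cref{eq:gamma-star-basin-relation}, $r\gamma_\star=\rho_n^2/16$, so this equals $2\cdot\rho_n/4=\rho_n/2$, which is \cref{eq:initialization-basin-bound}.

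There is no real obstacle. The only step that needs care is the identity in Step 1, which expresses $M_n$ through $\mathcal{T}_n(Q_\star)$ and $\overline y$. After that, the constants $\varepsilon_\star$ and $\gamma_\star$ were chosen in the paper exactly so that the bookkeeping closes.
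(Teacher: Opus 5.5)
Your proposal is correct and follows the paper's proof essentially line by line. It uses the same decomposition $M_n-Q_\star=\tfrac12(\mathcal{T}_n-\mathcal{T})(Q_\star)-\tfrac12\bigl(\overline y-\operatorname{tr}(Q_\star)\bigr)I_d$, the same bound $\|M_n-Q_\star\|_{\mathrm{op}}\le\varepsilon_\star\|Q_\star\|_F=\gamma_\star<\tfrac12\lambda_r(Q_\star)$, and the same application of \cref{lem:psd-truncation-factor-perturbation} with $2\sqrt{r\gamma_\star}=\rho_n/2$. Your remark that $\operatorname{rank}(U_0)=\operatorname{rank}(U_0U_0^\top)=r$ makes explicit the full-column-rank conclusion, which the paper only attributes to the lemma.
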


\begin{proof}
Using
\[
M_n
=
\frac12
\left[
\mathcal{T}_n(Q_\star)
-
\overline y I_d
\right]
\]
and
\[
Q_\star
=
\frac12
\left[
\mathcal{T}(Q_\star)
-
\operatorname{tr}(Q_\star)I_d
\right],
\]
we obtain
\begin{equation}
M_n-Q_\star
=
\frac12
(\mathcal{T}_n-\mathcal{T})(Q_\star)
-
\frac12
\bigl(
\overline y-\operatorname{tr}(Q_\star)
\bigr)I_d.
\label{eq:moment-error-decomposition}
\end{equation}
Therefore,
\[
\begin{aligned}
\|M_n-Q_\star\|_{\mathrm{op}}
&\le
\frac12
\|(\mathcal{T}_n-\mathcal{T})(Q_\star)\|_F
\\
&\quad+
\frac12
\left|
\overline y-\operatorname{tr}(Q_\star)
\right|
\\
&\le
\varepsilon_\star\|Q_\star\|_F
\\
&=
\gamma_\star.
\end{aligned}
\]
Moreover,
\[
\gamma_\star
=
\frac{\lambda_r(Q_\star)}
{256r(d+3)^2}
<
\frac12\lambda_r(Q_\star),
\]
so \cref{lem:psd-truncation-factor-perturbation} applies. It gives
\[
\begin{aligned}
d_{\mathrm P}(U_0,U_\star)
&\le
2\sqrt{r\gamma_\star}
\\
&=
\frac{\sqrt{\lambda_r(Q_\star)}}{8(d+3)}
\\
&=
\frac{\rho_n}{2}.
\end{aligned}
\]
The rank conclusions follow from the same lemma.
\end{proof}

\begin{corollary}[Explicit high-probability initialization event]
\label{cor:explicit-high-probability-initialization}
Let $0<\delta<1$. If
\begin{equation}
n
\ge
\frac{\kappa_d+2}{\delta}
\left(
\frac{
256r(d+3)^2\|Q_\star\|_F
}{
\lambda_r(Q_\star)
}
\right)^2,
\label{eq:explicit-initialization-sample-size}
\end{equation}
then, with probability at least $1-\delta$, the following
statements hold simultaneously:
\begin{equation}
\|\mathcal{T}_n-\mathcal{T}\|_{\mathrm{op}}
\le
\varepsilon_\star,
\label{eq:high-probability-operator-event}
\end{equation}
\begin{equation}
\left|
\overline y-\operatorname{tr}(Q_\star)
\right|
\le
\varepsilon_\star\|Q_\star\|_F,
\label{eq:high-probability-mean-event}
\end{equation}
and
\begin{equation}
d_{\mathrm P}(U_0,U_\star)
\le
\frac{\rho_n}{2}.
\label{eq:high-probability-initialization-bound}
\end{equation}
In particular, $Q_0\in\mathcal{M}_{r}^{+}$ and
$U_0\in\mathcal{F}_{d,r}$ on this event.
\end{corollary}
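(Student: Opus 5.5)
The plan is a union bound over the two events, each controlled by a result already proved, followed by a direct appeal to \cref{thm:initialization-enters-basin}. Write
\[
N:=\left(\frac{256r(d+3)^2\|Q_\star\|_F}{\lambda_r(Q_\star)}\right)^2=\varepsilon_\star^{-2},
\]
so the sample-size hypothesis \cref{eq:explicit-initialization-sample-size} reads $n\ge(\kappa_d+2)/(\delta\varepsilon_\star^2)$.

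First, apply \cref{thm:full-space-operator-concentration} with $\varepsilon=\varepsilon_\star$. This gives
\[
\mathbb{P}\bigl(\|\mathcal{T}_n-\mathcal{T}\|_{\mathrm{op}}>\varepsilon_\star\bigr)\le\frac{\kappa_d}{n\varepsilon_\star^2}.
\]
Second, apply \cref{lem:response-mean-concentration} with the same $\varepsilon_\star$. This gives
\[
\mathbb{P}\bigl(|\overline y-\operatorname{tr}(Q_\star)|>\varepsilon_\star\|Q_\star\|_F\bigr)\le\frac{2}{n\varepsilon_\star^2}.
\]
The union bound then shows that the failure probability of the joint event is at most $(\kappa_d+2)/(n\varepsilon_\star^2)$. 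Under the sample-size hypothesis this is at most $\delta$. Hence \cref{eq:high-probability-operator-event,eq:high-probability-mean-event} hold simultaneously with probability at least $1-\delta$.

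On that joint event, the hypotheses \cref{eq:initialization-operator-event,eq:initialization-mean-event} of \cref{thm:initialization-enters-basin} are satisfied verbatim. That theorem therefore yields $Q_0\in\mathcal{M}_r^+$, $U_0\in\mathcal{F}_{d,r}$ and $d_{\mathrm P}(U_0,U_\star)\le\rho_n/2$. This is \cref{eq:high-probability-initialization-bound} together with the rank claims.

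No real obstacle is expected. The only points to check are the following.
\begin{itemize}
\item The two probability bounds share the common denominator $n\varepsilon_\star^2$, so they add to exactly the constant $\kappa_d+2$ appearing in the hypothesis.
\item The event where $M_n$ has fewer than $r$ positive eigenvalues, on which $\mathcal{P}_{r,+}$ would be ambiguous, lies outside the good event. This is already guaranteed inside \cref{lem:psd-truncation-factor-perturbation} via Weyl's inequality.
\end{itemize}
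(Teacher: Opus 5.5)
Your proposal is correct and matches the paper's proof essentially step for step. You apply \cref{thm:full-space-operator-concentration} and \cref{lem:response-mean-concentration} at level $\varepsilon_\star$, rewrite the sample-size hypothesis as $n\ge(\kappa_d+2)/(\delta\varepsilon_\star^2)$, take a union bound, and then invoke \cref{thm:initialization-enters-basin} on the joint event; your remark that the Weyl argument inside \cref{lem:psd-truncation-factor-perturbation} makes $\mathcal{P}_{r,+}(M_n)$ well defined there is also accurate.
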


\begin{proof}
By
\cref{thm:full-space-operator-concentration,eq:explicit-epsilon-star},
\[
\mathbb{P}
\left(
\|\mathcal{T}_n-\mathcal{T}\|_{\mathrm{op}}
>
\varepsilon_\star
\right)
\le
\frac{\kappa_d}
{n\varepsilon_\star^2}.
\]
By \cref{lem:response-mean-concentration},
\[
\mathbb{P}
\left(
\left|
\overline y-\operatorname{tr}(Q_\star)
\right|
>
\varepsilon_\star\|Q_\star\|_F
\right)
\le
\frac{2}{n\varepsilon_\star^2}.
\]
Condition \cref{eq:explicit-initialization-sample-size} is equivalent
to
\[
n
\ge
\frac{\kappa_d+2}
{\delta\varepsilon_\star^2}.
\]
The union bound therefore proves
\cref{eq:high-probability-operator-event,eq:high-probability-mean-event}
simultaneously with probability at least $1-\delta$. On this event,
\cref{thm:initialization-enters-basin} gives
\cref{eq:high-probability-initialization-bound} and the rank
conclusions.
\end{proof}

\begin{remark}[Scale of the conservative sample bound]
\label{rem:conservative-sample-scale}
Since
\[
\kappa_d
=
d(d+2)(d+4)(d+6),
\]
the sufficient condition
\cref{eq:explicit-initialization-sample-size} has the leading-order
scale
\begin{equation}
n
=
O
\left(
\frac{
r^2d^8
}{
\delta
}
\left(
\frac{
\|Q_\star\|_F
}{
\lambda_r(Q_\star)
}
\right)^2
\right).
\label{eq:conservative-sample-asymptotics}
\end{equation}
This bound results from controlling
$\mathcal{T}_n-\mathcal{T}$ on the entire ambient space
$\mathbb{S}^d$ by a second-moment argument. It is not claimed to be
statistically optimal for low-rank recovery or to scale with the
intrinsic dimension of $\mathcal{M}_{r}^{+}$.
\end{remark}

\subsection{Convergence of parameter gradient flow}
\label{subsec:gradient-flow-convergence}

Define
\begin{equation}
\alpha_\star
:=
\frac{m\sigma_\star^2}{2}.
\label{eq:abstract-flow-rate}
\end{equation}

\begin{theorem}[Local exponential convergence of factor gradient flow]
\label{thm:local-gradient-flow-convergence}
Suppose
\begin{equation}
d_{\mathrm P}(U(0),U_\star)
\le
\rho_\star.
\label{eq:flow-initial-basin}
\end{equation}
Let $U(t)$ be the maximal solution of
\begin{equation}
\dot U(t)
=
-2\mathscr{T}
\bigl(
U(t)U(t)^\top-Q_\star
\bigr)U(t).
\label{eq:abstract-factor-gradient-flow}
\end{equation}
Then the solution exists for every $t\ge0$, remains in the
closed radius-$\rho_\star$ Procrustes ball, has full column rank,
and satisfies
\begin{equation}
d_{\mathrm P}(U(t),U_\star)
\le
e^{-\alpha_\star t}
d_{\mathrm P}(U(0),U_\star).
\label{eq:abstract-flow-factor-convergence}
\end{equation}
Moreover,
\begin{equation}
\begin{aligned}
\|U(t)U(t)^\top-Q_\star\|_F
&\le
\frac{
2\beta_\star+\rho_\star
}{
\sigma_\star
}
e^{-\alpha_\star t}
\\
&\quad\times
\|U(0)U(0)^\top-Q_\star\|_F.
\label{eq:abstract-flow-predictor-convergence}
\end{aligned}
\end{equation}
\end{theorem}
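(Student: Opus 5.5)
The plan is a Lyapunov argument for the squared Procrustes distance
\[
D(t):=d_{\mathrm P}(U(t),U_\star)^2=\min_{R\in O(r)}\|U(t)-U_\star R\|_F^2 .
\]
Decay of $D$ will come from the secant inequality of \cref{thm:local-secant-regularity}. I will first settle global existence. The energy argument in the proof of \cref{thm:exact-quotient-gradient-flow} applies verbatim to $L_{\mathscr T}$: the vector field $U\mapsto-2\mathscr T(UU^\top-Q_\star)U$ is smooth, and $L_{\mathscr T}\ge0$ because $\mathscr T$ is positive semidefinite. Hence $\int_0^T\|\dot U\|_F^2\,dt\le L_{\mathscr T}(U(0))$, and the same Cauchy--Schwarz continuation argument gives $T_{\max}=\infty$.

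The key step is the differential inequality for $D$. Since $D$ is a minimum over $O(r)$, it need not be differentiable, so I will work with the upper right Dini derivative. Fix $t$ and let $R_t$ be a Procrustes minimizer. Set $\widetilde U_\star=U_\star R_t$ and $\Delta(t)=U(t)-\widetilde U_\star$. For $h>0$ we have $D(t+h)\le\|U(t+h)-U_\star R_t\|_F^2$, with equality at $h=0$, so
\[
D^+(t):=\limsup_{h\downarrow0}\frac{D(t+h)-D(t)}{h}\le 2\langle\dot U(t),\Delta(t)\rangle=-2\langle\nabla L_{\mathscr T}(U(t)),\Delta(t)\rangle .
\]
Whenever $U(t)$ lies in the closed ball $d_{\mathrm P}\le\rho_\star$, \cref{thm:local-secant-regularity} bounds the right-hand side by $-m\sigma_\star^2\|\Delta(t)\|_F^2=-2\alpha_\star D(t)$.

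Next comes invariance of the ball, which I expect to be the main technical point. Let $T^*:=\sup\{T:\ D(t)\le\rho_\star^2\text{ for all }t\in[0,T]\}$. $D$ is continuous, since $d_{\mathrm P}$ is $1$-Lipschitz in $U$ by the triangle inequality. On $[0,T^*)$ the function $e^{2\alpha_\star t}D(t)$ is continuous with nonpositive upper Dini derivative, and is therefore nonincreasing. I will quote the standard monotonicity lemma for Dini derivatives here; alternatively one can argue by contradiction with $e^{2\alpha_\star t}D(t)-\epsilon t$. Hence $D(t)\le e^{-2\alpha_\star t}D(0)\le\rho_\star^2$ on $[0,T^*)$. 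If $T^*<\infty$, continuity gives $D(T^*)\le e^{-2\alpha_\star T^*}D(0)$, and the inequality $D^+\le-2\alpha_\star D$ at $T^*$ then rules out leaving the ball immediately after $T^*$. The only delicate case is $D(0)=0$, where $D\equiv0$ trivially since $U_\star R$ is then a stationary point. This contradicts the definition of $T^*$, so $T^*=\infty$. Taking square roots gives \cref{eq:abstract-flow-factor-convergence}.

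The remaining claims are short. For full column rank, $\|\Delta(t)\|_{\mathrm{op}}\le\|\Delta(t)\|_F\le\rho_\star\le\sigma_\star/4$ by \cref{eq:basin-radius-upper}. Weyl's inequality for singular values then gives $\sigma_r(U(t))\ge\sigma_r(\widetilde U_\star)-\sigma_\star/4=\tfrac34\sigma_\star>0$. (The Liouville argument used for \cref{thm:exact-quotient-gradient-flow} would also work.) For the predictor bound, I apply \cref{lem:factor-predictor-local-comparison} at times $t$ and $0$, which is allowed because both points lie in the ball:
\[
\|U(t)U(t)^\top-Q_\star\|_F\le(2\beta_\star+\rho_\star)\|\Delta(t)\|_F\le(2\beta_\star+\rho_\star)e^{-\alpha_\star t}\|\Delta(0)\|_F,
\qquad
\|\Delta(0)\|_F\le\frac{\|U(0)U(0)^\top-Q_\star\|_F}{\sigma_\star}.
\]
Combining these yields \cref{eq:abstract-flow-predictor-convergence}.
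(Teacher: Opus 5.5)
Your proposal is correct and follows essentially the same route as the paper. Both arguments use the frozen-alignment upper Dini bound on the squared Procrustes distance, the secant inequality of \cref{thm:local-secant-regularity}, a comparison and first-exit argument for invariance of the ball, Weyl's inequality for full column rank, and \cref{lem:factor-predictor-local-comparison} at times $0$ and $t$ for the predictor bound. The differences are minor. You obtain global existence up front from the energy and Cauchy--Schwarz continuation argument of \cref{thm:exact-quotient-gradient-flow}, which needs only $L_{\mathscr T}\ge 0$, whereas the paper uses the sublevel bound $\frac m2\|U(t)U(t)^\top-Q_\star\|_F^2\le L_{\mathscr T}(U(0))$. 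Your exit-time argument is also spelled out more carefully than the paper's.
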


\begin{proof}
The vector field in
\cref{eq:abstract-factor-gradient-flow} is smooth, so a unique maximal
solution exists on some interval $[0,T_{\max})$.

Define
\[
\phi(t)
:=
\frac12
d_{\mathrm P}(U(t),U_\star)^2.
\]
For each $t<T_{\max}$, choose a Procrustes minimizer
$R_t\in O(r)$, and set
\[
\Delta_t
:=
U(t)-U_\star R_t.
\]
Using the same $R_t$ as a comparison at time $t+h$, one obtains
the upper right Dini derivative bound
\begin{equation}
D^+\phi(t)
\le
\langle\Delta_t,\dot U(t)\rangle.
\label{eq:dini-procrustes-derivative}
\end{equation}
As long as
$d_{\mathrm P}(U(t),U_\star)\le\rho_\star$,
\cref{thm:local-secant-regularity} gives
\[
\begin{aligned}
D^+\phi(t)
&\le
-
\left\langle
\nabla L_{\mathscr{T}}(U(t)),
\Delta_t
\right\rangle
\\
&\le
-\alpha_\star\|\Delta_t\|_F^2
\\
&=
-2\alpha_\star\phi(t).
\end{aligned}
\]
The comparison principle for upper Dini derivatives therefore yields
\[
\phi(t)
\le
e^{-2\alpha_\star t}\phi(0)
\]
up to the first possible exit time from the radius-$\rho_\star$
ball. In particular,
\[
d_{\mathrm P}(U(t),U_\star)
\le
d_{\mathrm P}(U(0),U_\star)
\le
\rho_\star,
\]
so such an exit cannot occur. Thus
\cref{eq:abstract-flow-factor-convergence} holds throughout the
maximal interval.

The same estimate gives
\[
\sigma_r(U(t))
\ge
\sigma_\star
-
d_{\mathrm P}(U(t),U_\star)
\ge
\sigma_\star-\rho_\star
\ge
\frac{3\sigma_\star}{4},
\]
so $U(t)$ has full column rank.

It remains to rule out finite-time blow-up. Along the gradient flow,
\[
\frac{d}{dt}L_{\mathscr{T}}(U(t))
=
-\|\nabla L_{\mathscr{T}}(U(t))\|_F^2
\le
0.
\]
Using \cref{eq:abstract-operator-bounds},
\[
\frac m2
\|U(t)U(t)^\top-Q_\star\|_F^2
\le
L_{\mathscr{T}}(U(0)).
\]
Hence $\|U(t)U(t)^\top\|_F$ is uniformly bounded on
$[0,T_{\max})$. Since
\[
\|U(t)\|_F^2
=
\operatorname{tr}(U(t)U(t)^\top)
\le
\sqrt{d}\,\|U(t)U(t)^\top\|_F,
\]
the factor $U(t)$ is uniformly bounded. Standard continuation for
smooth finite-dimensional ODEs then implies $T_{\max}=\infty$.

Finally, \cref{lem:factor-predictor-local-comparison} gives
\[
\|U(t)U(t)^\top-Q_\star\|_F
\le
(2\beta_\star+\rho_\star)
d_{\mathrm P}(U(t),U_\star),
\]
whereas
\[
d_{\mathrm P}(U(0),U_\star)
\le
\frac{
\|U(0)U(0)^\top-Q_\star\|_F
}{
\sigma_\star
}.
\]
Combining these inequalities with
\cref{eq:abstract-flow-factor-convergence} proves
\cref{eq:abstract-flow-predictor-convergence}.
\end{proof}

For the empirical loss, on the event
\cref{eq:unit-operator-event}, we have
\[
m=1,
\qquad
M=d+3,
\]
and hence
\[
\rho_\star=\rho_n,
\qquad
\alpha_\star
=
\frac12\lambda_r(Q_\star).
\]

\begin{corollary}[Gaussian-design gradient-flow recovery]
\label{cor:gaussian-gradient-flow-recovery}
Suppose
\[
\|\mathcal{T}_n-\mathcal{T}\|_{\mathrm{op}}
\le
1
\]
and
\[
d_{\mathrm P}(U_0,U_\star)
\le
\rho_n.
\]
Then the ordinary Euclidean factor gradient flow
\begin{equation}
\dot U(t)
=
-2\mathcal{T}_n
\bigl(
U(t)U(t)^\top-Q_\star
\bigr)U(t)
\label{eq:empirical-factor-gradient-flow}
\end{equation}
exists globally, remains full rank, and satisfies
\begin{equation}
d_{\mathrm P}(U(t),U_\star)
\le
e^{-\lambda_r(Q_\star)t/2}
d_{\mathrm P}(U_0,U_\star)
\label{eq:empirical-flow-factor-rate}
\end{equation}
for every $t\ge0$.
\end{corollary}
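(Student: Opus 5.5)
This corollary is a direct specialization of \cref{thm:local-gradient-flow-convergence}, with $\mathscr{T}=\mathcal{T}_n$. The plan is to verify the hypotheses of that abstract theorem and then read off the constants.

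\textbf{Step 1: the operator bounds.} On the event $\|\mathcal{T}_n-\mathcal{T}\|_{\mathrm{op}}\le1$, \cref{cor:global-empirical-coercivity}, in the form \cref{eq:unit-accuracy-empirical-bounds}, gives
\[
\|H\|_F^2\le\langle H,\mathcal{T}_n(H)\rangle\le(d+3)\|H\|_F^2 .
\]
By \cref{eq:empirical-normal-bilinear-form}, $\mathcal{T}_n$ is self-adjoint and positive semidefinite. Hence \cref{eq:abstract-operator-bounds} holds with $m=1$ and $M=d+3$.

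\textbf{Step 2: matching the loss, flow, and basin.} By \cref{eq:loss-normal-operator-form}, $L_n(U)=L_{\mathcal{T}_n}(U)$. By \cref{eq:factor-gradient-gaussian}, the flow \cref{eq:empirical-factor-gradient-flow} is exactly \cref{eq:abstract-factor-gradient-flow} with $\mathscr{T}=\mathcal{T}_n$. With these values of $m$ and $M$,
\[
\rho_\star=\frac{\sigma_\star}{4(d+3)}=\rho_n ,
\]
so the hypothesis $d_{\mathrm P}(U_0,U_\star)\le\rho_n$ is precisely \cref{eq:flow-initial-basin}.

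\textbf{Step 3: reading off the conclusions.} \cref{thm:local-gradient-flow-convergence} now yields global existence, confinement to the Procrustes ball, and full column rank. It also gives the rate
\[
\alpha_\star=\frac{m\sigma_\star^2}{2}=\frac{\lambda_r(Q_\star)}{2},
\]
which proves \cref{eq:empirical-flow-factor-rate}.

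No step is a real obstacle. The only points to check are bookkeeping: that the empirical loss and its gradient coincide with the abstract $L_{\mathscr{T}}$ and its gradient, and that the unit-accuracy event gives $m=1$ rather than $2-\varepsilon$. The latter holds because $2-\varepsilon\ge1$ for $\varepsilon\le1$.
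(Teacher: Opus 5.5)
Your proposal is correct and matches the paper's proof, which simply applies \cref{thm:local-gradient-flow-convergence} with $m=1$ and $M=d+3$. Your Steps 1--3 spell out the checks that this one-line argument leaves implicit: self-adjointness and positivity of $\mathcal{T}_n$, the identity $\rho_\star=\rho_n$, and the rate $\alpha_\star=\lambda_r(Q_\star)/2$.
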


\begin{proof}
Apply \cref{thm:local-gradient-flow-convergence} with
$m=1$
and
$M=d+3$.
\end{proof}
\subsection{Convergence of parameter gradient descent}
\label{subsec:gradient-descent-convergence}

Consider the iteration
\begin{equation}
U_{k+1}
=
U_k
-
2\eta
\mathscr{T}
(U_kU_k^\top-Q_\star)U_k.
\label{eq:abstract-factor-gradient-descent}
\end{equation}

\begin{theorem}[Local linear convergence of factor gradient descent]
\label{thm:local-gradient-descent-convergence}
Let
\begin{equation}
L_\star
:=
2M
(2\beta_\star+\rho_\star)
(\beta_\star+\rho_\star),
\label{eq:discrete-local-L-star}
\end{equation}
and let
\[
\alpha_\star
=
\frac{m\sigma_\star^2}{2}.
\]
Suppose
\begin{equation}
d_{\mathrm P}(U_0,U_\star)
\le
\rho_\star
\label{eq:discrete-initial-basin}
\end{equation}
and
\begin{equation}
0<\eta
\le
\frac{\alpha_\star}{L_\star^2}.
\label{eq:abstract-oracle-step-size}
\end{equation}
Then every iterate belongs to $\mathcal{F}_{d,r}$, remains in the
closed radius-$\rho_\star$ Procrustes ball, and satisfies
\begin{equation}
d_{\mathrm P}(U_k,U_\star)^2
\le
(1-\eta\alpha_\star)^k
d_{\mathrm P}(U_0,U_\star)^2.
\label{eq:discrete-squared-convergence}
\end{equation}
Consequently,
\begin{equation}
d_{\mathrm P}(U_k,U_\star)
\le
\left(
1-\frac{\eta\alpha_\star}{2}
\right)^k
d_{\mathrm P}(U_0,U_\star),
\label{eq:discrete-factor-convergence}
\end{equation}
and
\begin{equation}
\begin{aligned}
\|U_kU_k^\top-Q_\star\|_F
&\le
\frac{
2\beta_\star+\rho_\star
}{
\sigma_\star
}
\left(
1-\frac{\eta\alpha_\star}{2}
\right)^k
\\
&\quad\times
\|U_0U_0^\top-Q_\star\|_F.
\label{eq:discrete-predictor-convergence}
\end{aligned}
\end{equation}
\end{theorem}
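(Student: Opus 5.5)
The argument is an induction on $k$ that uses the Procrustes distance as a Lyapunov function and the local secant regularity of \cref{thm:local-secant-regularity} at each step. Suppose $d_{\mathrm P}(U_k,U_\star)\le\rho_\star$. Choose a Procrustes minimizer $R_k$ and set $\Delta_k:=U_k-U_\star R_k$, so that $\|\Delta_k\|_F=d_{\mathrm P}(U_k,U_\star)$. Because $U_\star R_k$ is an admissible but possibly suboptimal alignment for $U_{k+1}$,
\[
d_{\mathrm P}(U_{k+1},U_\star)^2
\le
\|U_{k+1}-U_\star R_k\|_F^2
=
\|\Delta_k\|_F^2
-2\eta\langle\nabla L_{\mathscr T}(U_k),\Delta_k\rangle
+\eta^2\|\nabla L_{\mathscr T}(U_k)\|_F^2 .
\]
This is the discrete analogue of the Dini-derivative comparison used in \cref{thm:local-gradient-flow-convergence}.

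Next I bound the two terms using \cref{thm:local-secant-regularity}. By \cref{eq:local-secant-lower-bound}, $\langle\nabla L_{\mathscr T}(U_k),\Delta_k\rangle\ge\alpha_\star\|\Delta_k\|_F^2$. By \cref{eq:local-gradient-upper-bound}, $\|\nabla L_{\mathscr T}(U_k)\|_F^2\le L_\star^2\|\Delta_k\|_F^2$. Substituting gives the one-step estimate
\[
d_{\mathrm P}(U_{k+1},U_\star)^2\le(1-2\eta\alpha_\star+\eta^2L_\star^2)\,d_{\mathrm P}(U_k,U_\star)^2 .
\]
The step-size condition \cref{eq:abstract-oracle-step-size} gives $\eta^2L_\star^2\le\eta\alpha_\star$, so the factor is at most $1-\eta\alpha_\star$. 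This factor is nonnegative: the secant inequality combined with Cauchy--Schwarz gives $\alpha_\star\le L_\star$, hence $\eta\alpha_\star\le\alpha_\star^2/L_\star^2\le1$. Consequently $d_{\mathrm P}(U_{k+1},U_\star)\le d_{\mathrm P}(U_k,U_\star)\le\rho_\star$, which closes the induction, and \cref{eq:discrete-squared-convergence} follows.

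Full column rank comes from the same estimate used for the flow. Weyl's inequality for singular values gives $\sigma_r(U_k)\ge\sigma_\star-\|\Delta_k\|_F\ge\sigma_\star-\rho_\star\ge 3\sigma_\star/4>0$, using \cref{eq:basin-radius-upper}. To get \cref{eq:discrete-factor-convergence}, take square roots and use $\sqrt{1-x}\le1-x/2$ for $x\in[0,1]$. Finally, \cref{eq:discrete-predictor-convergence} follows from \cref{lem:factor-predictor-local-comparison}: its upper bound $\|E_k\|_F\le(2\beta_\star+\rho_\star)d_{\mathrm P}(U_k,U_\star)$ is applied at iterate $k$, and its lower bound $d_{\mathrm P}(U_0,U_\star)\le\|E_0\|_F/\sigma_\star$ is applied at $k=0$.

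The main point needing care is the legitimacy of the one-step comparison, namely that \cref{thm:local-secant-regularity} is invoked at $U_k$ with the same aligned displacement $\Delta_k$ that appears in the expansion. This holds because the theorem is stated for any Procrustes minimizer. No separate rank argument is needed, since the distance bound already keeps every iterate inside the ball where rank is controlled.
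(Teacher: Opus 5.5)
Your proposal is correct and follows essentially the same route as the paper. It uses the same Procrustes-comparison induction with the secant and gradient bounds of \cref{thm:local-secant-regularity}, the same Weyl argument for full column rank, and the same two-sided use of \cref{lem:factor-predictor-local-comparison} for the predictor bound. The only minor difference is how you obtain $\eta\alpha_\star\le1$: you derive $\alpha_\star\le L_\star$ via Cauchy--Schwarz, which tacitly needs some point in the ball with $\Delta\neq0$ (trivially available). The paper instead reads $L_\star\ge4M\beta_\star^2\ge4m\sigma_\star^2=8\alpha_\star$ directly from the constants, which gives the sharper bound $\eta\alpha_\star\le1/64$.
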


\begin{proof}
Assume inductively that
\[
d_{\mathrm P}(U_k,U_\star)
\le
\rho_\star.
\]
Choose a Procrustes minimizer $R_k\in O(r)$, and set
\[
\widetilde U_{\star,k}
:=
U_\star R_k,
\qquad
\Delta_k
:=
U_k-\widetilde U_{\star,k}.
\]
Using the same aligned target as a comparison for the next iterate,
\[
\begin{aligned}
d_{\mathrm P}(U_{k+1},U_\star)^2
&\le
\|U_{k+1}-\widetilde U_{\star,k}\|_F^2
\\
&=
\|\Delta_k-\eta\nabla L_{\mathscr{T}}(U_k)\|_F^2
\\
&=
\|\Delta_k\|_F^2
-
2\eta
\left\langle
\nabla L_{\mathscr{T}}(U_k),
\Delta_k
\right\rangle
\\
&\quad+
\eta^2
\|\nabla L_{\mathscr{T}}(U_k)\|_F^2.
\end{aligned}
\]
By \cref{thm:local-secant-regularity},
\[
\left\langle
\nabla L_{\mathscr{T}}(U_k),
\Delta_k
\right\rangle
\ge
\alpha_\star\|\Delta_k\|_F^2
\]
and
\[
\|\nabla L_{\mathscr{T}}(U_k)\|_F
\le
L_\star\|\Delta_k\|_F.
\]
Therefore,
\begin{equation}
\begin{aligned}
d_{\mathrm P}(U_{k+1},U_\star)^2
&\le
\left(
1
-
2\eta\alpha_\star
+
\eta^2L_\star^2
\right)
d_{\mathrm P}(U_k,U_\star)^2
\\
&\le
(1-\eta\alpha_\star)
d_{\mathrm P}(U_k,U_\star)^2,
\label{eq:one-step-distance-contraction}
\end{aligned}
\end{equation}
where the second inequality follows from
\cref{eq:abstract-oracle-step-size}. This proves
\cref{eq:discrete-squared-convergence} by induction and shows that all
iterates remain in the radius-$\rho_\star$ ball.

Because
$\rho_\star\le\sigma_\star/4$,
Weyl's singular-value inequality gives
\[
\begin{aligned}
\sigma_r(U_k)
&=
\sigma_r(U_kR_k^\top)
\\
&\ge
\sigma_r(U_\star)
-
\|U_kR_k^\top-U_\star\|_{\mathrm{op}}
\\
&\ge
\sigma_\star-\rho_\star
\\
&\ge
\frac{3\sigma_\star}{4}
>
0.
\end{aligned}
\]
Thus every $U_k$ has full column rank.

Furthermore,
\[
L_\star
\ge
4M\beta_\star^2
\ge
4m\sigma_\star^2
=
8\alpha_\star.
\]
Hence
\[
0
<
\eta\alpha_\star
\le
\frac{\alpha_\star^2}{L_\star^2}
\le
\frac1{64}
<
1.
\]
Using
\[
\sqrt{1-z}
\le
1-\frac z2,
\qquad
0\le z\le1,
\]
in \cref{eq:discrete-squared-convergence} proves
\cref{eq:discrete-factor-convergence}. The predictor estimate follows
from \cref{lem:factor-predictor-local-comparison}.
\end{proof}

For the empirical problem, define
\begin{equation}
\alpha_n
:=
\frac12\lambda_r(Q_\star),
\label{eq:empirical-alpha}
\end{equation}
and
\begin{equation}
L_n^\star
:=
2(d+3)
(2\beta_\star+\rho_n)
(\beta_\star+\rho_n).
\label{eq:empirical-L-star}
\end{equation}

\begin{corollary}[Gaussian-design recovery with an oracle step size]
\label{cor:gaussian-gradient-descent-recovery}
Suppose
\[
\|\mathcal{T}_n-\mathcal{T}\|_{\mathrm{op}}
\le
1
\]
and
\[
d_{\mathrm P}(U_0,U_\star)
\le
\rho_n.
\]
If
\begin{equation}
0<\eta
\le
\frac{\alpha_n}{(L_n^\star)^2},
\label{eq:empirical-oracle-step-size}
\end{equation}
then every iterate has full column rank and
\begin{equation}
d_{\mathrm P}(U_k,U_\star)
\le
\left(
1-
\frac{\eta\lambda_r(Q_\star)}4
\right)^k
d_{\mathrm P}(U_0,U_\star).
\label{eq:empirical-gradient-descent-rate}
\end{equation}
\end{corollary}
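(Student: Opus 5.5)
This corollary is the specialization of \cref{thm:local-gradient-descent-convergence} to the empirical operator $\mathscr{T}=\mathcal{T}_n$. The plan is to verify that the abstract hypotheses hold with explicit constants and then translate the conclusions.

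First, on the event $\|\mathcal{T}_n-\mathcal{T}\|_{\mathrm{op}}\le1$, \cref{cor:global-empirical-coercivity} gives \cref{eq:unit-accuracy-empirical-bounds}. This is exactly \cref{eq:abstract-operator-bounds} with $m=1$ and $M=d+3$. Since $\mathcal{T}_n$ is self-adjoint and positive semidefinite, the abstract framework of \cref{subsec:deterministic-local-regularity} applies.

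Next, with these choices the abstract constants reduce to their empirical counterparts. We have $\rho_\star=\sigma_\star/(4(d+3))=\rho_n$, and $\alpha_\star=\sigma_\star^2/2=\lambda_r(Q_\star)/2=\alpha_n$. Also $L_\star=2(d+3)(2\beta_\star+\rho_n)(\beta_\star+\rho_n)=L_n^\star$. Consequently the basin hypothesis $d_{\mathrm P}(U_0,U_\star)\le\rho_n$ is \cref{eq:discrete-initial-basin}, and the step-size condition \cref{eq:empirical-oracle-step-size} is \cref{eq:abstract-oracle-step-size}.

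The iteration \cref{eq:abstract-factor-gradient-descent} with $\mathscr{T}=\mathcal{T}_n$ coincides with ordinary factor gradient descent on $L_n$, by \cref{eq:factor-gradient-gaussian}. Then \cref{thm:local-gradient-descent-convergence} yields full column rank of every iterate. It also yields \cref{eq:discrete-factor-convergence} with contraction factor $1-\eta\alpha_n/2=1-\eta\lambda_r(Q_\star)/4$, which is \cref{eq:empirical-gradient-descent-rate}.

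There is no genuine obstacle here. The only point to check carefully is that the identifications $\rho_\star=\rho_n$ and $L_\star=L_n^\star$ are exact rather than inequalities. This matters because the basin and step-size conditions are then literally the abstract ones, and no monotonicity argument in $m$ or $M$ is needed.
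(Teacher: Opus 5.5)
Your proposal is correct and follows the paper's proof, which consists of applying \cref{thm:local-gradient-descent-convergence} with $m=1$ and $M=d+3$. Your explicit check that $\rho_\star=\rho_n$, $\alpha_\star=\alpha_n$, and $L_\star=L_n^\star$ hold as exact identities, together with the identification of the abstract iteration with factor gradient descent on $L_n$, is precisely what that one-line argument leaves implicit.
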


\begin{proof}
Apply \cref{thm:local-gradient-descent-convergence} with
\[
m=1,
\qquad
M=d+3.
\]
\end{proof}

\begin{remark}[Oracle nature of the step-size condition]
\label{rem:oracle-step-size}
The sufficient bound in
\cref{eq:empirical-oracle-step-size} depends on the unknown signal
quantities
$\lambda_r(Q_\star)$
and
$\lambda_1(Q_\star)$.
It is therefore an oracle step-size guarantee. No data-driven
learning-rate rule is claimed in this section.
\end{remark}

\subsection{End-to-end recovery}
\label{subsec:end-to-end-recovery}

\begin{theorem}[End-to-end recovery with a moment initializer and an oracle step size]
\label{thm:end-to-end-gaussian-recovery}
Let
$Q_\star\in\mathcal{M}_{r}^{+}$,
and define
\[
\sigma_\star
=
\sqrt{\lambda_r(Q_\star)},
\qquad
\beta_\star
=
\sqrt{\lambda_1(Q_\star)},
\]
\begin{equation}
\rho_n
=
\frac{
\sqrt{\lambda_r(Q_\star)}
}{
4(d+3)
},
\label{eq:end-to-end-rho}
\end{equation}
and
\begin{equation}
N_\star(\delta)
:=
\frac{\kappa_d+2}{\delta}
\left(
\frac{
256r(d+3)^2\|Q_\star\|_F
}{
\lambda_r(Q_\star)
}
\right)^2.
\label{eq:end-to-end-explicit-sample-bound}
\end{equation}
Let $0<\delta<1$, and suppose
\begin{equation}
n
\ge
N_\star(\delta).
\label{eq:end-to-end-sample-assumption}
\end{equation}
Construct $M_n$, $Q_0$, and $U_0$ according to
\cref{eq:moment-matrix,eq:spectral-initial-predictor,eq:spectral-initial-factor}.
Then, with probability at least $1-\delta$, the following
statements hold simultaneously.

\begin{enumerate}
\item
\emph{Full-space empirical regularity.}
\begin{equation}
\|\mathcal{T}_n-\mathcal{T}\|_{\mathrm{op}}
\le
\varepsilon_\star
<
1.
\label{eq:end-to-end-operator-event}
\end{equation}

\item
\emph{Effective curvature.}
For every nonzero
$\xi\in T_{Q_\star}\mathcal{M}_{r}^{+}$,
\begin{equation}
    2\lambda_r(Q_\star)
    \le
    \frac{
        \operatorname{Hess}_g
        \ell_n(Q_\star)[\xi,\xi]
    }{
        g_{Q_\star}(\xi,\xi)
    }
    \le
    4(d+3)\lambda_1(Q_\star).
    \label{eq:end-to-end-effective-curvature}
\end{equation}

\item
\emph{Spectral initialization.}
\begin{equation}
    Q_0\in\mathcal{M}_{r}^{+},
    \qquad
    U_0\in\mathcal{F}_{d,r},
    \label{eq:end-to-end-initial-rank}
\end{equation}
and
\begin{equation}
    d_{\mathrm P}(U_0,U_\star)
    \le
    \frac{\rho_n}{2}.
    \label{eq:end-to-end-initial-distance}
\end{equation}

\item
\emph{Gradient flow.}
The ordinary Euclidean factor gradient flow exists globally,
remains full rank, and satisfies
\begin{equation}
    d_{\mathrm P}(U(t),U_\star)
    \le
    e^{-\lambda_r(Q_\star)t/2}
    d_{\mathrm P}(U_0,U_\star)
    \label{eq:end-to-end-flow-factor-rate}
\end{equation}
for every $t\ge0$.

\item
\emph{Gradient descent with an oracle step size.}
If
\begin{equation}
    0<\eta
    \le
    \frac{
        \lambda_r(Q_\star)/2
    }{
        \left[
            2(d+3)
            (2\beta_\star+\rho_n)
            (\beta_\star+\rho_n)
        \right]^2
    },
    \label{eq:end-to-end-oracle-step-size}
\end{equation}
then every iterate has full column rank and
\begin{equation}
    d_{\mathrm P}(U_k,U_\star)
    \le
    \left(
        1-
        \frac{\eta\lambda_r(Q_\star)}4
    \right)^k
    d_{\mathrm P}(U_0,U_\star).
    \label{eq:end-to-end-gd-factor-rate}
\end{equation}

\item
\emph{Predictor recovery.}
Define
\begin{equation}
    C_\star
    :=
    \frac{
        2\beta_\star+\rho_n
    }{
        \sigma_\star
    }.
    \label{eq:end-to-end-predictor-constant}
\end{equation}
Then
\begin{equation}
\begin{aligned}
    \|U(t)U(t)^\top-Q_\star\|_F
    &\le
    C_\star
    e^{-\lambda_r(Q_\star)t/2}
    \\
    &\quad\times
    \|Q_0-Q_\star\|_F,
    \label{eq:end-to-end-flow-predictor-rate}
\end{aligned}
\end{equation}
and, under
\cref{eq:end-to-end-oracle-step-size},
\begin{equation}
\begin{aligned}
    \|U_kU_k^\top-Q_\star\|_F
    &\le
    C_\star
    \left(
        1-
        \frac{\eta\lambda_r(Q_\star)}4
    \right)^k
    \\
    &\quad\times
    \|Q_0-Q_\star\|_F.
    \label{eq:end-to-end-gd-predictor-rate}
\end{aligned}
\end{equation}

\end{enumerate}
\end{theorem}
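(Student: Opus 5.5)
The proof is an assembly of the earlier results on a single high-probability event. We take the event of \cref{cor:explicit-high-probability-initialization}. Under \cref{eq:end-to-end-sample-assumption}, it holds with probability at least $1-\delta$. On it, \cref{eq:high-probability-operator-event,eq:high-probability-mean-event} hold, and \cref{eq:epsilon-star-less-than-one} gives $\varepsilon_\star<1$. This is item~1. Item~3 is exactly the conclusion of \cref{thm:initialization-enters-basin}, as restated in that corollary.

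For item~2 we combine three earlier results. \cref{thm:effective-hessian} identifies $\operatorname{Hess}_g\ell_n(Q_\star)[\xi,\xi]$ with $\langle\xi,\mathcal{T}_n(\xi)\rangle$. \cref{cor:global-empirical-coercivity} with $\varepsilon=\varepsilon_\star<1$ gives
\[
(2-\varepsilon_\star)\|\xi\|_F^2\le\langle\xi,\mathcal{T}_n(\xi)\rangle\le(d+3)\|\xi\|_F^2 .
\]
\cref{prop:quotient-predictor-norm-comparison} converts $\|\xi\|_F^2$ into $g_{Q_\star}(\xi,\xi)$, with $\|\xi\|_F^2\ge2\lambda_r(Q_\star)g_{Q_\star}(\xi,\xi)$ and $\|\xi\|_F^2\le4\lambda_1(Q_\star)g_{Q_\star}(\xi,\xi)$. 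The upper bound is then $4(d+3)\lambda_1(Q_\star)$. The lower bound is $2(2-\varepsilon_\star)\lambda_r(Q_\star)\ge2\lambda_r(Q_\star)$, since $2-\varepsilon_\star\ge1$. This is \cref{eq:empirical-effective-lower}, weakened.

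For items~4 and~5 we note that $\varepsilon_\star<1$ implies the unit event \cref{eq:unit-operator-event}. \cref{cor:global-empirical-coercivity} then permits $m=1$ and $M=d+3$, so $\rho_\star=\rho_n$ and $\alpha_\star=\lambda_r(Q_\star)/2$. Item~3 gives $d_{\mathrm P}(U_0,U_\star)\le\rho_n/2\le\rho_n$. Then \cref{cor:gaussian-gradient-flow-recovery} yields item~4 verbatim. \cref{cor:gaussian-gradient-descent-recovery} yields item~5, because \cref{eq:end-to-end-oracle-step-size} is exactly $\eta\le\alpha_n/(L_n^\star)^2$ after expanding \cref{eq:empirical-alpha,eq:empirical-L-star}. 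The rate $1-\eta\alpha_\star/2$ equals $1-\eta\lambda_r(Q_\star)/4$.

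Item~6 follows from \cref{thm:local-gradient-flow-convergence,thm:local-gradient-descent-convergence}, specialized to $\rho_\star=\rho_n$. Concretely, we combine \cref{lem:factor-predictor-local-comparison} at time $t$ (resp.\ iterate $k$) with the lower comparison $\sigma_\star d_{\mathrm P}(U_0,U_\star)\le\|Q_0-Q_\star\|_F$ at time $0$, using $U_0U_0^\top=Q_0$. The lemma applies at both times because all iterates and flow states lie in the $\rho_n$-ball.

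No step is a genuine obstacle, since everything is bookkeeping. The point most needing care is that all six statements must hold on one event, and that event must have probability at least $1-\delta$. This is exactly why we use the single union-bound event of \cref{cor:explicit-high-probability-initialization}. The other statements are deterministic consequences of it, once we check that $\varepsilon_\star<1$ implies the unit-accuracy event used in the convergence corollaries.
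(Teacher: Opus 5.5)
Your proposal is correct and follows the paper's proof essentially step for step. Everything is derived on the single union-bound event of the explicit high-probability initialization corollary; the curvature bounds come from the effective-Hessian identity combined with empirical coercivity and the quotient/predictor norm comparison; and items 4--6 follow from the gradient-flow and gradient-descent corollaries with $m=1$, $M=d+3$, together with the local factor--predictor comparison lemma. Your explicit checks are also right, namely that $\varepsilon_\star<1$ yields the unit-accuracy event and that the stated step-size bound equals $\alpha_n/(L_n^\star)^2$; this is bookkeeping the paper leaves implicit.
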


\begin{proof}
By
\cref{eq:end-to-end-sample-assumption,cor:explicit-high-probability-initialization},
with probability at least $1-\delta$,
\begin{equation}
\|\mathcal{T}_n-\mathcal{T}\|_{\mathrm{op}}
\le
\varepsilon_\star
<
1,
\label{eq:end-to-end-proof-operator-event}
\end{equation}
\begin{equation}
\left|
\overline y-\operatorname{tr}(Q_\star)
\right|
\le
\varepsilon_\star\|Q_\star\|_F,
\label{eq:end-to-end-mean-event}
\end{equation}
and
\[
d_{\mathrm P}(U_0,U_\star)
\le
\frac{\rho_n}{2}.
\]
The same event also gives
$Q_0\in\mathcal{M}_{r}^{+}$ and
$U_0\in\mathcal{F}_{d,r}$.

By
\cref{thm:effective-hessian,cor:global-empirical-coercivity},
for every
$\xi\in T_{Q_\star}\mathcal{M}_{r}^{+}$,
\[
\begin{aligned}
\operatorname{Hess}_g
\ell_n(Q_\star)[\xi,\xi]
&=
\langle\xi,\mathcal{T}_n(\xi)\rangle
\\
&\ge
(2-\varepsilon_\star)\|\xi\|_F^2.
\end{aligned}
\]
Using
\cref{prop:quotient-predictor-norm-comparison} and
$\varepsilon_\star<1$,
\[
\begin{aligned}
\operatorname{Hess}_g
\ell_n(Q_\star)[\xi,\xi]
&\ge
2(2-\varepsilon_\star)
\lambda_r(Q_\star)
g_{Q_\star}(\xi,\xi)
\\
&\ge
2\lambda_r(Q_\star)
g_{Q_\star}(\xi,\xi).
\end{aligned}
\]
Similarly,
\[
\begin{aligned}
\operatorname{Hess}_g
\ell_n(Q_\star)[\xi,\xi]
&\le
(d+2+\varepsilon_\star)\|\xi\|_F^2
\\
&\le
4(d+3)\lambda_1(Q_\star)
g_{Q_\star}(\xi,\xi).
\end{aligned}
\]
This proves
\cref{eq:end-to-end-effective-curvature}.

Since
\[
d_{\mathrm P}(U_0,U_\star)
\le
\frac{\rho_n}{2}
<
\rho_n,
\]
\cref{cor:gaussian-gradient-flow-recovery} gives
\cref{eq:end-to-end-flow-factor-rate}, and
\cref{cor:gaussian-gradient-descent-recovery} gives
\cref{eq:end-to-end-gd-factor-rate} under
\cref{eq:end-to-end-oracle-step-size}.

Finally,
\cref{lem:factor-predictor-local-comparison} gives the predictor
bounds
\cref{eq:end-to-end-flow-predictor-rate,eq:end-to-end-gd-predictor-rate}.
\end{proof}

\begin{remark}[Interpretation of the recovery theorem]
\label{rem:effective-curvature-versus-trajectory}
The effective Hessian in
\cref{thm:effective-hessian} identifies the infinitesimal quotient
curvature at the interpolating solution. The convergence of the
actual factor trajectory is controlled by the neighborhood-level
secant inequality in \cref{thm:local-secant-regularity}, which is
derived from the same full-space empirical-operator bound. Thus the
gradient-flow and gradient-descent conclusions do not follow merely
from the Hessian spectrum at a single point.

The sample requirement in
\cref{thm:end-to-end-gaussian-recovery} is an explicit sufficient
condition obtained from full-space second-moment control. It is not a
tangent-space concentration result and is not asserted to have
near-optimal dependence on $d$, $r$, or $\delta$.
\end{remark}

\section{Trace Bias under Commuting Underdetermined Measurements}
\label{sec:commuting-trace-bias}

We study the interpolant selected by the PSD-factorized model when the
measurements do not uniquely identify the predictor. Throughout this
section, we set
\begin{equation}
    r=d,
    \label{eq:commuting-full-factor-width}
\end{equation}
so that the factor variable belongs to
\(\mathbb{R}^{d\times d}\).

Let
\[
    A_1,\ldots,A_n\in\mathbb{S}^d
\]
be pairwise commuting:
\begin{equation}
    A_iA_j=A_jA_i
    \qquad
    \text{for all }1\le i,j\le n.
    \label{eq:commuting-measurements}
\end{equation}
Since the matrices are real symmetric and commute, there is an
orthogonal decomposition
\begin{equation}
    \mathbb{R}^d
    =
    E_1\oplus\cdots\oplus E_m
    \label{eq:joint-eigenspace-decomposition}
\end{equation}
into maximal joint eigenspaces. Let \(P_a\) be the orthogonal
projector onto \(E_a\), and let
\begin{equation}
    d_a:=\operatorname{rank}(P_a).
    \label{eq:joint-eigenspace-multiplicity}
\end{equation}
Then
\begin{equation}
    P_aP_b=\delta_{ab}P_a,
    \qquad
    \sum_{a=1}^{m}P_a=I_d.
    \label{eq:joint-projector-relations}
\end{equation}
For every \(i\), there are scalars \(c_{ia}\in\mathbb{R}\) such that
\begin{equation}
    A_i
    =
    \sum_{a=1}^{m}c_{ia}P_a.
    \label{eq:measurement-joint-spectral-form}
\end{equation}
Maximality of the joint eigenspaces means that the vectors
\[
    (c_{1a},\ldots,c_{na})\in\mathbb{R}^n
\]
are distinct for distinct values of \(a\). Thus the projectors
\(P_a\) and multiplicities \(d_a\) are intrinsic, up to a permutation
of the index \(a\).

Define
\begin{equation}
    B\in\mathbb{R}^{n\times m},
    \qquad
    B_{ia}:=d_ac_{ia}.
    \label{eq:reduced-measurement-matrix}
\end{equation}
We assume
\begin{equation}
    1\le s:=\operatorname{rank}(B)<m.
    \label{eq:underdetermined-rank-condition}
\end{equation}
No measurement equation is deleted when the rows of \(B\) are
linearly dependent. Deleting or reweighting redundant equations
generally changes the squared loss and therefore changes the training
dynamics.

The factorized objective is
\begin{equation}
    L(U)
    :=
    \frac{1}{2n}
    \sum_{i=1}^{n}
    \bigl(
        \langle A_i,UU^\top\rangle-y_i
    \bigr)^2,
    \qquad
    U\in\mathbb{R}^{d\times d}.
    \label{eq:commuting-factor-loss}
\end{equation}

For \(q\in\mathbb{R}^m\), define
\begin{equation}
    Q(q)
    :=
    \sum_{a=1}^{m}q_aP_a.
    \label{eq:joint-spectral-predictor-map}
\end{equation}
Then
\begin{equation}
\begin{aligned}
    \langle A_i,Q(q)\rangle
    &=
    \sum_{a=1}^{m}
    c_{ia}q_a\operatorname{tr}(P_a)
    \\
    &=
    \sum_{a=1}^{m}d_ac_{ia}q_a
    \\
    &=
    [Bq]_i.
    \label{eq:reduced-measurement-identity}
\end{aligned}
\end{equation}

Define
\begin{equation}
    \mathcal{F}_{+}
    :=
    \left\{
        q\in\mathbb{R}_{+}^{m}:Bq=y
    \right\},
    \label{eq:closed-positive-feasible-set}
\end{equation}
and
\begin{equation}
    \mathcal{F}_{++}
    :=
    \left\{
        q\in\mathbb{R}_{++}^{m}:Bq=y
    \right\}.
    \label{eq:strict-positive-feasible-set}
\end{equation}
We assume
\begin{equation}
    \mathcal{F}_{++}\neq\varnothing.
    \label{eq:strict-feasibility-assumption}
\end{equation}
This implies \(y\in\operatorname{range}(B)\), but does not require
\(B\) to have full row rank. Moreover, because
\(\ker B\neq\{0\}\) and \(\mathcal{F}_{++}\neq\varnothing\), the
strictly positive feasible set contains infinitely many points.

Let
\begin{equation}
    \sigma_B^{+}
    :=
    \min
    \left\{
        \sigma_j(B):\sigma_j(B)>0
    \right\}
    \label{eq:smallest-positive-singular-value}
\end{equation}
denote the smallest positive singular value of \(B\).

We first allow an arbitrary strictly positive initialization
\begin{equation}
    q^0\in\mathbb{R}_{++}^{m},
    \qquad
    Q^0:=Q(q^0),
    \label{eq:general-positive-spectral-initialization}
\end{equation}
and use the canonical factor representative
\begin{equation}
    U^0
    :=
    Q(q^0)^{1/2}
    =
    \sum_{a=1}^{m}\sqrt{q_a^0}\,P_a.
    \label{eq:general-positive-factor-representative}
\end{equation}
The isotropic family
\(q_\varepsilon^0=\varepsilon^2\mathbf{1}\) is introduced later when
the small-initialization limit is studied.

\subsection{Intrinsic reduction to the joint spectral algebra}
\label{subsec:intrinsic-commuting-reduction}

Define the joint spectral algebra
\begin{equation}
    \mathscr{A}
    :=
    \left\{
        \sum_{a=1}^{m}z_aP_a:
        z\in\mathbb{R}^{m}
    \right\}.
    \label{eq:joint-spectral-algebra}
\end{equation}
Every matrix in \(\mathscr{A}\) acts as a scalar on each maximal
joint eigenspace \(E_a\). Its definition is therefore independent of
the choice of an orthonormal basis inside \(E_a\).

\begin{theorem}[Invariant joint-spectral reduction]
\label{thm:invariant-joint-spectral-reduction}
Let \(U:[0,T_{\max})\to\mathbb{R}^{d\times d}\) be the maximal
solution of the Euclidean parameter gradient flow
\begin{equation}
    \dot U(t)=-\nabla L(U(t)),
    \qquad
    U(0)=U^0,
    \label{eq:commuting-factor-gradient-flow}
\end{equation}
where \(0<T_{\max}\le\infty\). Then there exist unique functions
\[
    u_a:[0,T_{\max})\to\mathbb{R}_{++},
    \qquad
    1\le a\le m,
\]
such that
\begin{equation}
    U(t)
    =
    \sum_{a=1}^{m}u_a(t)P_a
    \label{eq:joint-spectral-factor-trajectory}
\end{equation}
for every \(t\in[0,T_{\max})\). Consequently,
\begin{equation}
    Q(t)
    :=
    U(t)U(t)^\top
    =
    \sum_{a=1}^{m}q_a(t)P_a,
    \qquad
    q_a(t):=u_a(t)^2.
    \label{eq:joint-spectral-predictor-trajectory}
\end{equation}

Define
\begin{equation}
    f(q)
    :=
    \frac{1}{2n}\|Bq-y\|_2^2
    \label{eq:reduced-quadratic-loss}
\end{equation}
and
\begin{equation}
    g(q)
    :=
    \nabla f(q)
    =
    \frac1nB^\top(Bq-y).
    \label{eq:reduced-quadratic-gradient}
\end{equation}
Then
\begin{equation}
    \dot u_a(t)
    =
    -\frac{2}{d_a}
    g_a(q(t))u_a(t),
    \label{eq:reduced-factor-flow}
\end{equation}
and
\begin{equation}
    \dot q_a(t)
    =
    -\frac{4q_a(t)}{d_a}
    g_a(q(t)),
    \qquad
    1\le a\le m.
    \label{eq:reduced-predictor-flow}
\end{equation}
In particular,
\begin{equation}
    u_a(t)
    =
    \sqrt{q_a^0}
    \exp
    \left(
        -\frac{2}{d_a}
        \int_0^t g_a(q(s))\,ds
    \right)
    >
    0
    \label{eq:continuous-factor-positivity}
\end{equation}
for every \(t<T_{\max}\), and hence
\begin{equation}
    q(t)\in\mathbb{R}_{++}^{m}
    \qquad
    \text{for every }t<T_{\max}.
    \label{eq:continuous-predictor-positivity}
\end{equation}

The same invariant reduction holds for parameter gradient descent
\begin{equation}
    U_{k+1}=U_k-\eta\nabla L(U_k).
    \label{eq:commuting-factor-gradient-descent}
\end{equation}
If
\[
    U_k
    =
    \sum_{a=1}^{m}u_{a,k}P_a,
    \qquad
    q_{a,k}:=u_{a,k}^2,
\]
then
\begin{equation}
    u_{a,k+1}
    =
    u_{a,k}
    \left(
        1-\frac{2\eta}{d_a}g_a(q_k)
    \right),
    \label{eq:reduced-factor-gradient-descent}
\end{equation}
and
\begin{equation}
    q_{a,k+1}
    =
    q_{a,k}
    \left(
        1-\frac{2\eta}{d_a}g_a(q_k)
    \right)^2.
    \label{eq:reduced-predictor-gradient-descent}
\end{equation}
\end{theorem}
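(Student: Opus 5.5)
The plan is an invariance-plus-uniqueness argument. I would first show that the algebra $\mathscr{A}$ is invariant under the factor vector field, then solve the reduced ODE inside $\mathscr{A}$, and finally invoke uniqueness for the full ODE.

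Take $U=\sum_a u_aP_a\in\mathscr{A}$. By \cref{eq:reduced-measurement-identity}, $\langle A_i,UU^\top\rangle=[Bq]_i$ with $q_a=u_a^2$. By \cref{lem:factor-gradient} and \cref{eq:measurement-joint-spectral-form}, the predictor gradient is
\[
G(Q(q))=\frac1n\sum_i([Bq]_i-y_i)A_i=\sum_a\Bigl(\frac1n\sum_i c_{ia}([Bq]_i-y_i)\Bigr)P_a .
\]
Since $B_{ia}=d_ac_{ia}$, the coefficient of $P_a$ equals $g_a(q)/d_a$. The relations $P_aP_b=\delta_{ab}P_a$ then give
\[
\nabla L(U)=2G\,U=\sum_a\frac{2}{d_a}\,g_a(q)\,u_a\,P_a\in\mathscr{A}.
\]

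Next, consider the finite-dimensional ODE system $\dot u_a=-\frac{2}{d_a}g_a(u^2)u_a$ with $u_a(0)=\sqrt{q_a^0}$. Its right-hand side is polynomial, so it has a unique maximal solution. The matrix curve $\widetilde U(t)=\sum_a u_a(t)P_a$ solves \cref{eq:commuting-factor-gradient-flow} with the same initial data $U^0$ from \cref{eq:general-positive-factor-representative}. By uniqueness for the smooth matrix ODE, $\widetilde U=U$ wherever both are defined.

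The main point of care is the matching of maximal intervals. Since $\mathscr{A}$ is a closed linear subspace, the norm of $u$ is comparable to $\|\widetilde U\|_F$ via $\|\widetilde U\|_F^2=\sum_a d_au_a^2$. Blow-up of one solution is therefore equivalent to blow-up of the other, and the two maximal existence times coincide. Uniqueness of the coefficients $u_a$ follows because the $P_a$ are linearly independent nonzero mutually orthogonal projectors, so $u_a=\operatorname{tr}(P_aU)/d_a$. Then $Q=UU^\top=\sum_a u_a^2P_a$, because each $P_a$ is symmetric and idempotent.

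The scalar equations are linear in $u_a$ along the known trajectory $g_a(q(t))$. Solving them gives the exponential formula \cref{eq:continuous-factor-positivity}, which is strictly positive since $q_a^0>0$; this yields \cref{eq:continuous-predictor-positivity}. Differentiating $q_a=u_a^2$ gives $\dot q_a=2u_a\dot u_a$, which is \cref{eq:reduced-predictor-flow}.

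For gradient descent, I would argue by induction on $k$. If $U_k\in\mathscr{A}$, the same gradient computation gives
\[
U_{k+1}=\sum_a u_{a,k}\Bigl(1-\frac{2\eta}{d_a}g_a(q_k)\Bigr)P_a,
\]
which is \cref{eq:reduced-factor-gradient-descent}. Squaring each coefficient gives \cref{eq:reduced-predictor-gradient-descent}. In the discrete case no positivity is claimed, consistent with the statement.
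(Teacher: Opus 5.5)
Your proposal is correct and follows essentially the same route as the paper: you compute that $\nabla L(U)=2G(Q)U$ stays in $\mathscr{A}$ with coefficients $\frac{2}{d_a}g_a(q)u_a$, then use uniqueness of the ODE solution to keep the trajectory in $\mathscr{A}$, integrate the scalar linear equations to get positivity, and handle gradient descent by direct computation. Your explicit matching of maximal intervals and your uniqueness argument for the coefficients via $u_a=\operatorname{tr}(P_aU)/d_a$ fill in details the paper leaves implicit.
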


\begin{proof}
Suppose
\[
    U=\sum_{a=1}^{m}u_aP_a.
\]
Then
\[
    Q=UU^\top
    =
    \sum_{a=1}^{m}u_a^2P_a.
\]
Writing \(q_a=u_a^2\), \cref{eq:reduced-measurement-identity}
gives
\[
    \langle A_i,Q\rangle=[Bq]_i.
\]

The predictor gradient is
\begin{equation}
    G(Q)
    :=
    \frac1n
    \sum_{i=1}^{n}
    \bigl(
        [Bq]_i-y_i
    \bigr)A_i.
    \label{eq:commuting-predictor-gradient}
\end{equation}
Using \cref{eq:measurement-joint-spectral-form},
\begin{equation}
    G(Q)
    =
    \sum_{a=1}^{m}\gamma_a(q)P_a,
    \label{eq:commuting-gradient-spectral-form}
\end{equation}
where
\begin{equation}
\begin{aligned}
    \gamma_a(q)
    &=
    \frac1n
    \sum_{i=1}^{n}
    c_{ia}
    \bigl(
        [Bq]_i-y_i
    \bigr)
    \\
    &=
    \frac{1}{nd_a}
    [B^\top(Bq-y)]_a
    \\
    &=
    \frac{1}{d_a}g_a(q).
    \label{eq:commuting-gradient-coefficient}
\end{aligned}
\end{equation}

By \cref{lem:factor-gradient},
\[
    \nabla L(U)=2G(Q)U.
\]
The projector relations in \cref{eq:joint-projector-relations} give
\begin{equation}
    \nabla L(U)
    =
    2
    \sum_{a=1}^{m}
    \frac{g_a(q)}{d_a}
    u_aP_a.
    \label{eq:commuting-factor-gradient-spectral-form}
\end{equation}
Thus the vector field is tangent to \(\mathscr{A}\). Since
\(U(0)=U^0\in\mathscr{A}\), uniqueness of the maximal solution
implies \cref{eq:joint-spectral-factor-trajectory} on
\([0,T_{\max})\).

\cref{eq:commuting-factor-gradient-spectral-form} gives
\cref{eq:reduced-factor-flow}. Since \(q_a=u_a^2\),
\[
    \dot q_a
    =
    2u_a\dot u_a
    =
    -\frac{4q_a}{d_a}g_a(q),
\]
which proves \cref{eq:reduced-predictor-flow}. Solving the scalar
linear equation for \(u_a\) gives
\cref{eq:continuous-factor-positivity}.

For gradient descent,
\[
\begin{aligned}
    U_{k+1}
    &=
    U_k
    -
    2\eta G(Q_k)U_k
    \\
    &=
    \sum_{a=1}^{m}
    u_{a,k}
    \left(
        1-\frac{2\eta}{d_a}g_a(q_k)
    \right)P_a.
\end{aligned}
\]
This proves \cref{eq:reduced-factor-gradient-descent}; squaring the
scalar coefficients gives
\cref{eq:reduced-predictor-gradient-descent}.
\end{proof}

\begin{remark}[Independence of the factor representative]
\label{rem:spectral-representative-independence}
The substantive initialization is the predictor
\(Q(0)=Q^0\), not the canonical factor in
\cref{eq:general-positive-factor-representative}. Because \(Q^0\) is
positive definite, every square factor
\(\widetilde U^0\) satisfying
\(\widetilde U^0(\widetilde U^0)^\top=Q^0\) has the form
\[
    \widetilde U^0=U^0R
    \qquad
    \text{for some }R\in O(d).
\]
Right-orthogonal equivariance of the factor dynamics gives
\[
    \widetilde U(t)=U(t)R
\]
on the common interval of existence, and therefore
\[
    \widetilde U(t)\widetilde U(t)^\top
    =
    U(t)U(t)^\top.
\]
Thus the predictor trajectory depends only on \(Q^0\).
\end{remark}

\subsection{Weighted mirror-flow structure}
\label{subsec:weighted-mirror-flow}

Define
\begin{equation}
    h(q)
    :=
    \frac14
    \sum_{a=1}^{m}
    d_a
    \bigl(
        q_a\log q_a-q_a
    \bigr),
    \qquad
    q\in\mathbb{R}_{++}^{m}.
    \label{eq:weighted-entropy-potential}
\end{equation}
Then
\begin{equation}
    [\nabla h(q)]_a
    =
    \frac{d_a}{4}\log q_a,
    \label{eq:weighted-entropy-gradient}
\end{equation}
and
\begin{equation}
    \nabla^2h(q)
    =
    \frac14
    \operatorname{diag}
    \left(
        \frac{d_1}{q_1},
        \ldots,
        \frac{d_m}{q_m}
    \right).
    \label{eq:weighted-entropy-hessian}
\end{equation}

\begin{proposition}[Exact weighted mirror flow]
\label{prop:exact-weighted-mirror-flow}
The reduced dynamics in \cref{eq:reduced-predictor-flow} satisfy
\begin{equation}
    \dot q
    =
    -
    \bigl(
        \nabla^2h(q)
    \bigr)^{-1}
    \nabla f(q).
    \label{eq:weighted-mirror-flow}
\end{equation}
Equivalently,
\begin{equation}
    \frac{d}{dt}\nabla h(q(t))
    =
    -\nabla f(q(t))
    =
    -\frac1nB^\top(Bq(t)-y).
    \label{eq:weighted-dual-flow}
\end{equation}
\end{proposition}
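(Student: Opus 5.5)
The plan is a direct componentwise computation using the reduced dynamics of \cref{thm:invariant-joint-spectral-reduction}. By \cref{eq:continuous-predictor-positivity}, $q(t)\in\mathbb{R}_{++}^m$ for every $t<T_{\max}$. Hence $h$ is smooth along the trajectory, and the diagonal Hessian in \cref{eq:weighted-entropy-hessian} is positive definite there. Its inverse is
\[
\bigl(\nabla^2h(q)\bigr)^{-1}
=
4\operatorname{diag}\left(\frac{q_1}{d_1},\ldots,\frac{q_m}{d_m}\right).
\]

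For \cref{eq:weighted-mirror-flow}, I will take the $a$-th component of the right-hand side. This gives $-\frac{4q_a}{d_a}[\nabla f(q)]_a$. Since $\nabla f=g$ by \cref{eq:reduced-quadratic-gradient}, this is exactly the right-hand side of \cref{eq:reduced-predictor-flow}.

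For the dual form \cref{eq:weighted-dual-flow}, I will differentiate \cref{eq:weighted-entropy-gradient} along the trajectory with the chain rule. Because $q_a(t)>0$ and $q_a(t)$ is $C^1$,
\[
\frac{d}{dt}\Bigl(\frac{d_a}{4}\log q_a(t)\Bigr)
=
\frac{d_a}{4}\frac{\dot q_a(t)}{q_a(t)}
=
-g_a(q(t)),
\]
where the last equality substitutes \cref{eq:reduced-predictor-flow}. In vector form this reads $\frac{d}{dt}\nabla h(q)=\nabla^2h(q)\dot q=-\nabla f(q)$. That is the claimed identity, and the explicit expression $-\frac1nB^\top(Bq-y)$ follows from \cref{eq:reduced-quadratic-gradient}.

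The equivalence of the two forms follows because $\nabla^2h(q)$ is invertible on $\mathbb{R}_{++}^m$, so one can multiply by it or by its inverse.

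There is no substantive obstacle here. The only point needing care is the domain issue: $\log q_a$ and $\nabla^2 h$ are defined only for strictly positive $q$. This is exactly why I invoke the positivity statement \cref{eq:continuous-predictor-positivity}, which in turn comes from the explicit exponential formula \cref{eq:continuous-factor-positivity}. The identities are then asserted on $[0,T_{\max})$.
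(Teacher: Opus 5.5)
Your proposal is correct and is essentially the paper's proof. You invert the diagonal Hessian \cref{eq:weighted-entropy-hessian}, substitute $\nabla f=g$ to recover \cref{eq:reduced-predictor-flow}, and multiply through by $\nabla^2h(q)$ (equivalently, apply the chain rule to $\nabla h(q(t))$) to obtain the dual form; your appeal to \cref{eq:continuous-predictor-positivity}, which guarantees that $\log q_a$ and the Hessian inverse are defined along the trajectory, makes explicit a point the paper leaves implicit.
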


\begin{proof}
By \cref{eq:weighted-entropy-hessian},
\[
    \bigl(
        \nabla^2h(q)
    \bigr)^{-1}
    =
    4
    \operatorname{diag}
    \left(
        \frac{q_1}{d_1},
        \ldots,
        \frac{q_m}{d_m}
    \right).
\]
Substitution of \cref{eq:reduced-quadratic-gradient} gives
\cref{eq:reduced-predictor-flow}. Multiplying by
\(\nabla^2h(q)\) gives \cref{eq:weighted-dual-flow}.
\end{proof}

The Bregman divergence generated by \(h\) is
\begin{equation}
    D_h(p,q)
    :=
    h(p)-h(q)-\langle\nabla h(q),p-q\rangle.
    \label{eq:weighted-bregman-divergence-definition}
\end{equation}
For \(p,q\in\mathbb{R}_{++}^{m}\),
\begin{equation}
    D_h(p,q)
    =
    \frac14
    \sum_{a=1}^{m}
    d_a
    \left[
        p_a\log\frac{p_a}{q_a}
        -
        p_a
        +
        q_a
    \right].
    \label{eq:weighted-bregman-divergence-formula}
\end{equation}
Whenever the second argument is strictly positive, we use the same
notation for the continuous extension to
\(p\in\mathbb{R}_{+}^{m}\), with \(0\log0:=0\).

Fix an arbitrary strictly positive feasible point
\begin{equation}
    q^\dagger\in\mathcal{F}_{++}.
    \label{eq:fixed-strict-feasible-point}
\end{equation}

\begin{lemma}[Bregman Lyapunov identity]
\label{lem:bregman-lyapunov-identity}
Along the mirror flow, for every \(t<T_{\max}\),
\begin{equation}
    \frac{d}{dt}
    D_h(q^\dagger,q(t))
    =
    -\frac1n
    \|Bq(t)-y\|_2^2.
    \label{eq:bregman-lyapunov-identity}
\end{equation}
\end{lemma}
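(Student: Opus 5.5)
We differentiate the Bregman divergence directly in its second argument and then use the dual form of the mirror flow from \cref{prop:exact-weighted-mirror-flow}. By \cref{thm:invariant-joint-spectral-reduction}, $q(t)\in\mathbb{R}_{++}^m$ for all $t<T_{\max}$, and $q$ is $C^1$ there. Hence $t\mapsto D_h(q^\dagger,q(t))$ is differentiable, since $h$ is smooth on the open orthant and both arguments lie in it.

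The first step is the general derivative identity. With $p=q^\dagger$ fixed, \cref{eq:weighted-bregman-divergence-definition} gives
\[
\frac{d}{dt}D_h(p,q(t))
=
-\langle\nabla h(q),\dot q\rangle
-\Bigl\langle\frac{d}{dt}\nabla h(q(t)),p-q\Bigr\rangle
+\langle\nabla h(q),\dot q\rangle .
\]
The first and third terms cancel, so the derivative equals $\langle \frac{d}{dt}\nabla h(q(t)),\,q(t)-p\rangle$.

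The second step substitutes \cref{eq:weighted-dual-flow}. This gives
\[
\frac{d}{dt}D_h(q^\dagger,q(t))
=
-\frac1n\bigl\langle B^\top(Bq-y),\,q-q^\dagger\bigr\rangle
=
-\frac1n\bigl\langle Bq-y,\,Bq-Bq^\dagger\bigr\rangle .
\]
Since $q^\dagger\in\mathcal{F}_{++}$, we have $Bq^\dagger=y$. The inner product therefore becomes $\|Bq-y\|_2^2$, which is \cref{eq:bregman-lyapunov-identity}.

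As a sanity check, one can instead differentiate the explicit formula \cref{eq:weighted-bregman-divergence-formula}. Its derivative in $q_a$ is $\frac{d_a}{4}(1-q^\dagger_a/q_a)\dot q_a$. Inserting \cref{eq:reduced-predictor-flow} gives $-\sum_a (q_a-q_a^\dagger)g_a(q)$, the same expression. There is no genuine obstacle. The only point needing care is justifying differentiability, namely strict positivity of $q(t)$ on $[0,T_{\max})$, and this is supplied by \cref{eq:continuous-predictor-positivity}.
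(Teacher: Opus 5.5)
Your proof is correct and takes essentially the same route as the paper. You differentiate $D_h(q^\dagger,q(t))$ in its second argument, insert the mirror-flow relation (you use the dual form $\frac{d}{dt}\nabla h(q)=-\nabla f(q)$, where the paper writes $\nabla^2h(q)\dot q=-\nabla f(q)$), and use $Bq^\dagger=y$ to turn $\langle\nabla f(q),q^\dagger-q\rangle$ into $-\frac1n\|Bq-y\|_2^2$.
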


\begin{proof}
Differentiating with respect to the second argument gives
\[
\begin{aligned}
    \frac{d}{dt}D_h(q^\dagger,q(t))
    &=
    -
    \left\langle
        \nabla^2h(q(t))\dot q(t),
        q^\dagger-q(t)
    \right\rangle
    \\
    &=
    \left\langle
        \nabla f(q(t)),
        q^\dagger-q(t)
    \right\rangle.
\end{aligned}
\]
Since \(Bq^\dagger=y\),
\[
\begin{aligned}
    \left\langle
        \nabla f(q(t)),
        q^\dagger-q(t)
    \right\rangle
    &=
    \frac1n
    \left\langle
        Bq(t)-y,
        Bq^\dagger-Bq(t)
    \right\rangle
    \\
    &=
    -\frac1n
    \|Bq(t)-y\|_2^2.
\end{aligned}
\]
\end{proof}

\begin{lemma}[Compact reverse-divergence levels]
\label{lem:compact-reverse-bregman-levels}
For fixed \(p\in\mathbb{R}_{++}^{m}\) and \(c<\infty\), the set
\begin{equation}
    \mathcal{K}(p,c)
    :=
    \left\{
        q\in\mathbb{R}_{++}^{m}:
        D_h(p,q)\le c
    \right\}
    \label{eq:reverse-bregman-level-set}
\end{equation}
is compact as a subset of \(\mathbb{R}^{m}\) and is contained in
\(\mathbb{R}_{++}^{m}\). In particular, there are constants
\begin{equation}
    0<
    \underline q(p,c)
    \le
    \overline q(p,c)
    <\infty
    \label{eq:reverse-level-coordinate-bounds}
\end{equation}
such that
\begin{equation}
    \underline q(p,c)
    \le
    q_a
    \le
    \overline q(p,c)
    \label{eq:reverse-level-coordinate-bound}
\end{equation}
for every \(q\in\mathcal{K}(p,c)\) and every \(a\).
\end{lemma}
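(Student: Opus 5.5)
The divergence is separable,
\[
D_h(p,q)=\frac14\sum_{a=1}^{m}d_a\,\phi_{p_a}(q_a),\qquad
\phi_{p_a}(s):=p_a\log\frac{p_a}{s}-p_a+s,\quad s>0,
\]
so I will reduce the lemma to a one-dimensional statement about $\phi_{p_a}$ and then assemble the coordinates.

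\emph{Scalar analysis.} For fixed $p_a>0$ the function $\phi_{p_a}$ is strictly convex on $(0,\infty)$, since $\phi_{p_a}''(s)=p_a/s^2>0$. It attains its minimum value $0$ at $s=p_a$. It tends to $+\infty$ as $s\downarrow0$, because of the $-p_a\log s$ term, and as $s\to\infty$, because of the linear term $s$. Hence each sublevel set $\{s>0:\phi_{p_a}(s)\le c'\}$ is a closed interval $[\underline s_a,\overline s_a]\subset(0,\infty)$. These endpoints can be written explicitly by solving $\phi_{p_a}(s)=c'$ on either side of $p_a$, via Lambert-$W$, but existence is all that is needed.

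\emph{Coordinate bounds.} All summands are nonnegative and $d_a\ge1$, so $D_h(p,q)\le c$ forces $\phi_{p_a}(q_a)\le 4c/d_a\le 4c$ for every $a$. Setting $c'=4c$ (the case $c<0$ gives the empty set and is trivial), I take
\[
\underline q(p,c):=\min_a \underline s_a,\qquad \overline q(p,c):=\max_a\overline s_a .
\]
Then $\mathcal K(p,c)\subset[\underline q,\overline q]^m$. If a concrete lower bound is wanted, one can use $\phi_{p_a}(s)\ge p_a\log(p_a/s)-p_a$, which gives $q_a\ge p_a e^{-1-4c/p_a}$. For the upper bound, $\phi_{p_a}(s)\ge s-p_a-p_a\log(s/p_a)$ is handled the same way.

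\emph{Compactness.} The set $\mathcal K(p,c)$ is the preimage of $(-\infty,c]$ under the map $q\mapsto D_h(p,q)$, which is continuous on the box $[\underline q,\overline q]^m\subset\mathbb R^m_{++}$. It is therefore closed in that compact box, hence compact in $\mathbb R^m$ and contained in $\mathbb R^m_{++}$. The only step needing care is exactly this one: closedness must be taken relative to the box, which stays away from the boundary of the orthant, rather than relative to $\mathbb R^m_{++}$, which is not closed in $\mathbb R^m$. The coercivity of $\phi_{p_a}$ at $0$ is what guarantees that the box can be chosen strictly inside the orthant, so that the two notions of closedness agree.
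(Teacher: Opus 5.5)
Your proof is correct and takes essentially the same route as the paper. Both arguments write $D_h(p,\cdot)$ as a sum of nonnegative scalar terms $\phi_{p_a}$, use their blow-up at $0$ and $\infty$ to confine each coordinate to a compact interval inside $(0,\infty)$, and obtain compactness because the level set is closed in the resulting compact box; your additional remarks (the explicit lower bound $p_a e^{-1-4c/p_a}$ and the trivial case $c<0$) are correct but not needed.
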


\begin{proof}
For fixed \(p_a>0\), define
\[
    \phi_{p_a}(x)
    :=
    p_a\log\frac{p_a}{x}-p_a+x,
    \qquad x>0.
\]
The function \(\phi_{p_a}\) is nonnegative and continuous, and
\[
    \phi_{p_a}(x)\longrightarrow+\infty
    \quad\text{as }x\downarrow0
    \quad\text{or as }x\uparrow\infty.
\]
Because
\[
    D_h(p,q)
    =
    \frac14\sum_{a=1}^{m}d_a\phi_{p_a}(q_a)
\]
and every summand is nonnegative, the inequality
\(D_h(p,q)\le c\) places each coordinate \(q_a\) in a compact
interval contained in \((0,\infty)\). Their Cartesian product is a
compact subset of \(\mathbb{R}_{++}^{m}\). The level set is closed
inside this product by continuity of \(D_h(p,\cdot)\), hence is
compact in \(\mathbb{R}^{m}\).
\end{proof}

\subsection{Global convergence and finite-initialization selection}
\label{subsec:finite-initialization-selection}

Define the Bregman projection of the positive initialization \(q^0\)
onto the closed feasible set by
\begin{equation}
    \Pi_h(q^0)
    :=
    \underset{q\in\mathcal{F}_{+}}{\arg\min}
    \;
    D_h(q,q^0).
    \label{eq:finite-initialization-bregman-projection}
\end{equation}

\begin{lemma}[Existence, uniqueness, and interiority of the Bregman projection]
\label{lem:bregman-projection-interiority}
Problem \cref{eq:finite-initialization-bregman-projection} has a
unique minimizer, and
\begin{equation}
    \Pi_h(q^0)\in\mathcal{F}_{++}.
    \label{eq:bregman-projection-strict-positive}
\end{equation}
\end{lemma}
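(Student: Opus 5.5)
The argument has three parts: existence from coercivity on a nonempty closed set, uniqueness from strict convexity, and interiority from the infinite slope of \(x\log x\) at zero.

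\textbf{Existence.} Write \(D_h(q,q^0)=\frac14\sum_a d_a\psi_a(q_a)\) with
\[
\psi_a(x)=x\log\frac{x}{q_a^0}-x+q_a^0,
\]
extended continuously to \(x=0\) by \(\psi_a(0)=q_a^0\). Each \(\psi_a\) is continuous, nonnegative and strictly convex on \([0,\infty)\), and \(\psi_a(x)\to\infty\) superlinearly as \(x\to\infty\). Hence \(D_h(\cdot,q^0)\) is continuous, strictly convex and coercive on \(\mathbb{R}^m_+\). The set \(\mathcal{F}_+\) is closed, convex, and nonempty by \cref{eq:strict-feasibility-assumption}. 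Intersecting it with the sublevel set \(\{D_h(\cdot,q^0)\le D_h(q^\dagger,q^0)\}\) gives a nonempty compact set, so a minimizer exists.

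\textbf{Uniqueness.} It follows from strict convexity on the convex set \(\mathcal{F}_+\).

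\textbf{Interiority.} This is the main step. Let \(\bar q\) be the minimizer and suppose the index set \(Z=\{a:\bar q_a=0\}\) is nonempty. Take \(q^\dagger\in\mathcal{F}_{++}\) and set \(q_\tau=(1-\tau)\bar q+\tau q^\dagger\in\mathcal{F}_+\) for \(\tau\in[0,1]\).

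For \(a\notin Z\), \(\bar q_a>0\), so \(\psi_a\) is differentiable on a neighbourhood of the segment near \(\tau=0\). These coordinates therefore contribute a finite one-sided derivative at \(\tau=0\), bounded by a constant \(C\).

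For \(a\in Z\), \((q_\tau)_a=\tau q^\dagger_a\), so
\[
\psi_a(\tau q_a^\dagger)-\psi_a(0)=\tau q_a^\dagger\log\frac{\tau q_a^\dagger}{q_a^0}-\tau q_a^\dagger .
\]
Divided by \(\tau\), this tends to \(-\infty\) as \(\tau\downarrow0\), because \(q_a^\dagger>0\) and \(\log\tau\to-\infty\).

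Adding the two contributions, the difference quotient \(\bigl(D_h(q_\tau,q^0)-D_h(\bar q,q^0)\bigr)/\tau\) tends to \(-\infty\) as \(\tau\downarrow0\). So \(D_h(q_\tau,q^0)<D_h(\bar q,q^0)\) for small \(\tau>0\), contradicting minimality. Hence \(Z=\varnothing\) and \(\Pi_h(q^0)\in\mathcal{F}_{++}\).

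The only care needed is keeping the finite-derivative coordinates separate from the singular ones; strict positivity of \(q^\dagger\) is used exactly in the singular coordinates.
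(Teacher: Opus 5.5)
Your proposal is correct and follows essentially the same route as the paper. Existence comes from continuity and coercivity of $D_h(\cdot,q^0)$ on the nonempty closed set $\mathcal{F}_+$, uniqueness from strict convexity, and interiority from moving along the segment toward a strictly feasible $q^\dagger$, where the $\log\tau$ terms in the zero coordinates drive the difference quotient to $-\infty$ while the remaining coordinates contribute finite derivatives.
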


\begin{proof}
The objective is
\[
    D_h(q,q^0)
    =
    \frac14
    \sum_{a=1}^{m}
    d_a
    \left[
        q_a\log\frac{q_a}{q_a^0}
        -
        q_a
        +
        q_a^0
    \right],
    \qquad q\in\mathbb{R}_{+}^{m},
\]
with \(0\log0:=0\). It is continuous on
\(\mathbb{R}_{+}^{m}\), coercive, and strictly convex. Since
\(\mathcal{F}_{+}\) is nonempty and closed, a unique minimizer exists.

Suppose that a minimizer \(\bar q\in\mathcal{F}_{+}\) has at least
one zero coordinate. Let \(q^\dagger\in\mathcal{F}_{++}\), and set
\[
    q(t)
    :=
    (1-t)\bar q+tq^\dagger,
    \qquad
    0<t<1.
\]
Then \(q(t)\in\mathcal{F}_{++}\). For every coordinate with
\(\bar q_a=0\), the corresponding directional quotient contains
\[
    \frac{d_aq_a^\dagger}{4}\log t,
\]
which tends to \(-\infty\) as \(t\downarrow0\). Coordinates with
\(\bar q_a>0\) contribute finite directional derivatives. Hence
\[
    D_h(q(t),q^0)
    <
    D_h(\bar q,q^0)
\]
for all sufficiently small \(t>0\), contradicting optimality.
Therefore the unique minimizer is strictly positive.
\end{proof}

\begin{theorem}[Global convergence to the Bregman projection]
\label{thm:global-convergence-bregman-projection}
The maximal solution in
\cref{thm:invariant-joint-spectral-reduction} is global:
\(T_{\max}=\infty\). The trajectory remains in a compact subset of
\(\mathbb{R}_{++}^{m}\), and
\begin{equation}
    q(t)\longrightarrow \Pi_h(q^0)
    \qquad
    \text{as }t\to\infty.
    \label{eq:continuous-convergence-to-bregman-projection}
\end{equation}

Define
\begin{equation}
    D^0
    :=
    D_h(q^\dagger,q^0),
    \label{eq:initial-reverse-divergence}
\end{equation}
\begin{equation}
    \underline q_0
    :=
    \underline q(q^\dagger,D^0),
    \label{eq:continuous-lower-coordinate-bound}
\end{equation}
and
\begin{equation}
    d_{\max}
    :=
    \max_{1\le a\le m}d_a.
    \label{eq:max-joint-multiplicity}
\end{equation}
Then
\begin{equation}
    \|Bq(t)-y\|_2
    \le
    \exp
    \left(
        -
        \frac{
            4\underline q_0
            (\sigma_B^{+})^2
        }{
            nd_{\max}
        }t
    \right)
    \|Bq^0-y\|_2.
    \label{eq:continuous-residual-rate}
\end{equation}
\end{theorem}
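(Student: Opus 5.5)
The plan has four steps: trap the trajectory in a compact positive set, identify the limit through the dual relation, and obtain the residual rate from a Gram-type comparison.

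\textbf{Step 1: confinement and global existence.} By \cref{lem:bregman-lyapunov-identity}, the map $t\mapsto D_h(q^\dagger,q(t))$ is nonincreasing. Hence $q(t)\in\mathcal{K}(q^\dagger,D^0)$ for all $t<T_{\max}$. \cref{lem:compact-reverse-bregman-levels} then gives the uniform bounds $\underline q_0\le q_a(t)\le\overline q$. Since $u_a=\sqrt{q_a}$ and $U(t)=\sum_a u_aP_a$, the factor $U(t)$ stays bounded. The vector field is smooth, so the solution cannot blow up, and $T_{\max}=\infty$.

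\textbf{Step 2: residual rate.} Set $r(t):=Bq(t)-y$. Differentiating and using \cref{eq:reduced-predictor-flow} gives
\[
\dot r=B\dot q=-\frac{4}{n}BD(t)B^\top r,
\qquad
D(t):=\operatorname{diag}(q_a/d_a).
\]
Here $D(t)\succeq(\underline q_0/d_{\max})I$. Because $y\in\operatorname{range}(B)$, we have $r(t)\in\operatorname{range}(B)$, and therefore
\[
\langle r,BB^\top r\rangle\ge(\sigma_B^+)^2\|r\|^2.
\]
Consequently
\[
\frac{d}{dt}\|r\|^2\le-\frac{8\underline q_0(\sigma_B^+)^2}{nd_{\max}}\|r\|^2.
\]
Taking square roots yields \cref{eq:continuous-residual-rate}.

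\textbf{Step 3: convergence of $q(t)$.} Integrating \cref{eq:weighted-dual-flow} gives
\[
\nabla h(q(t))-\nabla h(q^0)=-\frac1nB^\top\int_0^t r(s)\,ds\in\operatorname{range}(B^\top).
\]
The residual decays exponentially, so the integral converges absolutely. Hence $\nabla h(q(t))$ converges. Since $\nabla h$ is the coordinatewise map $q_a\mapsto\frac{d_a}{4}\log q_a$, which is a homeomorphism onto its image, $q(t)$ converges to some $q_\infty\in\mathbb{R}^m_{++}$. Steps 1 and 2 give $q_\infty\ge\underline q_0>0$ and $Bq_\infty=y$.

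\textbf{Step 4: identification of the limit.} The optimality conditions for minimizing the strictly convex function $D_h(\cdot,q^0)$ over $\{Bq=y\}$ at an interior point are
\[
\nabla h(q)-\nabla h(q^0)\in\operatorname{range}(B^\top).
\]
Since $\operatorname{range}(B^\top)$ is closed, Step 3 shows that $q_\infty$ satisfies this condition. The constraint $q\ge0$ is inactive at $q_\infty$, so these KKT conditions are sufficient: for any $q\in\mathcal{F}_+$, the three-point identity gives
\[
D_h(q,q^0)=D_h(q,q_\infty)+D_h(q_\infty,q^0)+\langle\nabla h(q_\infty)-\nabla h(q^0),\,q-q_\infty\rangle.
\]
The last term vanishes because $q-q_\infty\in\ker B$, and $D_h(q,q_\infty)\ge0$, so $q_\infty$ minimizes $D_h(\cdot,q^0)$ over $\mathcal{F}_+$. \cref{lem:bregman-projection-interiority} says this minimizer is unique, so $q_\infty=\Pi_h(q^0)$.

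The main obstacle is Step 2: one must keep the residual inside $\operatorname{range}(B)$ so that only positive singular values enter, and use the lower bound $\underline q_0$ from Step 1 uniformly in time. Step 3 depends on this rate for the integrability of $r$, which closes the argument.
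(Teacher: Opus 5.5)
Your proof is correct. Its skeleton (Lyapunov confinement, Gronwall on a residual kept in \(\operatorname{range}(B)\), the integrated dual relation, uniqueness of the projection) is the paper's, and your Step 2 is essentially the paper's computation. The differences are in how you close the argument.

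For global existence, the paper extends the reduced \(q\)-flow, which is smooth near the compact set \(\mathcal{K}(q^\dagger,D^0)\), and then rebuilds \(U(t)=\sum_a\sqrt{q_a(t)}P_a\). You instead note that \(U(t)\) stays bounded under a factor vector field that is polynomial on all of \(\mathbb{R}^{d\times d}\). Global existence was in fact already available from \cref{thm:exact-quotient-gradient-flow}.

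For convergence of \(q(t)\), the paper works in the primal. It bounds \(\|\dot q\|_2\le C_0\|r\|_2\) on the compact set, deduces finite path length, and only afterwards passes to the limit in the integrated dual relation. You work in the dual. Absolute integrability of \(r\) makes \(\nabla h(q(t))\) converge, and since \(\nabla h\) is a coordinatewise logarithm, a homeomorphism of \(\mathbb{R}^m_{++}\) onto \(\mathbb{R}^m\), the convergence transfers to \(q(t)\). The membership \(\nabla h(q_\infty)-\nabla h(q^0)\in\operatorname{range}(B^\top)\) drops out of the same computation, and you never need the upper coordinate bound or the constant \(C_0\).

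For the identification, the paper only asserts that this relation is the first-order optimality condition and invokes \cref{lem:bregman-projection-interiority}. Your three-point identity proves sufficiency explicitly. It covers feasible comparison points on the boundary of the orthant, where \(h\) is not differentiable, and it even gives uniqueness directly. The paper's primal finite-length argument has one advantage: it is the template reused for the discrete iteration in \cref{thm:finite-step-gradient-descent-convergence}, where summability of \(\|\Delta q_k\|_2\) plays the same role.
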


\begin{proof}
By \cref{lem:bregman-lyapunov-identity},
\[
    D_h(q^\dagger,q(t))
    \le
    D_h(q^\dagger,q^0)
    =
    D^0
\]
for every \(t<T_{\max}\). Hence the trajectory remains in the compact
set
\[
    \mathcal{K}(q^\dagger,D^0)
    \subset\mathbb{R}_{++}^{m}.
\]
The reduced vector field is smooth on a neighborhood of this compact
set. If \(T_{\max}<\infty\), the reduced solution extends beyond
\(T_{\max}\); reconstructing
\(U(t)=\sum_a\sqrt{q_a(t)}P_a\) then extends the original factor
solution as well, contradicting maximality. Hence
\(T_{\max}=\infty\).

Set
\begin{equation}
    r(t):=Bq(t)-y.
    \label{eq:continuous-residual}
\end{equation}
Since \(Bq^\dagger=y\),
\begin{equation}
    r(t)
    =
    B(q(t)-q^\dagger)
    \in
    \operatorname{range}(B).
    \label{eq:residual-in-range-B}
\end{equation}
Using \cref{eq:reduced-predictor-flow},
\begin{equation}
    \dot r(t)
    =
    -\frac4n
    B
    \operatorname{diag}
    \left(
        \frac{q_1(t)}{d_1},
        \ldots,
        \frac{q_m(t)}{d_m}
    \right)
    B^\top r(t).
    \label{eq:continuous-residual-dynamics}
\end{equation}
Therefore,
\[
\begin{aligned}
    \frac{d}{dt}
    \frac12\|r(t)\|_2^2
    &=
    -\frac4n
    r(t)^\top
    B
    \operatorname{diag}
    \left(
        \frac{q_a(t)}{d_a}
    \right)
    B^\top r(t)
    \\
    &\le
    -
    \frac{
        4\underline q_0
    }{
        nd_{\max}
    }
    \|B^\top r(t)\|_2^2.
\end{aligned}
\]
Because \(r(t)\in\operatorname{range}(B)\),
\begin{equation}
    \|B^\top r(t)\|_2
    \ge
    \sigma_B^{+}\|r(t)\|_2.
    \label{eq:positive-singular-value-residual-bound}
\end{equation}
Thus
\[
    \frac{d}{dt}
    \frac12\|r(t)\|_2^2
    \le
    -
    \frac{
        4\underline q_0
        (\sigma_B^{+})^2
    }{
        nd_{\max}
    }
    \|r(t)\|_2^2.
\]
Gronwall's inequality gives
\cref{eq:continuous-residual-rate}.

Since \(q(t)\) remains in a compact set,
\[
    \|\dot q(t)\|_2
    \le
    C_0\|r(t)\|_2
\]
for some finite constant \(C_0\). The residual estimate implies
\(\dot q\in L^1([0,\infty))\), so \(q(t)\) has finite length and
converges to a limit
\[
    q_\infty\in\mathbb{R}_{++}^{m}.
\]
The residual estimate gives \(Bq_\infty=y\).

Integrating \cref{eq:weighted-dual-flow} gives
\[
    \nabla h(q(t))
    -
    \nabla h(q^0)
    =
    -\frac1n
    B^\top
    \int_0^t
    (Bq(s)-y)\,ds.
\]
Passing to the limit yields
\begin{equation}
    \nabla h(q_\infty)
    -
    \nabla h(q^0)
    \in
    \operatorname{range}(B^\top).
    \label{eq:continuous-limit-kkt}
\end{equation}
Together with \(Bq_\infty=y\) and \(q_\infty>0\), this is the
first-order optimality condition for minimizing \(D_h(q,q^0)\) over
\(Bq=y\). By
\cref{lem:bregman-projection-interiority}, the minimizer is unique.
Hence \(q_\infty=\Pi_h(q^0)\).
\end{proof}

\subsection{Small initialization and global minimum-trace bias}
\label{subsec:small-initialization-trace-bias}

For \(0<\varepsilon<1\), define the isotropic initialization
\begin{equation}
    q_\varepsilon^0
    :=
    \varepsilon^2\mathbf{1},
    \label{eq:reduced-isotropic-initialization}
\end{equation}
Then
\begin{equation}
    Q(q_\varepsilon^0)
    =
    \varepsilon^2I_d,
    \label{eq:isotropic-predictor-initialization}
\end{equation}
and its canonical factor representative is
\begin{equation}
    U_\varepsilon^0
    =
    \varepsilon I_d.
    \label{eq:isotropic-factor-representative}
\end{equation}
Define the selected interpolant
\begin{equation}
    q_\varepsilon
    :=
    \Pi_h(q_\varepsilon^0).
    \label{eq:isotropic-bregman-selected-point}
\end{equation}
Equivalently,
\begin{equation}
    q_\varepsilon
    =
    \underset{\substack{Bq=y\\q\ge0}}{\arg\min}
    \;
    \sum_{a=1}^{m}
    d_a
    \left[
        q_a\log\frac{q_a}{\varepsilon^2}
        -
        q_a
        +
        \varepsilon^2
    \right],
    \label{eq:explicit-finite-initialization-selection}
\end{equation}
and the unique minimizer is strictly positive.

For \(q\in\mathbb{R}_{+}^{m}\), define
\begin{equation}
    \tau(q)
    :=
    \sum_{a=1}^{m}d_aq_a.
    \label{eq:weighted-trace-functional}
\end{equation}
By \cref{eq:joint-spectral-predictor-map},
\begin{equation}
    \operatorname{tr}(Q(q))
    =
    \tau(q).
    \label{eq:matrix-trace-reduced-trace}
\end{equation}
Let
\begin{equation}
    \tau_\star
    :=
    \min_{q\in\mathcal{F}_{+}}
    \tau(q),
    \label{eq:minimum-reduced-trace}
\end{equation}
and define
\begin{equation}
    \mathcal{S}_{\mathrm{tr}}
    :=
    \left\{
        q\in\mathcal{F}_{+}:
        \tau(q)=\tau_\star
    \right\}.
    \label{eq:minimum-trace-reduced-set}
\end{equation}
Because every \(d_a>0\), trace sublevel sets in
\(\mathbb{R}_{+}^{m}\) are compact. Thus
\(\mathcal{S}_{\mathrm{tr}}\) is nonempty, compact, and convex.

Define
\begin{equation}
    \psi(q)
    :=
    \sum_{a=1}^{m}
    d_a
    \bigl(
        q_a\log q_a-q_a
    \bigr),
    \qquad
    q\in\mathbb{R}_{+}^{m},
    \label{eq:weighted-entropy-secondary-functional}
\end{equation}
with \(0\log0:=0\). Then
\begin{equation}
    4D_h(q,\varepsilon^2\mathbf{1})
    =
    a_\varepsilon\tau(q)
    +
    \psi(q)
    +
    d\varepsilon^2,
    \label{eq:bregman-small-initialization-expansion}
\end{equation}
where
\begin{equation}
    a_\varepsilon
    :=
    \log\frac1{\varepsilon^2},
    \qquad
    d=\sum_{a=1}^{m}d_a.
    \label{eq:small-initialization-leading-coefficient}
\end{equation}

\begin{theorem}[Minimum-trace limit]
\label{thm:minimum-trace-limit}
As \(\varepsilon\downarrow0\),
\begin{equation}
    \operatorname{dist}
    \bigl(
        q_\varepsilon,
        \mathcal{S}_{\mathrm{tr}}
    \bigr)
    \longrightarrow0.
    \label{eq:reduced-minimum-trace-convergence}
\end{equation}
Moreover, for every fixed \(q^\star\in\mathcal{S}_{\mathrm{tr}}\),
\begin{equation}
    0
    \le
    \tau(q_\varepsilon)-\tau_\star
    \le
    \frac{
        \psi(q^\star)+d
    }{
        \log(1/\varepsilon^2)
    }.
    \label{eq:minimum-trace-gap-rate}
\end{equation}
\end{theorem}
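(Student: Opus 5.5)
The plan is to compare $q_\varepsilon$ with a fixed minimum-trace point using the expansion \cref{eq:bregman-small-initialization-expansion}, and then pass to the limit by compactness. Write $\Phi_\varepsilon(q):=a_\varepsilon\tau(q)+\psi(q)$. By \cref{eq:explicit-finite-initialization-selection} and \cref{eq:bregman-small-initialization-expansion}, $q_\varepsilon$ minimizes $\Phi_\varepsilon$ over $\mathcal{F}_+$, since the constant $d\varepsilon^2$ is irrelevant. Fix $q^\star\in\mathcal{S}_{\mathrm{tr}}$. Optimality gives
\[
a_\varepsilon\tau(q_\varepsilon)+\psi(q_\varepsilon)\le a_\varepsilon\tau_\star+\psi(q^\star).
\]
The lower bound $\tau(q_\varepsilon)\ge\tau_\star$ is immediate because $q_\varepsilon\in\mathcal{F}_+$.

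For the upper bound we need a lower bound on $\psi$ over $\mathbb{R}_+^m$. Since $x\log x-x\ge-1$ for all $x\ge0$, with the minimum attained at $x=1$, we get $\psi(q)\ge-\sum_a d_a=-d$. Substituting,
\[
a_\varepsilon\bigl(\tau(q_\varepsilon)-\tau_\star\bigr)\le\psi(q^\star)-\psi(q_\varepsilon)\le\psi(q^\star)+d.
\]
Dividing by $a_\varepsilon=\log(1/\varepsilon^2)>0$, which is positive because $0<\varepsilon<1$, yields \cref{eq:minimum-trace-gap-rate}. Note that $\psi(q^\star)$ is finite because of the convention $0\log0=0$.

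For \cref{eq:reduced-minimum-trace-convergence}, first note $\tau(q_\varepsilon)\le\tau_\star+(\psi(q^\star)+d)/\log(1/\varepsilon^2)$, which is bounded for, say, $\varepsilon\le1/2$. Together with $q_\varepsilon\ge0$ and $d_a\ge1$, the family $\{q_\varepsilon\}$ therefore lies in a compact subset of $\mathcal{F}_+$. Suppose the distance does not tend to zero. Then there is a sequence $\varepsilon_k\downarrow0$ with $\operatorname{dist}(q_{\varepsilon_k},\mathcal{S}_{\mathrm{tr}})\ge c>0$. Extract a convergent subsequence $q_{\varepsilon_k}\to\bar q$. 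The set $\mathcal{F}_+$ is closed, so $\bar q\in\mathcal{F}_+$. The function $\tau$ is continuous and the trace gap tends to $0$, so $\tau(\bar q)=\tau_\star$, i.e. $\bar q\in\mathcal{S}_{\mathrm{tr}}$. This contradicts the fact that the distance function is continuous and bounded below by $c$ along the subsequence.

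There is no serious obstacle here. The only point requiring care is the uniform lower bound $\psi\ge-d$ on the closed orthant, including boundary points with zero coordinates; it is what makes the constant in the rate independent of $\varepsilon$ and of the unknown location of $q_\varepsilon$.
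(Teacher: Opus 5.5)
Your proposal is correct and matches the paper's proof essentially step for step. It uses the same optimality comparison against a fixed $q^\star\in\mathcal{S}_{\mathrm{tr}}$ via the expansion of $4D_h(q,\varepsilon^2\mathbf{1})$, the same bound $x\log x-x\ge-1$ giving $\psi\ge-d$, and the same compactness and cluster-point argument for $\operatorname{dist}(q_\varepsilon,\mathcal{S}_{\mathrm{tr}})\to0$; your subsequence-by-contradiction phrasing of that last step just spells out what the paper leaves implicit.
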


\begin{proof}
By optimality of \(q_\varepsilon\) and
\cref{eq:bregman-small-initialization-expansion},
\[
    a_\varepsilon\tau(q_\varepsilon)
    +
    \psi(q_\varepsilon)
    \le
    a_\varepsilon\tau_\star
    +
    \psi(q^\star).
\]
For every \(x\ge0\),
\[
    x\log x-x\ge-1.
\]
Therefore,
\begin{equation}
    \psi(q)\ge-d
    \qquad
    \text{for every }q\in\mathbb{R}_{+}^{m}.
    \label{eq:entropy-functional-lower-bound}
\end{equation}
Since \(\tau(q_\varepsilon)\ge\tau_\star\),
\[
    0
    \le
    a_\varepsilon
    \bigl(
        \tau(q_\varepsilon)-\tau_\star
    \bigr)
    \le
    \psi(q^\star)-\psi(q_\varepsilon)
    \le
    \psi(q^\star)+d.
\]
This proves \cref{eq:minimum-trace-gap-rate}.

For sufficiently small \(\varepsilon\), the same estimate places
\(q_\varepsilon\) in a fixed compact trace sublevel. Let
\(\varepsilon_k\downarrow0\), and choose a convergent subsequence
\[
    q_{\varepsilon_{k_\ell}}
    \longrightarrow
    \bar q.
\]
Because \(\mathcal{F}_{+}\) is closed, \(\bar q\in\mathcal{F}_{+}\).
The trace-gap estimate gives \(\tau(\bar q)=\tau_\star\), so
\(\bar q\in\mathcal{S}_{\mathrm{tr}}\). Hence every cluster point
belongs to \(\mathcal{S}_{\mathrm{tr}}\), which proves
\cref{eq:reduced-minimum-trace-convergence}.
\end{proof}

The reduced minimum-trace problem is equivalent to the
minimum-trace problem over all positive-semidefinite interpolants, not
merely over the invariant algebra.

\begin{proposition}[Equivalence with global PSD trace minimization]
\label{prop:global-psd-trace-equivalence}
Define
\begin{equation}
    \mathcal{Q}_{+}
    :=
    \left\{
        Q\in\mathbb{S}_{+}^{d}:
        \langle A_i,Q\rangle=y_i
        \text{ for every }i
    \right\}.
    \label{eq:global-psd-feasible-set}
\end{equation}
Then
\begin{equation}
    \min_{Q\in\mathcal{Q}_{+}}
    \operatorname{tr}(Q)
    =
    \min_{q\in\mathcal{F}_{+}}
    \tau(q)
    =
    \tau_\star.
    \label{eq:global-reduced-trace-equivalence}
\end{equation}

More precisely, for every \(Q\in\mathcal{Q}_{+}\), define
\begin{equation}
    q_a(Q)
    :=
    \frac{
        \operatorname{tr}(P_aQ)
    }{
        d_a
    }.
    \label{eq:block-average-eigenvalue}
\end{equation}
Then \(q(Q)\in\mathcal{F}_{+}\),
\begin{equation}
    \operatorname{tr}(Q)
    =
    \tau(q(Q)),
    \label{eq:trace-preserved-by-block-averaging}
\end{equation}
and \(Q(q(Q))\in\mathcal{Q}_{+}\) has the same measurements and trace
as \(Q\). Consequently, every matrix
\[
    Q(q),
    \qquad
    q\in\mathcal{S}_{\mathrm{tr}},
\]
is a globally minimum-trace PSD interpolant.
\end{proposition}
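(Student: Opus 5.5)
The plan is to use the block-averaging map $Q\mapsto q(Q)$ in \cref{eq:block-average-eigenvalue}, which is the pinching (conditional expectation) onto the joint spectral algebra $\mathscr{A}$. I will show that it carries $\mathcal{Q}_{+}$ into $\mathcal{F}_{+}$ and preserves the trace. Conversely, $Q(\cdot)$ carries $\mathcal{F}_{+}$ into $\mathcal{Q}_{+}$ with the same trace. The equality of the two minima then follows from the two inequalities these maps give.

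\textbf{Step 1 (nonnegativity).} For $Q\in\mathbb{S}_+^d$ we have $\operatorname{tr}(P_aQ)=\operatorname{tr}(P_aQP_a)\ge0$, since $P_aQP_a\succeq0$. Hence $q_a(Q)\ge0$, and $d_a\ge1$ makes the division legitimate.

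\textbf{Step 2 (measurements preserved).} Using \cref{eq:measurement-joint-spectral-form},
\[
\langle A_i,Q\rangle=\sum_a c_{ia}\operatorname{tr}(P_aQ)=\sum_a d_ac_{ia}q_a(Q)=[Bq(Q)]_i .
\]
So $Bq(Q)=y$ and $q(Q)\in\mathcal{F}_{+}$. By \cref{eq:reduced-measurement-identity}, $\langle A_i,Q(q(Q))\rangle=[Bq(Q)]_i=y_i$. Since $Q(q(Q))=\sum_a q_a(Q)P_a\succeq0$, we get $Q(q(Q))\in\mathcal{Q}_{+}$.

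\textbf{Step 3 (trace preserved).} Because $\sum_aP_a=I_d$ by \cref{eq:joint-projector-relations},
\[
\operatorname{tr}(Q)=\sum_a\operatorname{tr}(P_aQ)=\sum_ad_aq_a(Q)=\tau(q(Q)),
\]
which is \cref{eq:trace-preserved-by-block-averaging}. By \cref{eq:matrix-trace-reduced-trace}, $\tau(q(Q))=\operatorname{tr}(Q(q(Q)))$ as well.

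\textbf{Step 4 (equality of minima).} For every $Q\in\mathcal{Q}_{+}$, Steps 2 and 3 give $\operatorname{tr}(Q)=\tau(q(Q))\ge\tau_\star$, so $\inf_{\mathcal{Q}_{+}}\operatorname{tr}\ge\tau_\star$. In the other direction, every $q\in\mathcal{F}_{+}$ gives $Q(q)\in\mathcal{Q}_{+}$ with trace $\tau(q)$. Taking $q\in\mathcal{S}_{\mathrm{tr}}$, which is nonempty as already noted, shows the infimum is attained and equals $\tau_\star$. This proves \cref{eq:global-reduced-trace-equivalence} and shows that each such $Q(q)$ is a global minimum-trace PSD interpolant.

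There is no genuine obstacle. The only point needing care is Step 1, the positivity of $\operatorname{tr}(P_aQ)$, and the observation that no commutation between $Q$ and the $A_i$ is needed: the argument uses only that each $A_i$ lies in $\mathscr{A}$.
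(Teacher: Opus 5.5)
Your proposal is correct and follows essentially the same argument as the paper: nonnegativity of $\operatorname{tr}(P_aQ)=\operatorname{tr}(P_aQP_a)$, preservation of measurements and trace under block averaging, and the converse map $q\mapsto Q(q)$ sending $\mathcal{F}_{+}$ into $\mathcal{Q}_{+}$ with the same trace. You also use the nonemptiness of $\mathcal{S}_{\mathrm{tr}}$ explicitly to show that the minimum over $\mathcal{Q}_{+}$ is attained, a point the paper leaves implicit.
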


\begin{proof}
Let \(Q\in\mathcal{Q}_{+}\). Since
\(P_aQP_a\succeq0\),
\[
    \operatorname{tr}(P_aQ)
    =
    \operatorname{tr}(P_aQP_a)
    \ge0.
\]
Thus \(q_a(Q)\ge0\). Using
\cref{eq:measurement-joint-spectral-form},
\[
\begin{aligned}
    \langle A_i,Q\rangle
    &=
    \sum_{a=1}^{m}
    c_{ia}\operatorname{tr}(P_aQ)
    \\
    &=
    \sum_{a=1}^{m}
    d_ac_{ia}q_a(Q)
    \\
    &=
    [Bq(Q)]_i.
\end{aligned}
\]
Since \(Q\) is feasible, \(Bq(Q)=y\), so
\(q(Q)\in\mathcal{F}_{+}\). Moreover,
\[
\begin{aligned}
    \operatorname{tr}(Q)
    &=
    \sum_{a=1}^{m}\operatorname{tr}(P_aQ)
    \\
    &=
    \sum_{a=1}^{m}d_aq_a(Q)
    \\
    &=
    \tau(q(Q)).
\end{aligned}
\]
The matrix
\[
    Q(q(Q))
    =
    \sum_{a=1}^{m}q_a(Q)P_a
\]
is positive semidefinite, has measurements \(Bq(Q)=y\), and has the
same trace as \(Q\). Conversely, every
\(q\in\mathcal{F}_{+}\) produces a feasible PSD matrix \(Q(q)\) with
trace \(\tau(q)\). Hence the minimum values coincide, and the final
statement follows.
\end{proof}

Define
\begin{equation}
    \mathcal{Q}_{\mathrm{tr}}
    :=
    \underset{Q\in\mathcal{Q}_{+}}{\arg\min}
    \operatorname{tr}(Q).
    \label{eq:global-minimum-trace-set}
\end{equation}
The set \(\mathcal{Q}_{\mathrm{tr}}\) is nonempty and compact because
\(\mathcal{Q}_{+}\neq\varnothing\) and PSD trace sublevel sets are
compact.

\begin{corollary}[Global minimum-trace bias]
\label{cor:global-minimum-trace-bias}
Let
\begin{equation}
    Q_\varepsilon
    :=
    Q(q_\varepsilon).
    \label{eq:finite-initialization-selected-matrix}
\end{equation}
Then
\begin{equation}
    \operatorname{dist}_F
    \bigl(
        Q_\varepsilon,
        \mathcal{Q}_{\mathrm{tr}}
    \bigr)
    \longrightarrow0
    \qquad
    \text{as }\varepsilon\downarrow0.
    \label{eq:global-minimum-trace-matrix-convergence}
\end{equation}
\end{corollary}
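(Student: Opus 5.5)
The argument transfers \cref{thm:minimum-trace-limit} from the reduced coordinates to matrices through the linear embedding $q\mapsto Q(q)=\sum_a q_aP_a$, and then uses \cref{prop:global-psd-trace-equivalence} to identify the limit set. Two preliminary facts are needed. First, $Q(\cdot)$ is Lipschitz in Frobenius norm:
\[
\|Q(p)-Q(q)\|_F^2=\sum_a d_a(p_a-q_a)^2\le d_{\max}\|p-q\|_2^2,
\]
since the $P_a$ are mutually orthogonal projectors. Second, $Q(\mathcal{S}_{\mathrm{tr}})\subseteq\mathcal{Q}_{\mathrm{tr}}$. This inclusion is exactly the final statement of \cref{prop:global-psd-trace-equivalence}: each $Q(q)$ with $q\in\mathcal{S}_{\mathrm{tr}}$ is PSD, satisfies the measurements because $Bq=y$, and has trace $\tau_\star$, which equals the global minimum of $\operatorname{tr}$ over $\mathcal{Q}_+$.

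The estimate itself is short. Fix $\varepsilon$. Because $\mathcal{S}_{\mathrm{tr}}$ is nonempty and compact, there is a point $\hat q_\varepsilon\in\mathcal{S}_{\mathrm{tr}}$ with $\|q_\varepsilon-\hat q_\varepsilon\|_2=\operatorname{dist}(q_\varepsilon,\mathcal{S}_{\mathrm{tr}})$. Then
\[
\operatorname{dist}_F(Q_\varepsilon,\mathcal{Q}_{\mathrm{tr}})
\le\|Q(q_\varepsilon)-Q(\hat q_\varepsilon)\|_F
\le\sqrt{d_{\max}}\,\operatorname{dist}(q_\varepsilon,\mathcal{S}_{\mathrm{tr}}).
\]
By \cref{eq:reduced-minimum-trace-convergence} the right-hand side tends to $0$ as $\varepsilon\downarrow0$, which gives \cref{eq:global-minimum-trace-matrix-convergence}. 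This is a genuine global statement: $\mathcal{Q}_{\mathrm{tr}}$ may contain non-commuting minimizers, but the distance to the smaller set $Q(\mathcal{S}_{\mathrm{tr}})$ already dominates the distance to $\mathcal{Q}_{\mathrm{tr}}$, so the inclusion is all that is required.

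Nothing here is a real obstacle. The only point to check is that the inclusion $Q(\mathcal{S}_{\mathrm{tr}})\subseteq\mathcal{Q}_{\mathrm{tr}}$ truly rests on the equality of the minimum values in \cref{eq:global-reduced-trace-equivalence}. That equality comes from block averaging: $q(Q)=\bigl(\operatorname{tr}(P_aQ)/d_a\bigr)_a$ maps $\mathcal{Q}_+$ into $\mathcal{F}_+$ and preserves the trace, and the proposition has already established this.
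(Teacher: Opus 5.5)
Your proposal is correct and follows the paper's proof essentially verbatim: the bound $\|Q(p)-Q(q)\|_F\le\sqrt{d_{\max}}\,\|p-q\|_2$, the inclusion $Q(\mathcal{S}_{\mathrm{tr}})\subseteq\mathcal{Q}_{\mathrm{tr}}$ from \cref{prop:global-psd-trace-equivalence}, and the convergence $\operatorname{dist}(q_\varepsilon,\mathcal{S}_{\mathrm{tr}})\to0$ from \cref{thm:minimum-trace-limit}. Your choice of a nearest point $\hat q_\varepsilon$, which exists because $\mathcal{S}_{\mathrm{tr}}$ is compact, simply makes explicit the step the paper summarizes as ``the result follows.''
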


\begin{proof}
For \(p,q\in\mathbb{R}^{m}\),
\[
    \|Q(p)-Q(q)\|_F^2
    =
    \sum_{a=1}^{m}d_a(p_a-q_a)^2
    \le
    d_{\max}\|p-q\|_2^2.
\]
By \cref{thm:minimum-trace-limit}, the distance from
\(q_\varepsilon\) to \(\mathcal{S}_{\mathrm{tr}}\) tends to zero.
By \cref{prop:global-psd-trace-equivalence},
\[
    Q(\mathcal{S}_{\mathrm{tr}})
    \subseteq
    \mathcal{Q}_{\mathrm{tr}}.
\]
The result follows.
\end{proof}

\subsection{Entropic tie-breaking inside the invariant algebra}
\label{subsec:entropic-tie-breaking}

Since \(x\mapsto x\log x\) is strictly convex on
\(\mathbb{R}_{+}\), the problem
\begin{equation}
    q_{\mathrm{ent}}
    :=
    \underset{q\in\mathcal{S}_{\mathrm{tr}}}{\arg\min}
    \;
    \sum_{a=1}^{m}
    d_aq_a\log q_a
    \label{eq:entropy-tie-breaking-problem}
\end{equation}
has a unique solution.

\begin{theorem}[Entropic selection among minimum-trace solutions]
\label{thm:entropic-tie-breaking}
The full family \(q_\varepsilon\) converges:
\begin{equation}
    q_\varepsilon
    \longrightarrow
    q_{\mathrm{ent}}
    \qquad
    \text{as }\varepsilon\downarrow0.
    \label{eq:entropy-selected-limit}
\end{equation}
Consequently,
\begin{equation}
    Q_\varepsilon
    \longrightarrow
    Q_{\mathrm{ent}}
    :=
    Q(q_{\mathrm{ent}}).
    \label{eq:entropy-selected-matrix-limit}
\end{equation}
The matrix \(Q_{\mathrm{ent}}\) is a globally minimum-trace PSD
interpolant.

Within the invariant minimum-trace family
\[
    Q(\mathcal{S}_{\mathrm{tr}})
    =
    \left\{
        Q(q):q\in\mathcal{S}_{\mathrm{tr}}
    \right\},
\]
\(Q_{\mathrm{ent}}\) is equivalently the unique minimizer of
\begin{equation}
    Q
    \longmapsto
    \operatorname{tr}(Q\log Q),
    \label{eq:matrix-entropy-secondary-objective}
\end{equation}
with \(0\log0:=0\). If \(\tau_\star>0\), it is equivalently the
unique maximizer of the Shannon entropy of the normalized eigenvalue
distribution among matrices in
\(Q(\mathcal{S}_{\mathrm{tr}})\).
\end{theorem}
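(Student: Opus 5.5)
The plan is a standard $\Gamma$-convergence/tie-breaking argument for the two-level objective in \cref{eq:bregman-small-initialization-expansion}. By \cref{thm:minimum-trace-limit}, the family $\{q_\varepsilon\}$ eventually lies in a fixed compact trace sublevel set, and every cluster point belongs to $\mathcal{S}_{\mathrm{tr}}$. It therefore suffices to show that every cluster point $\bar q$ minimizes $\psi$ on $\mathcal{S}_{\mathrm{tr}}$. Since $\tau$ is constant on $\mathcal{S}_{\mathrm{tr}}$, the function $\psi$ differs from $\sum_a d_aq_a\log q_a$ there only by the constant $-\tau_\star$. Uniqueness of $q_{\mathrm{ent}}$, from strict convexity on the convex compact set $\mathcal{S}_{\mathrm{tr}}$, then forces the whole family to converge.

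The key step is the comparison inequality. For any $q^\star\in\mathcal{S}_{\mathrm{tr}}$, optimality of $q_\varepsilon$ in \cref{eq:explicit-finite-initialization-selection} gives
\[
a_\varepsilon\tau(q_\varepsilon)+\psi(q_\varepsilon)\le a_\varepsilon\tau_\star+\psi(q^\star).
\]
Because $\tau(q_\varepsilon)\ge\tau_\star$ and $a_\varepsilon>0$ for $\varepsilon<1$, we can drop the trace terms and obtain $\psi(q_\varepsilon)\le\psi(q^\star)$. This is the crucial point: the leading-order term only helps. Along a subsequence $q_{\varepsilon_k}\to\bar q$, continuity of $\psi$ on $\mathbb{R}_+^m$ (with $0\log0=0$) gives $\psi(\bar q)\le\psi(q^\star)$ for all $q^\star\in\mathcal{S}_{\mathrm{tr}}$. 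Together with $\bar q\in\mathcal{S}_{\mathrm{tr}}$, this shows $\bar q=q_{\mathrm{ent}}$. Every subsequence has a further subsequence converging to $q_{\mathrm{ent}}$, so the full family converges.

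The matrix statements follow next. Convergence $Q_\varepsilon\to Q(q_{\mathrm{ent}})$ comes from the Lipschitz bound $\|Q(p)-Q(q)\|_F^2\le d_{\max}\|p-q\|_2^2$ used in \cref{cor:global-minimum-trace-bias}. Global minimality of the trace follows from \cref{prop:global-psd-trace-equivalence}. For the equivalence with $\operatorname{tr}(Q\log Q)$, note that $Q(q)=\sum_a q_aP_a$ has eigenvalue $q_a$ with multiplicity $d_a$, so $\operatorname{tr}(Q(q)\log Q(q))=\sum_a d_aq_a\log q_a$. Since $q\mapsto Q(q)$ is injective because the $P_a$ are nonzero orthogonal projectors, minimizing over $Q(\mathcal{S}_{\mathrm{tr}})$ is the same as minimizing over $\mathcal{S}_{\mathrm{tr}}$.

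For the Shannon statement with $\tau_\star>0$, write the normalized eigenvalue distribution as weights $q_a/\tau_\star$ repeated $d_a$ times. Its entropy is
\[
-\sum_a d_a\frac{q_a}{\tau_\star}\log\frac{q_a}{\tau_\star}=\log\tau_\star-\frac{1}{\tau_\star}\sum_a d_aq_a\log q_a,
\]
using $\sum_a d_aq_a=\tau_\star$. This is a strictly decreasing affine function of the tie-breaking objective, so maximizing entropy is equivalent to minimizing that objective.

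I expect no serious obstacle. The only delicate points are two. First, the continuity of $\psi$ at the boundary of $\mathbb{R}_+^m$ must be checked, since cluster points may have zero coordinates; here one uses the convention $0\log0=0$ and the continuity of $x\log x$ at $0$. Second, one should note explicitly that dropping the term $a_\varepsilon(\tau(q_\varepsilon)-\tau_\star)\ge0$ is legitimate, and that this needs no rate from \cref{eq:minimum-trace-gap-rate}.
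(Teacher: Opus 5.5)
Your proposal is correct and follows the paper's proof essentially step for step. Both arguments drop the nonnegative term $a_\varepsilon(\tau(q_\varepsilon)-\tau_\star)$ to get $\psi(q_\varepsilon)\le\psi(q)$ for all $q\in\mathcal{S}_{\mathrm{tr}}$, pass to the limit along cluster points (which lie in $\mathcal{S}_{\mathrm{tr}}$ by \cref{thm:minimum-trace-limit}), obtain uniqueness from strict convexity, and use the functional-calculus and normalization identities for the matrix and Shannon reformulations. Your remarks on continuity of $\psi$ at the boundary of $\mathbb{R}_+^m$ and on injectivity of $q\mapsto Q(q)$ just make explicit two points the paper leaves implicit.
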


\begin{proof}
Fix \(q\in\mathcal{S}_{\mathrm{tr}}\). By optimality of
\(q_\varepsilon\),
\[
    a_\varepsilon\tau(q_\varepsilon)
    +
    \psi(q_\varepsilon)
    \le
    a_\varepsilon\tau_\star
    +
    \psi(q).
\]
Since \(\tau(q_\varepsilon)\ge\tau_\star\),
\begin{equation}
    \psi(q_\varepsilon)
    \le
    \psi(q)
    \qquad
    \text{for every }q\in\mathcal{S}_{\mathrm{tr}}.
    \label{eq:secondary-functional-comparison}
\end{equation}

By \cref{thm:minimum-trace-limit}, every cluster point of
\(q_\varepsilon\) belongs to \(\mathcal{S}_{\mathrm{tr}}\). Let
\[
    q_{\varepsilon_k}\longrightarrow\bar q
\]
along a convergent subsequence. Passing to the limit in
\cref{eq:secondary-functional-comparison} gives
\[
    \psi(\bar q)
    \le
    \psi(q)
    \qquad
    \text{for every }q\in\mathcal{S}_{\mathrm{tr}}.
\]
On \(\mathcal{S}_{\mathrm{tr}}\), the trace is constant, so minimizing
\(\psi\) is equivalent to minimizing the objective in
\cref{eq:entropy-tie-breaking-problem}. Strict convexity gives the
unique minimizer \(q_{\mathrm{ent}}\). Hence every cluster point is
\(q_{\mathrm{ent}}\), which proves
\cref{eq:entropy-selected-limit}. Continuity of \(Q(\cdot)\) gives
\cref{eq:entropy-selected-matrix-limit}.

By \cref{prop:global-psd-trace-equivalence},
\(Q_{\mathrm{ent}}\in\mathcal{Q}_{\mathrm{tr}}\). Functional calculus
gives
\[
    \operatorname{tr}(Q(q)\log Q(q))
    =
    \sum_{a=1}^{m}d_aq_a\log q_a.
\]
If \(\tau_\star>0\), the normalized eigenvalues of \(Q(q)\) are
\(q_a/\tau_\star\), with multiplicity \(d_a\). Their Shannon entropy
is
\[
    -
    \sum_{a=1}^{m}
    d_a
    \frac{q_a}{\tau_\star}
    \log
    \left(
        \frac{q_a}{\tau_\star}
    \right).
\]
Since \(\tau_\star\) is fixed, maximizing this entropy is equivalent
to minimizing
\(\sum_ad_aq_a\log q_a\).
\end{proof}

\begin{remark}[Scope of the entropy statement]
\label{rem:entropy-scope}
The minimum-trace conclusion in
\cref{cor:global-minimum-trace-bias} is global over all PSD
interpolants. The entropy tie-breaking result in
\cref{thm:entropic-tie-breaking} is proved only within the invariant
joint spectral algebra \(\mathscr{A}\). No claim is made that
\(Q_{\mathrm{ent}}\) maximizes von Neumann entropy over the entire
global set \(\mathcal{Q}_{\mathrm{tr}}\).
\end{remark}

\subsection{Finite-step parameter gradient descent}
\label{subsec:commuting-finite-step-gradient-descent}

We now fix the arbitrary positive initialization \(q^0\) from
\cref{eq:general-positive-spectral-initialization} and compare
finite-step parameter gradient descent with the continuous mirror-flow
selection \(\Pi_h(q^0)\).

Define
\begin{equation}
    D_{\mathrm{disc}}^0
    :=
    D_h(q^\dagger,q^0),
    \label{eq:discrete-initial-reverse-divergence}
\end{equation}
and
\begin{equation}
    \mathcal{K}_0^{-}
    :=
    \left\{
        q\in\mathbb{R}_{++}^{m}:
        D_h(q^\dagger,q)\le D_{\mathrm{disc}}^0
    \right\}.
    \label{eq:discrete-reverse-level-set}
\end{equation}
By \cref{lem:compact-reverse-bregman-levels},
\(\mathcal{K}_0^{-}\) is compact. Define
\begin{equation}
    \underline q_0
    :=
    \min_{\substack{
        q\in\mathcal{K}_0^{-}\\
        1\le a\le m
    }}
    q_a,
    \qquad
    \overline q_0
    :=
    \max_{\substack{
        q\in\mathcal{K}_0^{-}\\
        1\le a\le m
    }}
    q_a,
    \label{eq:discrete-coordinate-bounds}
\end{equation}
and
\begin{equation}
    G_0
    :=
    \max_{q\in\mathcal{K}_0^{-}}
    \|g(q)\|_\infty.
    \label{eq:discrete-gradient-bound}
\end{equation}
Also set
\begin{equation}
    d_{\min}
    :=
    \min_{1\le a\le m}d_a,
    \qquad
    d_{\max}
    :=
    \max_{1\le a\le m}d_a,
    \label{eq:min-max-multiplicity}
\end{equation}
and
\begin{equation}
    C_0
    :=
    \max_{1\le a\le m}
    \frac{
        2q_a^\dagger+\overline q_0
    }{
        d_a
    }.
    \label{eq:discrete-bregman-remainder-constant}
\end{equation}

Define
\begin{equation}
\begin{aligned}
    \eta_0
    :=
    \min
    \Biggl\{
        &
        \frac{d_{\min}}{4G_0},
        \;
        \frac{
            n
        }{
            2C_0\|B\|_{\mathrm{op}}^2
        },
        \\
        &
        \frac{
            2n\underline q_0d_{\min}^2
        }{
            25\overline q_0^2
            d_{\max}\|B\|_{\mathrm{op}}^2
        },
        \;
        \frac{
            nd_{\max}
        }{
            4\underline q_0
            (\sigma_B^{+})^2
        }
    \Biggr\},
    \label{eq:fixed-epsilon-step-size-threshold}
\end{aligned}
\end{equation}
where the first term is interpreted as \(+\infty\) if \(G_0=0\).

\begin{theorem}[Convergence of finite-step parameter gradient descent]
\label{thm:finite-step-gradient-descent-convergence}
Let
\begin{equation}
    0<\eta\le\eta_0.
    \label{eq:finite-step-size-assumption}
\end{equation}
Initialize
\[
    u_{a,0}=\sqrt{q_a^0},
    \qquad
    q_{a,0}=q_a^0,
\]
and apply
\cref{eq:reduced-factor-gradient-descent,eq:reduced-predictor-gradient-descent}.
Then:

\begin{enumerate}
    \item
    \(u_{a,k}>0\) for every \(a\) and every \(k\);

    \item
    \(q_k\in\mathcal{K}_0^{-}\) for every \(k\);

    \item
    the reverse Bregman divergence satisfies
    \begin{equation}
        D_h(q^\dagger,q_{k+1})
        \le
        D_h(q^\dagger,q_k)
        -
        \frac{\eta}{2n}
        \|Bq_k-y\|_2^2;
        \label{eq:discrete-bregman-descent}
    \end{equation}

    \item
    the reduced loss contracts linearly:
    \begin{equation}
        f(q_{k+1})
        \le
        \left(
            1-
            \frac{
                4\eta\underline q_0
                (\sigma_B^{+})^2
            }{
                nd_{\max}
            }
        \right)
        f(q_k);
        \label{eq:discrete-loss-linear-contraction}
    \end{equation}

    \item
    \(q_k\) converges to a strictly positive interpolant
    \begin{equation}
        q_\eta
        \in
        \mathcal{F}_{++}.
        \label{eq:finite-step-limit}
    \end{equation}
\end{enumerate}
\end{theorem}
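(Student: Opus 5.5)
The plan is to prove items 1–3 together by induction on \(k\), with the invariant \(q_k\in\mathcal{K}_0^{-}\), and then to deduce items 4 and 5. Write \(r_k:=Bq_k-y\) and \(s_{a,k}:=\tfrac{2\eta}{d_a}g_a(q_k)\), so that \cref{eq:reduced-predictor-gradient-descent} reads \(q_{a,k+1}=q_{a,k}(1-s_{a,k})^2\). The base case holds because \(D_h(q^\dagger,q^0)=D^0_{\mathrm{disc}}\), so \(q_0=q^0\in\mathcal{K}_0^{-}\).

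\textbf{Positivity and Bregman descent (items 1–3).} Assume \(q_k\in\mathcal{K}_0^{-}\). Then \(|g_a(q_k)|\le G_0\), and the first term of \cref{eq:fixed-epsilon-step-size-threshold} gives \(|s_{a,k}|\le \tfrac12\). Hence \(1-s_{a,k}\ge\tfrac12>0\), and \cref{eq:reduced-factor-gradient-descent} gives \(u_{a,k+1}>0\). For the Bregman step, use \cref{eq:weighted-bregman-divergence-formula} to write
\[
D_h(q^\dagger,q_{k+1})-D_h(q^\dagger,q_k)=\tfrac14\sum_a d_a\bigl[-2q^\dagger_a\log(1-s_{a,k})+q_{a,k}(-2s_{a,k}+s_{a,k}^2)\bigr].
\]
The first-order part is \(\tfrac12\sum_a d_as_{a,k}(q^\dagger_a-q_{a,k})=\eta\langle g(q_k),q^\dagger-q_k\rangle=-\tfrac{\eta}{n}\|r_k\|_2^2\), exactly as in \cref{lem:bregman-lyapunov-identity}. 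For the second-order part, use \(-\log(1-s)\le s+s^2\) on \(|s|\le\tfrac12\). This bounds the remainder by \(\tfrac14\sum_a d_as_{a,k}^2(2q^\dagger_a+q_{a,k})\le\eta^2C_0\|g(q_k)\|_2^2\le \eta^2C_0\|B\|_{\mathrm{op}}^2\|r_k\|_2^2/n^2\). The second threshold in \(\eta_0\) makes this at most \(\tfrac{\eta}{2n}\|r_k\|_2^2\), which proves \cref{eq:discrete-bregman-descent}. In particular \(D_h(q^\dagger,q_{k+1})\le D^0_{\mathrm{disc}}\), so \(q_{k+1}\in\mathcal{K}_0^{-}\) and the induction closes.

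\textbf{Linear contraction (item 4).} Since \(f\) is quadratic, Taylor's formula is exact:
\[
f(q_{k+1})=f(q_k)+\langle g,\Delta\rangle+\tfrac1{2n}\|B\Delta\|^2,\qquad \Delta:=q_{k+1}-q_k,\quad g:=g(q_k).
\]
For the linear term, \(\Delta_a=q_{a,k}s_{a,k}(s_{a,k}-2)\), so
\[
\langle g,\Delta\rangle=-\sum_a \tfrac{4\eta q_{a,k}g_a^2}{d_a}\bigl(1-\tfrac{s_{a,k}}2\bigr)\le-\tfrac{3\eta\underline q_0}{d_{\max}}\|g\|_2^2,
\]
using \(1-s_{a,k}/2\ge\tfrac34\) and \(q_{a,k}\ge\underline q_0\). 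For the quadratic term, \(|\Delta_a|\le\tfrac52 q_{a,k}|s_{a,k}|\le 5\eta\overline q_0|g_a|/d_{\min}\), so \(\tfrac1{2n}\|B\Delta\|^2\le \tfrac{25\eta^2\overline q_0^2\|B\|_{\mathrm{op}}^2}{2nd_{\min}^2}\|g\|_2^2\). The third threshold makes this at most \(\eta\underline q_0\|g\|_2^2/d_{\max}\). The net decrease is therefore at least \(\tfrac{2\eta\underline q_0}{d_{\max}}\|g\|_2^2\). Because \(r_k\in\operatorname{range}(B)\), we have \(\|g\|_2^2=\|B^\top r_k\|_2^2/n^2\ge(\sigma_B^+)^2\|r_k\|_2^2/n^2=2(\sigma_B^+)^2f(q_k)/n\). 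This yields \cref{eq:discrete-loss-linear-contraction}. The fourth threshold keeps the contraction factor in \([0,1)\).

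\textbf{Convergence (item 5).} From item 4, \(\|r_k\|_2\) decays geometrically. Then \(\|q_{k+1}-q_k\|_2\le 5\eta\overline q_0\|B\|_{\mathrm{op}}\|r_k\|_2/(nd_{\min})\) is summable, so \((q_k)\) is Cauchy and converges to some \(q_\eta\). The limit lies in the compact set \(\mathcal{K}_0^{-}\subset\mathbb{R}^m_{++}\) (\cref{lem:compact-reverse-bregman-levels}), and \(Bq_\eta=y\), so \(q_\eta\in\mathcal{F}_{++}\).

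\textbf{Main obstacle.} The delicate step is the second-order remainder in the Bregman descent estimate. It requires both the uniform bound \(|s_{a,k}|\le\tfrac12\), which comes from the induction hypothesis via \(G_0\), and careful matching of the constant against \(C_0\). If the elementary bound \(-\log(1-s)\le s+s^2\) proves too loose, I would fall back on the exact integral form of the logarithm remainder on \([-\tfrac12,\tfrac12]\).
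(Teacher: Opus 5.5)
Your proposal is correct and follows the paper's proof essentially step for step. It uses the same induction on $q_k\in\mathcal{K}_0^{-}$, the same exact Bregman increment bounded via $-\log(1-z)\le z+z^2$ together with the second threshold, the same exact quadratic expansion of $f$ with the factors $3/4$ and $5/2$ matched against the third threshold, the same lower bound through $\sigma_B^{+}$ on $\operatorname{range}(B)$, and the same summability argument for the limit. The elementary logarithm bound you were worried about is the one the paper uses, and it suffices, so the integral-remainder fallback is not needed.
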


\begin{proof}
Assume inductively that \(q_k\in\mathcal{K}_0^{-}\). Set
\begin{equation}
    g_k:=g(q_k),
    \qquad
    z_{a,k}
    :=
    \frac{2\eta}{d_a}g_{a,k}.
    \label{eq:discrete-z-definition}
\end{equation}
The first bound in
\cref{eq:fixed-epsilon-step-size-threshold} gives
\begin{equation}
    |z_{a,k}|\le\frac12.
    \label{eq:discrete-z-bound}
\end{equation}
Therefore \(1-z_{a,k}>0\), and
\cref{eq:reduced-factor-gradient-descent} implies
\(u_{a,k+1}>0\).

Using
\[
    q_{a,k+1}
    =
    q_{a,k}(1-z_{a,k})^2,
\]
we obtain
\begin{equation}
\begin{aligned}
    4
    \bigl[
        D_h(q^\dagger,q_{k+1})
        -
        D_h(q^\dagger,q_k)
    \bigr]
    &=
    \sum_{a=1}^{m}
    d_a
    \Bigl[
        -2q_a^\dagger\log(1-z_{a,k})
        \\
        &\qquad
        +
        q_{a,k}
        (-2z_{a,k}+z_{a,k}^2)
    \Bigr].
    \label{eq:exact-discrete-bregman-increment}
\end{aligned}
\end{equation}
For \(|z|\le1/2\),
\begin{equation}
    -\log(1-z)
    \le
    z+z^2.
    \label{eq:logarithm-upper-bound}
\end{equation}
Substituting
\cref{eq:logarithm-upper-bound,eq:discrete-z-definition} into
\cref{eq:exact-discrete-bregman-increment} gives
\begin{equation}
\begin{aligned}
    D_h(q^\dagger,q_{k+1})
    -
    D_h(q^\dagger,q_k)
    &\le
    \eta
    \langle q^\dagger-q_k,g_k\rangle
    \\
    &\quad+
    \eta^2
    \sum_{a=1}^{m}
    \frac{
        2q_a^\dagger+q_{a,k}
    }{
        d_a
    }
    g_{a,k}^2.
    \label{eq:discrete-bregman-increment-bound}
\end{aligned}
\end{equation}
Since \(Bq^\dagger=y\),
\begin{equation}
    \langle q^\dagger-q_k,g_k\rangle
    =
    -\frac1n
    \|Bq_k-y\|_2^2.
    \label{eq:discrete-feasible-inner-product}
\end{equation}
Also,
\begin{equation}
    \|g_k\|_2^2
    \le
    \frac{
        \|B\|_{\mathrm{op}}^2
    }{
        n^2
    }
    \|Bq_k-y\|_2^2.
    \label{eq:gradient-residual-upper-bound}
\end{equation}
The second bound in
\cref{eq:fixed-epsilon-step-size-threshold} gives
\cref{eq:discrete-bregman-descent}. Hence
\(q_{k+1}\in\mathcal{K}_0^{-}\), closing the induction.

Let
\begin{equation}
    \Delta q_k:=q_{k+1}-q_k.
    \label{eq:discrete-q-increment}
\end{equation}
Then
\begin{equation}
    [\Delta q_k]_a
    =
    -\frac{4\eta q_{a,k}}{d_a}g_{a,k}
    +
    \frac{4\eta^2q_{a,k}}{d_a^2}g_{a,k}^2.
    \label{eq:exact-discrete-q-increment}
\end{equation}
Since
\(
\eta|g_{a,k}|/d_a\le1/4
\),
\begin{equation}
\begin{aligned}
    \langle g_k,\Delta q_k\rangle
    &\le
    -3\eta
    \sum_{a=1}^{m}
    \frac{q_{a,k}}{d_a}g_{a,k}^2
    \\
    &\le
    -
    \frac{
        3\eta\underline q_0
    }{
        d_{\max}
    }
    \|g_k\|_2^2.
    \label{eq:discrete-gradient-increment-inner-product}
\end{aligned}
\end{equation}
Furthermore,
\begin{equation}
    \|\Delta q_k\|_2
    \le
    \frac{
        5\eta\overline q_0
    }{
        d_{\min}
    }
    \|g_k\|_2.
    \label{eq:discrete-q-increment-bound}
\end{equation}

Since \(f\) is quadratic,
\begin{equation}
    f(q_{k+1})
    =
    f(q_k)
    +
    \langle g_k,\Delta q_k\rangle
    +
    \frac1{2n}
    \|B\Delta q_k\|_2^2.
    \label{eq:exact-discrete-loss-expansion}
\end{equation}
The third bound in
\cref{eq:fixed-epsilon-step-size-threshold}, together with
\cref{eq:discrete-gradient-increment-inner-product,eq:discrete-q-increment-bound},
implies
\begin{equation}
    f(q_{k+1})
    \le
    f(q_k)
    -
    \frac{
        2\eta\underline q_0
    }{
        d_{\max}
    }
    \|g_k\|_2^2.
    \label{eq:discrete-loss-descent-gradient}
\end{equation}

Set \(r_k:=Bq_k-y\). Since \(Bq^\dagger=y\),
\(r_k\in\operatorname{range}(B)\). Therefore,
\begin{equation}
\begin{aligned}
    \|g_k\|_2^2
    &=
    \frac1{n^2}
    \|B^\top r_k\|_2^2
    \\
    &\ge
    \frac{
        (\sigma_B^{+})^2
    }{
        n^2
    }
    \|r_k\|_2^2
    \\
    &=
    \frac{
        2(\sigma_B^{+})^2
    }{
        n
    }
    f(q_k).
    \label{eq:reduced-loss-PL-inequality}
\end{aligned}
\end{equation}
Combining
\cref{eq:discrete-loss-descent-gradient,eq:reduced-loss-PL-inequality}
gives \cref{eq:discrete-loss-linear-contraction}. The fourth bound in
\cref{eq:fixed-epsilon-step-size-threshold} ensures that the
contraction factor is nonnegative.

The contraction gives geometric decay of \(f(q_k)\). By
\cref{eq:discrete-q-increment-bound} and
\cref{eq:gradient-residual-upper-bound},
\[
    \sum_{k=0}^{\infty}
    \|\Delta q_k\|_2
    <
    \infty.
\]
Thus \(q_k\) converges to some
\(q_\eta\in\mathcal{K}_0^{-}\). The compact set
\(\mathcal{K}_0^{-}\) is bounded away from the boundary of the
positive orthant, so \(q_\eta>0\). Finally,
\cref{eq:discrete-loss-linear-contraction} implies
\(Bq_\eta=y\).
\end{proof}

\subsection{Finite-step selection error and the sequential limit}
\label{subsec:finite-step-selection-error}

Define the forward-divergence level set
\begin{equation}
    \mathcal{K}_0^{+}
    :=
    \left\{
        q\in\mathbb{R}_{+}^{m}:
        D_h(q,q^0)
        \le
        D_h(q^\dagger,q^0)
    \right\}.
    \label{eq:forward-bregman-level-set}
\end{equation}
This set is compact. By optimality of the continuous Bregman
projection,
\[
    \Pi_h(q^0)\in\mathcal{K}_0^{+},
\]
whereas
\cref{thm:finite-step-gradient-descent-convergence} gives
\(q_\eta\in\mathcal{K}_0^{-}\). Define
\begin{equation}
    \widehat q_0
    :=
    \max
    \left\{
        \max_{q\in\mathcal{K}_0^{-}}\|q\|_\infty,
        \max_{q\in\mathcal{K}_0^{+}}\|q\|_\infty
    \right\}.
    \label{eq:unified-coordinate-upper-bound}
\end{equation}
Then
\begin{equation}
    \|q_\eta\|_\infty
    \le
    \widehat q_0,
    \qquad
    \|\Pi_h(q^0)\|_\infty
    \le
    \widehat q_0.
    \label{eq:both-limits-coordinate-bound}
\end{equation}

\begin{theorem}[First-order finite-step selection error]
\label{thm:first-order-discrete-selection-error}
Under the assumptions of
\cref{thm:finite-step-gradient-descent-convergence},
\begin{equation}
    \|q_\eta-\Pi_h(q^0)\|_2
    \le
    \frac{
        4\widehat q_0d_{\max}
    }{
        d_{\min}^{2}\underline q_0
    }
    f(q^0)\eta.
    \label{eq:first-order-discrete-selection-error}
\end{equation}
In particular, for every fixed \(q^0\in\mathbb{R}_{++}^{m}\),
\begin{equation}
    q_\eta
    \longrightarrow
    \Pi_h(q^0)
    \qquad
    \text{as }\eta\downarrow0,
    \label{eq:fixed-epsilon-discrete-to-continuous-limit}
\end{equation}
where \(0<\eta\le\eta_0\).
\end{theorem}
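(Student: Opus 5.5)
The plan is to reproduce, for the discrete iteration, the dual (mirror) argument used in \cref{thm:global-convergence-bregman-projection}, while keeping track of the first-order error in the dual update. Once that error is controlled, strong monotonicity of $\nabla h$ on a bounded box, combined with the orthogonality $\ker B\perp\operatorname{range}(B^\top)$, will compare $q_\eta$ with $p:=\Pi_h(q^0)$.

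\emph{Step 1 (perturbed dual recursion).} From \cref{eq:reduced-predictor-gradient-descent}, $\log q_{a,k+1}=\log q_{a,k}+2\log(1-z_{a,k})$ with $z_{a,k}=2\eta g_{a,k}/d_a$ and $|z_{a,k}|\le 1/2$ by \cref{eq:discrete-z-bound}. By \cref{eq:weighted-entropy-gradient} this gives
\[
[\nabla h(q_{k+1})-\nabla h(q_k)]_a=\tfrac{d_a}{2}\log(1-z_{a,k})=-\eta g_{a,k}+e_{a,k},
\]
where $e_{a,k}:=\tfrac{d_a}{2}\bigl(\log(1-z_{a,k})+z_{a,k}\bigr)$. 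Since $|\log(1-z)+z|\le z^2$ for $|z|\le1/2$, we get $|e_{a,k}|\le \tfrac{d_a}{2}z_{a,k}^2=2\eta^2 g_{a,k}^2/d_a$, and hence $\|e_k\|_2\le\|e_k\|_1\le (2\eta^2/d_{\min})\|g_k\|_2^2$.

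\emph{Step 2 (summing the errors).} From \cref{eq:discrete-loss-descent-gradient}, telescoping and $f\ge0$ give $\eta\sum_k\|g_k\|_2^2\le d_{\max}f(q^0)/(2\underline q_0)$. Therefore $E:=\sum_k e_k$ converges absolutely, with
\[
\|E\|_2\le \frac{\eta\,d_{\max}}{d_{\min}\underline q_0}f(q^0).
\]
Summing the recursion of Step 1 and using that $q_k\to q_\eta>0$ by \cref{thm:finite-step-gradient-descent-convergence}, so that the left-hand side telescopes to a finite limit, we obtain
\[
\nabla h(q_\eta)-\nabla h(q^0)-E=-\tfrac{\eta}{n}B^\top\textstyle\sum_k r_k\in\operatorname{range}(B^\top),
\]
where the series $\sum_k r_k$ converges. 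The convergence follows either from the linear contraction \cref{eq:discrete-loss-linear-contraction}, or simply as a difference of convergent sequences. The analogous optimality relation \cref{eq:continuous-limit-kkt} gives $\nabla h(p)-\nabla h(q^0)\in\operatorname{range}(B^\top)$.

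\emph{Step 3 (comparison).} Both $q_\eta$ and $p$ are feasible, so $q_\eta-p\in\ker B$, which is orthogonal to $\operatorname{range}(B^\top)$. Subtracting the two relations and pairing with $q_\eta-p$ yields
\[
\langle\nabla h(q_\eta)-\nabla h(p),\,q_\eta-p\rangle=\langle E,\,q_\eta-p\rangle.
\]
By \cref{eq:both-limits-coordinate-bound}, both points lie in the box $(0,\widehat q_0]^m$. On this box $\nabla^2h\succeq \tfrac{d_{\min}}{4\widehat q_0}I$ by \cref{eq:weighted-entropy-hessian}, so by the mean value theorem along the segment the left side is at least $\tfrac{d_{\min}}{4\widehat q_0}\|q_\eta-p\|_2^2$. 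Cauchy--Schwarz then gives $\|q_\eta-p\|_2\le (4\widehat q_0/d_{\min})\|E\|_2$, which combined with Step 2 is exactly \cref{eq:first-order-discrete-selection-error}. The limit \cref{eq:fixed-epsilon-discrete-to-continuous-limit} follows immediately, because $\underline q_0$ and $\widehat q_0$ do not depend on $\eta$.

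The main obstacle is Step 2. It requires a bound on $\sum_k\|e_k\|$ of order $\eta$ with constants uniform in $\eta$, and a justification that the accumulated dual increments converge. Both points rest on the $\eta$-uniform compactness of $\mathcal K_0^-$ and on the energy estimate \cref{eq:discrete-loss-descent-gradient}. I would therefore verify carefully that the constant in that energy estimate indeed produces the factor $d_{\max}/(d_{\min}\underline q_0)$.
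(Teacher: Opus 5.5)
Your proposal is correct and follows essentially the same route as the paper's proof. It uses the same perturbed dual recursion with error $[e_k]_a=\tfrac{d_a}{2}\bigl(\log(1-z_{a,k})+z_{a,k}\bigr)$, and sums the bound $\|e_k\|_2\le (2\eta^2/d_{\min})\|g_k\|_2^2$ against the energy estimate \cref{eq:discrete-loss-descent-gradient}, which gives $\eta\sum_k\|g_k\|_2^2\le d_{\max}f(q^0)/(2\underline q_0)$ and hence exactly the factor you wanted to verify. It then passes to the limit using closedness of $\operatorname{range}(B^\top)$, and finishes with the strong-monotonicity comparison on $\ker B$ with modulus $d_{\min}/(4\widehat q_0)$.
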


\begin{proof}
Write \(g_k:=g(q_k)\). By
\cref{eq:reduced-predictor-gradient-descent},
\[
    q_{a,k+1}
    =
    q_{a,k}(1-z_{a,k})^2,
\]
where \(z_{a,k}\) is defined in
\cref{eq:discrete-z-definition}. Since
\[
    [\nabla h(q)]_a
    =
    \frac{d_a}{4}\log q_a,
\]
we have
\begin{equation}
\begin{aligned}
    [\nabla h(q_{k+1})-\nabla h(q_k)]_a
    &=
    \frac{d_a}{4}
    \log
    \left(
        \frac{q_{a,k+1}}{q_{a,k}}
    \right)
    \\
    &=
    \frac{d_a}{2}
    \log(1-z_{a,k}).
    \label{eq:discrete-dual-increment-exact}
\end{aligned}
\end{equation}
Define \(e_k\in\mathbb{R}^{m}\) by
\begin{equation}
    \nabla h(q_{k+1})-\nabla h(q_k)
    =
    -\eta g_k+e_k.
    \label{eq:discrete-dual-increment-error}
\end{equation}
Since
\(
\eta g_{a,k}=d_az_{a,k}/2
\),
\begin{equation}
    [e_k]_a
    =
    \frac{d_a}{2}
    \bigl(
        \log(1-z_{a,k})+z_{a,k}
    \bigr).
    \label{eq:dual-error-coordinate}
\end{equation}
For \(|z|\le1/2\),
\begin{equation}
    |\log(1-z)+z|
    \le
    z^2.
    \label{eq:logarithm-remainder-bound}
\end{equation}
Consequently,
\[
    |[e_k]_a|
    \le
    \frac{2\eta^2}{d_a}g_{a,k}^2
    \le
    \frac{2\eta^2}{d_{\min}}g_{a,k}^2,
\]
and hence
\begin{equation}
    \|e_k\|_2
    \le
    \frac{2\eta^2}{d_{\min}}
    \|g_k\|_2^2.
    \label{eq:dual-error-one-step-bound}
\end{equation}

Summing \cref{eq:discrete-loss-descent-gradient} gives
\begin{equation}
    \frac{
        2\eta\underline q_0
    }{
        d_{\max}
    }
    \sum_{k=0}^{\infty}
    \|g_k\|_2^2
    \le
    f(q^0).
    \label{eq:summed-gradient-square-bound}
\end{equation}
Combining this with
\cref{eq:dual-error-one-step-bound} yields
\begin{equation}
    \sum_{k=0}^{\infty}\|e_k\|_2
    \le
    \frac{
        \eta d_{\max}
    }{
        d_{\min}\underline q_0
    }
    f(q^0).
    \label{eq:absolute-summability-dual-error}
\end{equation}
Thus
\begin{equation}
    e_\eta
    :=
    \sum_{k=0}^{\infty}e_k
    \label{eq:accumulated-dual-error-definition}
\end{equation}
is well defined and satisfies
\begin{equation}
    \|e_\eta\|_2
    \le
    \frac{
        \eta d_{\max}
    }{
        d_{\min}\underline q_0
    }
    f(q^0).
    \label{eq:discrete-limit-kkt-error}
\end{equation}

For \(N\ge1\), summing
\cref{eq:discrete-dual-increment-error} gives
\begin{equation}
    \nabla h(q_N)-\nabla h(q^0)
    =
    w_N+e_N,
    \label{eq:finite-discrete-dual-sum}
\end{equation}
where
\begin{equation}
    w_N
    :=
    -\eta
    \sum_{k=0}^{N-1}g_k
    \in\operatorname{range}(B^\top),
    \qquad
    e_N
    :=
    \sum_{k=0}^{N-1}e_k.
    \label{eq:finite-range-error-decomposition}
\end{equation}
Since \(q_N\to q_\eta\) and \(e_N\to e_\eta\), the sequence
\(w_N\) converges to
\[
    w_\eta
    :=
    \nabla h(q_\eta)
    -
    \nabla h(q^0)
    -
    e_\eta.
\]
The subspace \(\operatorname{range}(B^\top)\) is closed, so
\begin{equation}
    w_\eta\in\operatorname{range}(B^\top).
    \label{eq:discrete-limit-range-component}
\end{equation}
Equivalently,
\begin{equation}
    \nabla h(q_\eta)-\nabla h(q^0)
    =
    w_\eta+e_\eta.
    \label{eq:discrete-limit-approximate-kkt}
\end{equation}

The continuous Bregman projection is strictly positive and satisfies
\begin{equation}
    \nabla h(\Pi_h(q^0))
    -
    \nabla h(q^0)
    =
    w^0
    \qquad
    \text{for some }
    w^0\in\operatorname{range}(B^\top).
    \label{eq:continuous-bregman-kkt}
\end{equation}
Both \(q_\eta\) and \(\Pi_h(q^0)\) are feasible, so
\begin{equation}
    v_\eta
    :=
    q_\eta-\Pi_h(q^0)
    \in\ker B.
    \label{eq:selection-difference-kernel}
\end{equation}
Subtracting \cref{eq:continuous-bregman-kkt} from
\cref{eq:discrete-limit-approximate-kkt}, taking the inner product
with \(v_\eta\), and using
\(\ker B=\operatorname{range}(B^\top)^\perp\), gives
\begin{equation}
    \left\langle
        v_\eta,
        \nabla h(q_\eta)-\nabla h(\Pi_h(q^0))
    \right\rangle
    =
    \langle v_\eta,e_\eta\rangle.
    \label{eq:selection-error-monotonicity-identity}
\end{equation}

By \cref{eq:both-limits-coordinate-bound}, every coordinate on the
line segment joining \(q_\eta\) and \(\Pi_h(q^0)\) is at most
\(\widehat q_0\). Therefore,
\begin{equation}
    \nabla^2h(q)
    \succeq
    \frac{
        d_{\min}
    }{
        4\widehat q_0
    }I_m
    \label{eq:segment-strong-convexity}
\end{equation}
along that segment. Hence
\begin{equation}
    \left\langle
        v_\eta,
        \nabla h(q_\eta)-\nabla h(\Pi_h(q^0))
    \right\rangle
    \ge
    \frac{
        d_{\min}
    }{
        4\widehat q_0
    }
    \|v_\eta\|_2^2.
    \label{eq:selection-strong-monotonicity}
\end{equation}
Combining
\cref{eq:selection-error-monotonicity-identity,eq:selection-strong-monotonicity}
with Cauchy--Schwarz gives
\[
    \frac{
        d_{\min}
    }{
        4\widehat q_0
    }
    \|v_\eta\|_2^2
    \le
    \|v_\eta\|_2\|e_\eta\|_2.
\]
If \(v_\eta=0\), the result is immediate. Otherwise, cancel one
factor of \(\|v_\eta\|_2\) and use
\cref{eq:discrete-limit-kkt-error} to obtain
\cref{eq:first-order-discrete-selection-error}.
\end{proof}

\begin{corollary}[Sequential small-step and small-initialization limit]
\label{cor:sequential-discrete-implicit-bias}
For each \(\varepsilon\in(0,1)\), apply
\cref{thm:finite-step-gradient-descent-convergence} with
\(q^0=q_\varepsilon^0\), and denote its step-size threshold by
\(\eta_\varepsilon\) and its limiting interpolant by
\(q_{\varepsilon,\eta}\). Then
\begin{equation}
    \lim_{\eta\downarrow0}
    q_{\varepsilon,\eta}
    =
    q_\varepsilon,
    \label{eq:fixed-epsilon-small-step-limit}
\end{equation}
where \(0<\eta\le\eta_\varepsilon\). Consequently,
\begin{equation}
    \lim_{\varepsilon\downarrow0}
    \lim_{\eta\downarrow0}
    q_{\varepsilon,\eta}
    =
    q_{\mathrm{ent}}.
    \label{eq:sequential-reduced-limit}
\end{equation}
Equivalently, with
\begin{equation}
    Q_{\varepsilon,\eta}
    :=
    Q(q_{\varepsilon,\eta}),
    \label{eq:discrete-selected-matrix}
\end{equation}
one has
\begin{equation}
    \lim_{\varepsilon\downarrow0}
    \lim_{\eta\downarrow0}
    Q_{\varepsilon,\eta}
    =
    Q_{\mathrm{ent}}.
    \label{eq:sequential-matrix-limit}
\end{equation}
In particular, the sequential limit is a globally minimum-trace PSD
interpolant.
\end{corollary}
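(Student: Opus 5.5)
The corollary follows by composing two earlier results, and the plan is to invoke them in the stated order of limits. First, fix $\varepsilon\in(0,1)$ and set $q^0=q_\varepsilon^0=\varepsilon^2\mathbf{1}\in\mathbb{R}^m_{++}$. This is a legitimate strictly positive initialization, so \cref{thm:finite-step-gradient-descent-convergence} applies with its threshold $\eta_\varepsilon:=\eta_0(q_\varepsilon^0)$. For every $0<\eta\le\eta_\varepsilon$ it yields a limit $q_{\varepsilon,\eta}\in\mathcal{F}_{++}$. Next, \cref{thm:first-order-discrete-selection-error} gives
\[
\|q_{\varepsilon,\eta}-\Pi_h(q_\varepsilon^0)\|_2\le C_\varepsilon\,\eta,
\qquad
C_\varepsilon:=\frac{4\widehat q_0 d_{\max}}{d_{\min}^2\underline q_0}f(q_\varepsilon^0).
\]
The constants $\widehat q_0,\underline q_0$ are computed from the compact level sets $\mathcal{K}_0^{\pm}$ attached to $q_\varepsilon^0$ and the fixed $q^\dagger$. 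They are finite and positive for fixed $\varepsilon$ by \cref{lem:compact-reverse-bregman-levels}, and they do not depend on $\eta$. Letting $\eta\downarrow0$ therefore gives $q_{\varepsilon,\eta}\to\Pi_h(q_\varepsilon^0)=q_\varepsilon$, by the definition \cref{eq:isotropic-bregman-selected-point}. This proves \cref{eq:fixed-epsilon-small-step-limit}.

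Second, the inner limit is now the function $\varepsilon\mapsto q_\varepsilon$. By \cref{thm:entropic-tie-breaking}, $q_\varepsilon\to q_{\mathrm{ent}}$ as $\varepsilon\downarrow0$, which gives \cref{eq:sequential-reduced-limit}. The matrix statement \cref{eq:sequential-matrix-limit} follows because $q\mapsto Q(q)$ is linear, hence continuous; the paper's bound $\|Q(p)-Q(q)\|_F\le\sqrt{d_{\max}}\|p-q\|_2$ can be applied at each stage. Finally, $Q_{\mathrm{ent}}\in\mathcal{Q}_{\mathrm{tr}}$ by \cref{thm:entropic-tie-breaking}, which relies on \cref{prop:global-psd-trace-equivalence}. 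So the sequential limit is a globally minimum-trace PSD interpolant.

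There is no real obstacle. The one point needing care is that the constant $C_\varepsilon$ may blow up as $\varepsilon\downarrow0$, since $\underline q_0$ shrinks with the initialization scale. This is exactly why the limits must be taken in the stated order and not jointly. The argument should state explicitly that $C_\varepsilon$ is uniform in $\eta$ only for fixed $\varepsilon$, and that no claim is made about the reversed order of limits or a joint limit.
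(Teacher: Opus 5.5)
Your proposal is correct and follows the paper's proof. For fixed $\varepsilon$ you use that the constant in \cref{thm:first-order-discrete-selection-error} is finite and independent of $\eta$ to get $q_{\varepsilon,\eta}\to q_\varepsilon$, then conclude with \cref{thm:entropic-tie-breaking} and continuity of $Q(\cdot)$, and your caveat that this uniformity holds only for fixed $\varepsilon$ is the point the paper records in \cref{rem:no-explicit-joint-scaling}.
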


\begin{proof}
For fixed \(\varepsilon\), all constants in
\cref{eq:first-order-discrete-selection-error} are finite and
independent of \(\eta\). Thus
\[
    \|q_{\varepsilon,\eta}-q_\varepsilon\|_2
    \longrightarrow0
    \qquad
    \text{as }\eta\downarrow0.
\]
This proves \cref{eq:fixed-epsilon-small-step-limit}. Combining it
with \cref{thm:entropic-tie-breaking} gives
\cref{eq:sequential-reduced-limit}. Continuity of \(Q(\cdot)\) gives
\cref{eq:sequential-matrix-limit}.
\end{proof}

\begin{remark}[No explicit joint scaling claim]
\label{rem:no-explicit-joint-scaling}
The constants entering the finite-step theorem, including
\[
    \underline q_0,
    \qquad
    \overline q_0,
    \qquad
    \widehat q_0,
    \qquad
    \eta_0,
\]
depend on the fixed initialization \(q^0\). Under the isotropic
specialization \(q^0=q_\varepsilon^0\), they may deteriorate as
\(\varepsilon\downarrow0\). Accordingly,
\cref{cor:sequential-discrete-implicit-bias} establishes only the
sequential limit
\[
    \eta\downarrow0
    \quad\text{followed by}\quad
    \varepsilon\downarrow0.
\]
No explicit simultaneous schedule
\(\eta=\eta(\varepsilon)\) is claimed.
\end{remark}

\section{Numerical Experiments}
\label{sec:numerical-experiments}

We now evaluate the main structural, optimization, and implicit-bias
predictions developed in Sections~2--4.  The experiments are organized
to follow the theoretical progression of the paper:
\[
    \text{exact quotient dynamics}
    \;\longrightarrow\;
    \text{effective curvature and recovery}
    \;\longrightarrow\;
    \text{Bregman, trace, and entropy selection}.
\]
All computations are implemented in PyTorch using double precision.
Unless stated otherwise, the predictor associated with a factor
\(U\in\mathbb{R}^{d\times r}\) is
\[
    Q(U)=UU^\top,
\]
and parameter-space errors are measured by the orthogonal Procrustes
distance
\[
    d_{\mathrm P}(U,V)
    :=
    \min_{R\in O(r)}
    \|U-VR\|_F.
\]
The continuous-time dynamics are integrated by a fourth-order
Runge--Kutta method, while the discrete experiments use the exact factor
gradient-descent recursion.  The figures report the raw numerical
values rather than normalized quantities chosen to enforce agreement
with the theory.

\subsection{Quotient invariance and exact predictor dynamics}
\label{subsec:experiment-quotient-dynamics}

We first test the exact identities of Section~\ref{sec:quotient-dynamics}.  We use
\[
    d=20,
    \qquad
    r=5,
    \qquad
    n=80,
\]
and generate five orthogonally equivalent initial factors
\[
    U_0^{(j)}=U_0R_j,
    \qquad
    R_j\in O(r).
\]
All representatives are trained using the same data and step size.
Although their factor trajectories differ, the quotient theory predicts
that their predictor trajectories coincide.

For a reference trajectory \(U_k^{(0)}\), define the representative
discrepancy
\begin{equation}
    E_{\mathrm{inv}}(k)
    :=
    \max_j
    \frac{
        \|U_k^{(j)}(U_k^{(j)})^\top
        -
        U_k^{(0)}(U_k^{(0)})^\top\|_F
    }{
        \|U_k^{(0)}(U_k^{(0)})^\top\|_F
    }.
    \label{eq:experimental-representative-discrepancy}
\end{equation}
The largest discrepancy over the complete trajectories is
\[
    \max_k E_{\mathrm{inv}}(k)
    =
    3.11\times10^{-15},
\]
which is at the level of double-precision roundoff.

We next test the exact predictor recursion of
\cref{prop:exact-discrete-predictor-dynamics}.  If
\[
    U_{k+1}
    =
    U_k-2\eta G_kU_k,
    \qquad
    G_k=\nabla_Q\ell(Q_k),
\]
then the predictor satisfies
\[
    Q_{k+1}
    =
    (I_d-2\eta G_k)
    Q_k
    (I_d-2\eta G_k).
\]
The relative recurrence residual
\begin{equation}
    E_{\mathrm{rec}}(k)
    :=
    \frac{
        \left\|
            Q_{k+1}
            -
            (I_d-2\eta G_k)Q_k(I_d-2\eta G_k)
        \right\|_F
    }{
        \|Q_{k+1}\|_F
    }
    \label{eq:experimental-recurrence-residual}
\end{equation}
has maximum value
\[
    \max_k E_{\mathrm{rec}}(k)
    =
    4.96\times10^{-16}.
\]

Finally, expanding the exact recursion gives
\[
    \frac{Q_{k+1}-Q_k}{\eta}
    +
    2(G_kQ_k+Q_kG_k)
    =
    4\eta G_kQ_kG_k.
\]
Hence the discrepancy between the finite-step predictor update and the
quotient-gradient direction must be linear in \(\eta\).  A log--log
regression gives the empirical slope
\[
    0.9984,
\]
in agreement with the exact \(O(\eta)\) correction.

\begin{figure}[H]
    \centering
    \begin{subfigure}[t]{0.32\textwidth}
        \centering
        \includegraphics[
            width=\linewidth
        ]{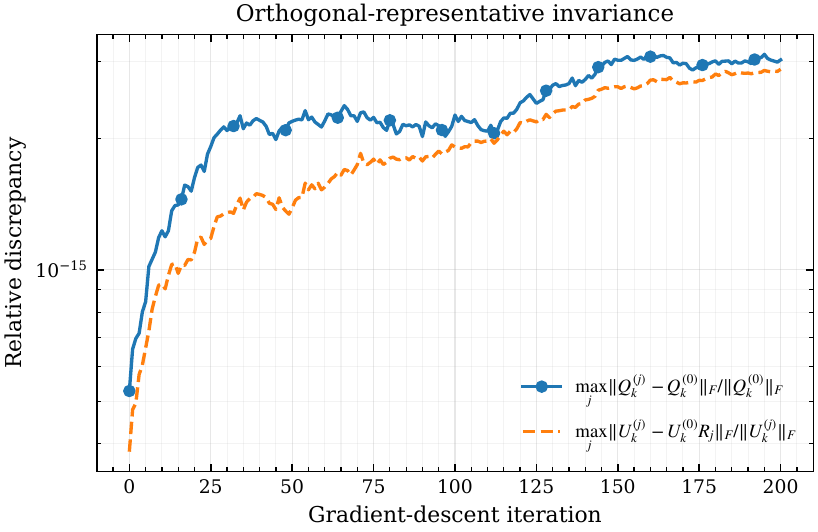}
        \caption{Predictor discrepancy across orthogonal representatives.}
        \label{fig:exp1-orthogonal-invariance}
    \end{subfigure}
    \hfill
    \begin{subfigure}[t]{0.32\textwidth}
        \centering
        \includegraphics[
            width=\linewidth
        ]{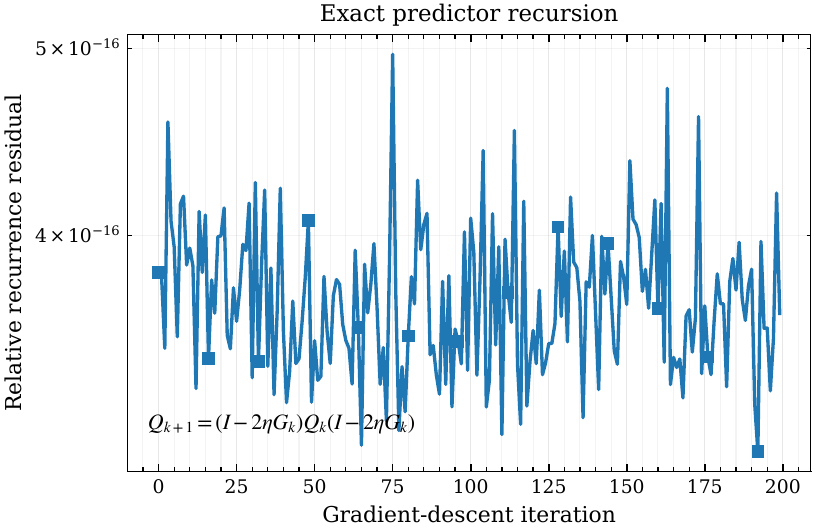}
        \caption{Residual of the exact predictor recursion.}
        \label{fig:exp1-exact-recurrence}
    \end{subfigure}
    \hfill
    \begin{subfigure}[t]{0.32\textwidth}
        \centering
        \includegraphics[
            width=\linewidth
        ]{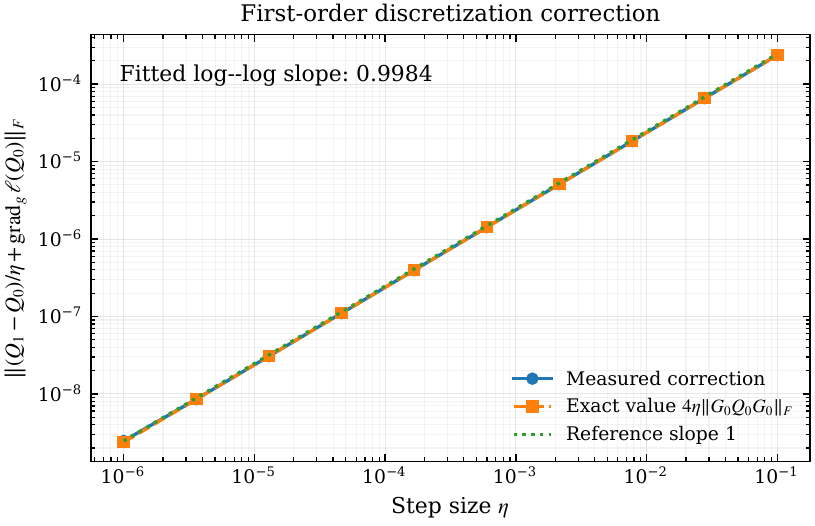}
        \caption{Linear scaling of the finite-step correction.}
        \label{fig:exp1-discretization}
    \end{subfigure}
    \caption{
        Exact quotient and predictor identities.
        Orthogonally equivalent factor initializations generate identical
        predictor trajectories up to floating-point roundoff.  The exact
        congruence recursion is also satisfied to numerical precision, and
        the discrepancy from the continuous quotient-gradient direction
        scales as \(O(\eta)\).
    }
    \label{fig:experiment1-quotient-dynamics}
\end{figure}

These results primarily serve as implementation checks for exact
algebraic identities.  They verify that the numerical dynamics used in
the remaining experiments are consistent with
\cref{thm:exact-quotient-gradient-flow,prop:exact-discrete-predictor-dynamics}.

\subsection{Effective curvature and the local training rate}
\label{subsec:experiment-effective-curvature}

We next test whether the generalized Hessian associated with the
quotient metric predicts the local rate of the actual factor gradient
flow.  We consider Gaussian rank-one measurements
\[
    x_i\sim N(0,I_d),
    \qquad
    y_i=x_i^\top Q_\star x_i,
\]
with
\[
    d=8,
    \qquad
    r=2,
    \qquad
    n=800.
\]
The largest nonzero eigenvalue of \(Q_\star\) is fixed, while the
smallest nonzero eigenvalue is varied over
\[
    \lambda_r(Q_\star)
    \in
    \{1,0.7,0.5,0.35,0.25\}.
\]

At \(U_\star\), we construct a Frobenius-orthonormal basis of the
horizontal space and solve the generalized eigenvalue problem
associated with the bilinear form in
\cref{thm:effective-hessian}.  The smallest quotient eigenvalue is
denoted by
\[
    \lambda_{\min}^{\mathrm{eff}}.
\]
The initial factor is then perturbed by a small displacement in the
corresponding slow eigendirection.  We estimate the observed local rate
from the Procrustes error by fitting
\begin{equation}
    \log d_{\mathrm P}(U(t),U_\star)
    =
    c-\widehat\lambda_{\mathrm{flow}}t
    \label{eq:experimental-local-rate-fit}
\end{equation}
over the local linear regime.

\begin{table}[H]
    \centering
    \caption{
        Effective curvature and measured local gradient-flow rate.
        The ratios are uniformly within \(2.4\times10^{-4}\) of one.
    }
    \label{tab:effective-curvature-rates}
    \begin{tabular}{cccc}
        \toprule
        \(\lambda_r(Q_\star)\)
        &
        \(\lambda_{\min}^{\mathrm{eff}}\)
        &
        \(\widehat\lambda_{\mathrm{flow}}\)
        &
        \(\widehat\lambda_{\mathrm{flow}}/
        \lambda_{\min}^{\mathrm{eff}}\)
        \\
        \midrule
        \(1.00\) & \(3.008130\) & \(3.008125\) & \(0.999998\) \\
        \(0.70\) & \(2.301692\) & \(2.301375\) & \(0.999862\) \\
        \(0.50\) & \(1.658780\) & \(1.659065\) & \(1.000172\) \\
        \(0.35\) & \(1.165631\) & \(1.165867\) & \(1.000203\) \\
        \(0.25\) & \(0.834176\) & \(0.834372\) & \(1.000234\) \\
        \bottomrule
    \end{tabular}
\end{table}

As shown in \cref{tab:effective-curvature-rates}, the measured rates
closely match the smallest effective eigenvalues:
\[
    \frac{
        \widehat\lambda_{\mathrm{flow}}
    }{
        \lambda_{\min}^{\mathrm{eff}}
    }
    \in
    [0.999862,1.000234].
\]
All exponential regressions have coefficients of determination
numerically indistinguishable from one.  Moreover, after rescaling time
by \(\lambda_{\min}^{\mathrm{eff}}\), the local trajectories nearly
collapse onto a common exponential profile.

\begin{figure}[H]
    \centering
    \begin{subfigure}[t]{0.32\textwidth}
        \centering
        \includegraphics[
            width=\linewidth
        ]{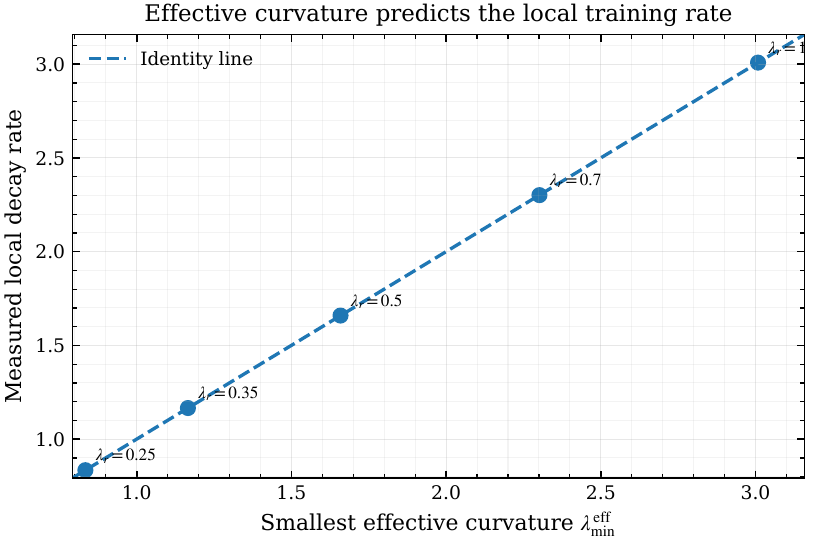}
        \caption{Measured rate versus effective curvature.}
        \label{fig:exp2-rate-vs-curvature}
    \end{subfigure}
    \hfill
    \begin{subfigure}[t]{0.32\textwidth}
        \centering
        \includegraphics[
            width=\linewidth
        ]{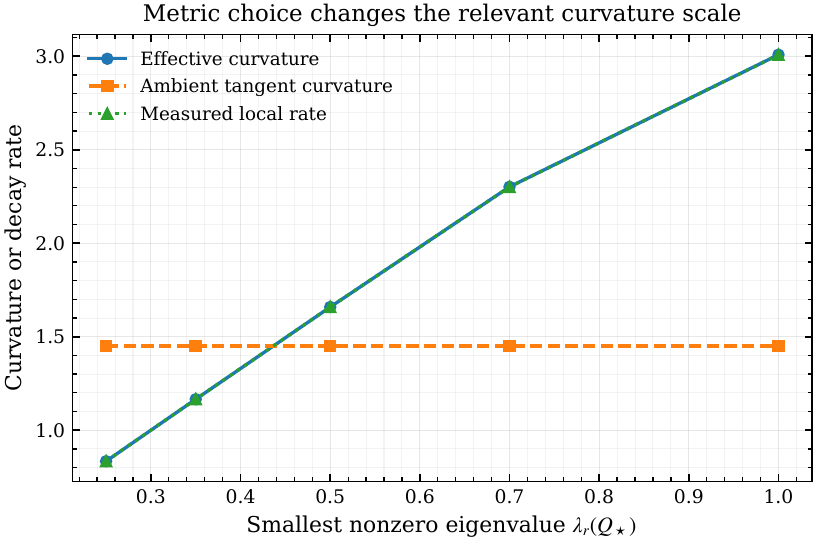}
        \caption{Comparison of curvature scales induced by different metrics.}
        \label{fig:exp2-metric-comparison}
    \end{subfigure}
    \hfill
    \begin{subfigure}[t]{0.32\textwidth}
        \centering
        \includegraphics[
            width=\linewidth
        ]{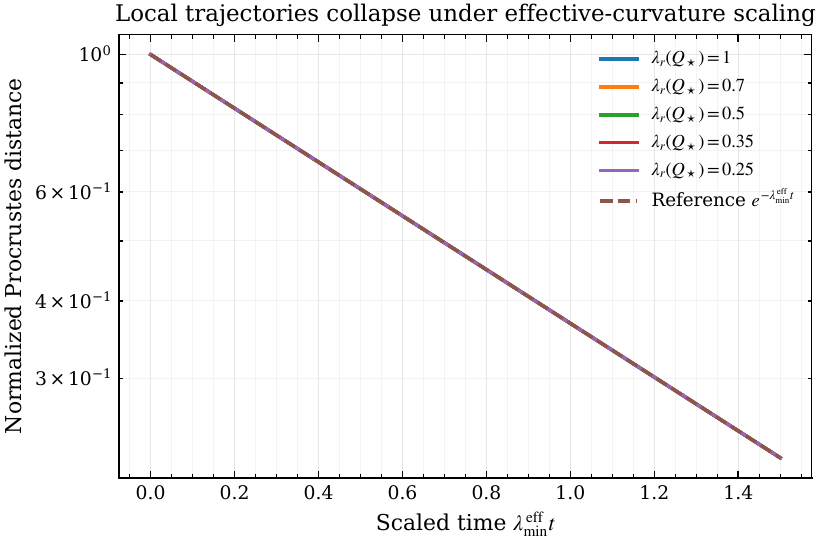}
        \caption{Local trajectories after curvature rescaling.}
        \label{fig:exp2-scaled-trajectories}
    \end{subfigure}
    \caption{
        Effective curvature and local optimization.
        The generalized quotient Hessian predicts the slow local
        convergence rate of the actual factor gradient flow.  The comparison
        is local: the perturbations are small and aligned with the slow
        effective-curvature eigendirection.
    }
    \label{fig:experiment2-effective-curvature}
\end{figure}

This experiment gives direct numerical support to the local
interpretation of \cref{thm:effective-hessian}.  It does not assert that
one Hessian eigenvalue determines the complete nonlinear trajectory
from an arbitrary initialization; rather, it confirms that the
effective Hessian gives the correct asymptotic scale in the local slow
mode.

\subsection{Spectral initialization and Gaussian recovery}
\label{subsec:experiment-spectral-initialization}

We next examine the end-to-end recovery mechanism of Section~3.  The
moment estimator is
\begin{equation}
    M_n
    =
    \frac{1}{2n}
    \sum_{i=1}^n
    y_i(x_ix_i^\top-I_d),
    \label{eq:experimental-moment-estimator}
\end{equation}
and the initializer is obtained by rank-\(r\) PSD truncation:
\[
    Q_0=\mathcal P_{r,+}(M_n),
    \qquad
    Q_0=U_0U_0^\top.
\]
We use
\[
    d=10,
    \qquad
    r=2,
    \qquad
    p
    =
    dr-\frac{r(r-1)}{2}
    =
    19,
\]
and perform \(16\) independent trials at each sample ratio
\[
    \frac{n}{p}
    \in
    \{2,4,8,16,32\}.
\]
The target satisfies
\[
    \lambda_r(Q_\star)=0.5.
\]

The theoretical basin radius used in
\cref{thm:initialization-enters-basin} is
\begin{equation}
    \rho_n
    =
    \frac{
        \sqrt{\lambda_r(Q_\star)}
    }{
        4(d+3)
    }.
    \label{eq:experimental-basin-radius}
\end{equation}
For each trial, we record whether the initializer is full rank, whether
\[
    d_{\mathrm P}(U_0,U_\star)\le \rho_n,
\]
and whether the final predictor satisfies the prescribed recovery
tolerance.

\begin{table}[H]
    \centering
    \caption{
        Spectral initialization and recovery over \(16\) independent trials.
        The theoretical basin is not reached at any tested sample size,
        although empirical recovery becomes reliable at substantially
        smaller \(n\).
    }
    \label{tab:spectral-initialization-recovery}
    \begin{tabular}{ccccc}
        \toprule
        \(n/p\)
        &
        \(n\)
        &
        Full-rank initialization
        &
        Basin hit rate
        &
        Recovery rate
        \\
        \midrule
        \(2\)  & \(38\)  & \(100\%\) & \(0\%\) & \(12.5\%\) \\
        \(4\)  & \(76\)  & \(100\%\) & \(0\%\) & \(93.75\%\) \\
        \(8\)  & \(152\) & \(100\%\) & \(0\%\) & \(100\%\) \\
        \(16\) & \(304\) & \(100\%\) & \(0\%\) & \(100\%\) \\
        \(32\) & \(608\) & \(100\%\) & \(0\%\) & \(100\%\) \\
        \bottomrule
    \end{tabular}
\end{table}

The moment-estimator error decreases with the sample size, and the
empirical recovery probability rises rapidly from \(12.5\%\) at
\(n/p=2\) to \(100\%\) at \(n/p\geq8\).  However, the theoretical basin
hit rate remains zero throughout the tested range.  The explicit
sufficient sample requirement obtained from the current constants in
\cref{thm:end-to-end-gaussian-recovery} is approximately
\[
    N_\star
    \simeq
    2.01\times10^{16},
\]
whereas the largest experimental sample size is \(608\).

\begin{figure}[H]
    \centering
    \begin{subfigure}[t]{0.49\textwidth}
        \centering
        \includegraphics[
            width=\linewidth
        ]{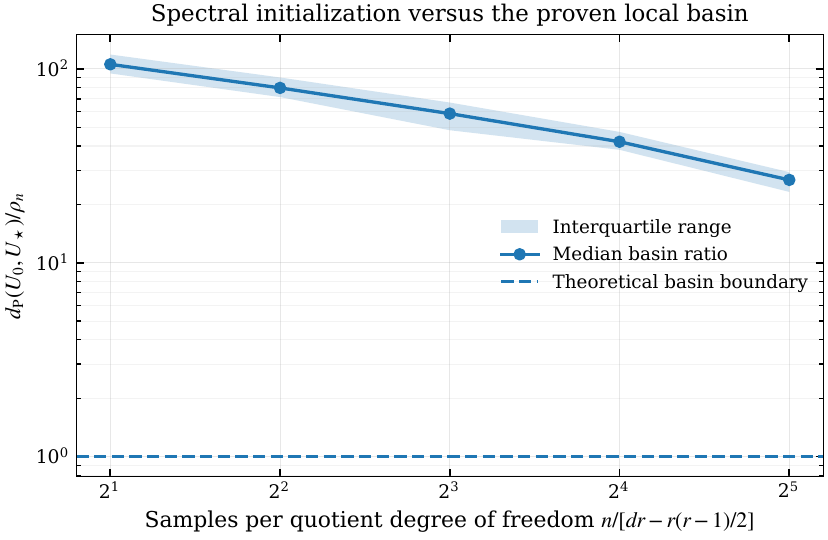}
        \caption{
            Ratio \(d_{\mathrm P}(U_0,U_\star)/\rho_n\).
        }
        \label{fig:exp3-basin-ratio}
    \end{subfigure}
    \hfill
    \begin{subfigure}[t]{0.49\textwidth}
        \centering
        \includegraphics[
            width=\linewidth
        ]{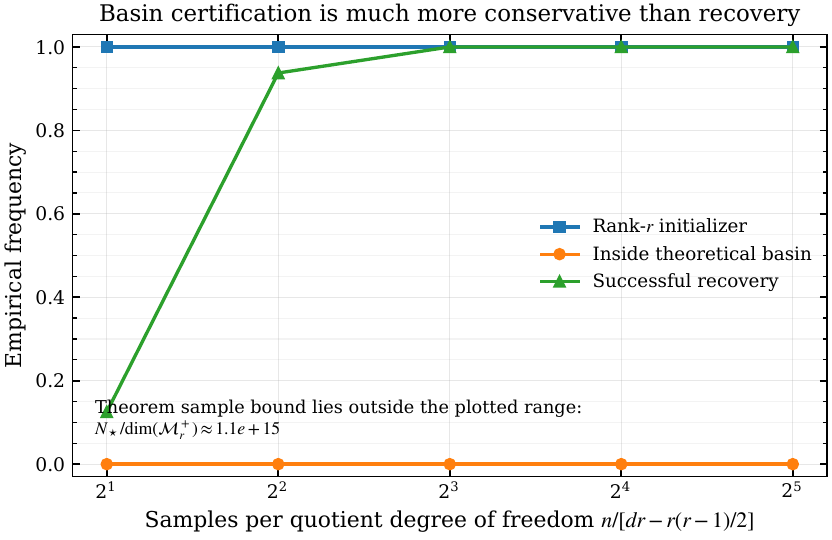}
        \caption{
            Full-rank, theoretical-basin, and empirical-recovery rates.
        }
        \label{fig:exp3-recovery-rates}
    \end{subfigure}
    \caption{
        Spectral initialization and Gaussian recovery.
        Recovery becomes reliable far below the explicit sample size required
        by the current sufficient condition.  Thus the experiment supports
        the qualitative initialization-and-recovery mechanism while exposing
        the conservativeness of the quantitative basin estimate.
    }
    \label{fig:experiment3-spectral-initialization}
\end{figure}

This result does not contradict
\cref{thm:initialization-enters-basin,thm:end-to-end-gaussian-recovery}, because those results provide
sufficient rather than necessary conditions.  At the same time, the
experiment is not a direct numerical verification of the stated basin
condition: every tested instance lies outside the region certified by
the theorem.  The appropriate conclusion is that the moment initializer
and subsequent factor optimization work in a substantially larger
empirical region than is captured by the current finite-sample
constants.

\subsection{Conditioning, linear convergence, and oracle step sizes}
\label{subsec:experiment-conditioning-step-size}

To isolate the optimization geometry from empirical-operator
fluctuations, we next use the exact Gaussian population operator
\[
    \mathcal T(H)
    =
    2H+\operatorname{tr}(H)I_d.
\]
We set
\[
    d=8,
    \qquad
    r=2,
\]
and vary the condition number over
\[
    \kappa(Q_\star)
    \in
    \{1,2,4,8\}.
\]
The largest eigenvalue is fixed at one, so that
\[
    \lambda_r(Q_\star)
    =
    \kappa(Q_\star)^{-1}.
\]

\begin{table}[H]
    \centering
    \caption{
        Conditioning, effective curvature, and local gradient-flow rate for
        the exact population objective.
    }
    \label{tab:conditioning-flow-rates}
    \begin{tabular}{ccccc}
        \toprule
        \(\kappa(Q_\star)\)
        &
        \(\lambda_r(Q_\star)\)
        &
        \(\lambda_{\min}^{\mathrm{eff}}\)
        &
        \(\widehat\lambda_{\mathrm{flow}}\)
        &
        Ratio
        \\
        \midrule
        \(1\) & \(1\)     & \(4.0\) & \(3.99991\) & \(0.999978\) \\
        \(2\) & \(0.5\)   & \(2.0\) & \(1.99995\) & \(0.999977\) \\
        \(4\) & \(0.25\)  & \(1.0\) & \(0.99998\) & \(0.999978\) \\
        \(8\) & \(0.125\) & \(0.5\) & \(0.49999\) & \(0.999978\) \\
        \bottomrule
    \end{tabular}
\end{table}

The effective curvature decreases linearly with
\(\lambda_r(Q_\star)\), and the measured gradient-flow rate agrees with
it to approximately five significant digits.  In this population
example,
\[
    \lambda_{\min}^{\mathrm{eff}}
    =
    4\lambda_r(Q_\star).
\]
By comparison, the secant argument in
\cref{thm:local-secant-regularity,thm:local-gradient-flow-convergence}
provides the lower rate
\[
    \alpha_\star
    =
    \lambda_r(Q_\star).
\]
Thus the theorem correctly captures the dependence on the smallest
signal eigenvalue, but its explicit rate constant is conservative by a
factor of approximately four in this example.

For gradient descent, we normalize the step size by the oracle
sufficient bound appearing in
\cref{thm:local-gradient-descent-convergence}.  The corresponding
oracle values for
\(\kappa(Q_\star)=1,2,4,8\) are approximately
\[
    5.40\times10^{-4},
    \qquad
    2.81\times10^{-4},
    \qquad
    1.45\times10^{-4},
    \qquad
    7.41\times10^{-5}.
\]
We sweep the multipliers
\[
    \frac{\eta}{\eta_{\mathrm{oracle}}}
    \in
    \{1,10,100,500,1000,2000,5000,10000\}.
\]
Step sizes far above the oracle bound can remain stable, but sufficiently
large multipliers produce oscillations and loss of accuracy.  For the
most poorly conditioned target,
\[
    \kappa(Q_\star)=8,
    \qquad
    \eta=10000\,\eta_{\mathrm{oracle}},
\]
the iteration diverges numerically.

\begin{figure}[H]
    \centering
    \begin{subfigure}[t]{0.32\textwidth}
        \centering
        \includegraphics[
            width=\linewidth
        ]{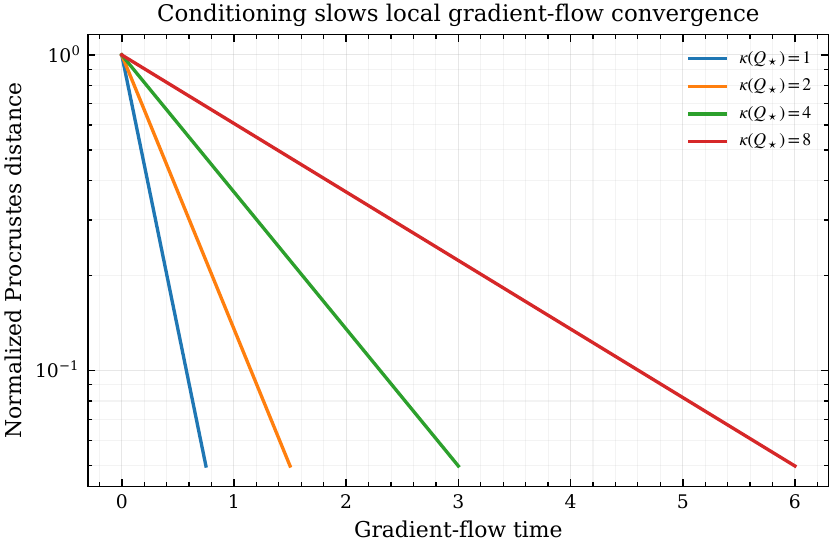}
        \caption{Gradient-flow decay for different condition numbers.}
        \label{fig:exp4-conditioning-flow}
    \end{subfigure}
    \hfill
    \begin{subfigure}[t]{0.32\textwidth}
        \centering
        \includegraphics[
            width=\linewidth
        ]{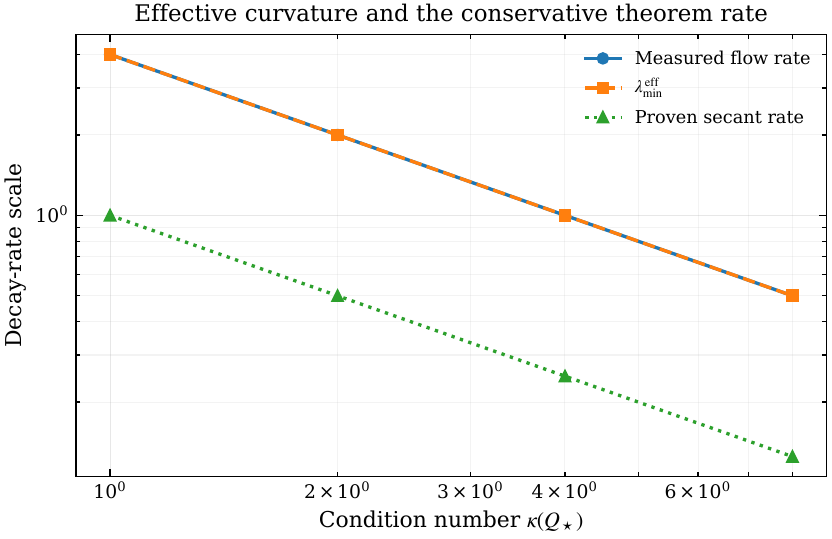}
        \caption{Effective, measured, and secant-guaranteed rates.}
        \label{fig:exp4-rate-comparison}
    \end{subfigure}
    \hfill
    \begin{subfigure}[t]{0.32\textwidth}
        \centering
        \includegraphics[
            width=\linewidth
        ]{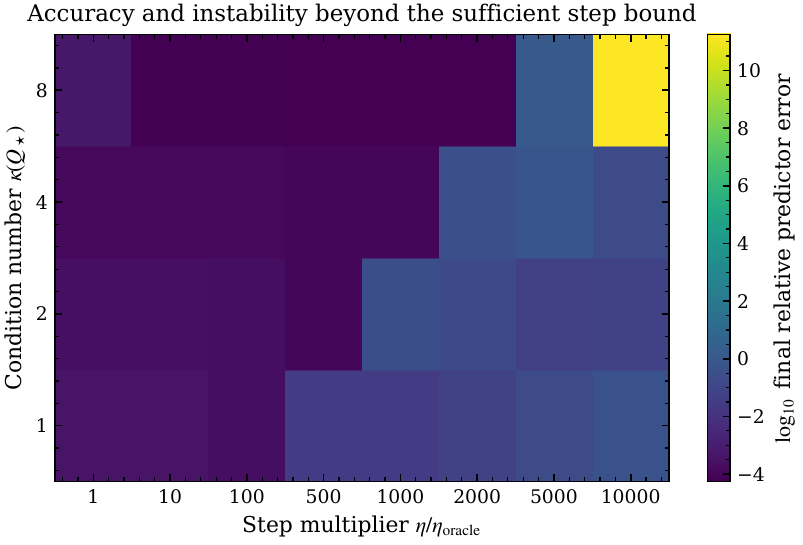}
        \caption{Empirical stability over condition numbers and step sizes.}
        \label{fig:exp4-stability-map}
    \end{subfigure}
    \caption{
        Conditioning and discrete stability.
        Poorer conditioning decreases the effective curvature and slows the
        local flow.  The oracle step-size bound identifies a safe regime but
        is substantially smaller than the empirical stability threshold.
    }
    \label{fig:experiment4-conditioning}
\end{figure}

Accordingly, the oracle step size should be interpreted as a sufficient
guarantee rather than an accurate characterization of the stability
boundary.  The experiment supports the qualitative dependence of the
rate and stable step size on the signal spectrum, while showing that
the explicit constants can be sharpened.

\subsection{Bregman selection under commuting measurements}
\label{subsec:experiment-bregman-selection}

We now turn to the implicit-bias results of Section~4.  In the invariant
joint-spectral coordinates, the continuous factor gradient flow is
\begin{equation}
    \dot q_a
    =
    -
    \frac{4q_a}{d_a}
    \left[
        \frac{1}{n}
        B^\top(Bq-y)
    \right]_a,
    \qquad
    q_a(0)=\varepsilon^2.
    \label{eq:experimental-reduced-gradient-flow}
\end{equation}
The flow is integrated in logarithmic coordinates
\[
    z_a=\log q_a,
\]
which preserves strict positivity numerically.

For each
\[
    \varepsilon
    \in
    \{0.5,0.25,0.1,0.05\},
\]
we independently solve the convex problem
\begin{equation}
    q_\varepsilon
    =
    \arg\min_{\substack{q\geq0\\Bq=y}}
    D_h(q,\varepsilon^2\mathbf 1)
    \label{eq:experimental-bregman-projection}
\end{equation}
using the dual optimality equations.  We then compare the optimizer
with the terminal point of the continuous flow.

To test the effect of redundant measurements, we also augment the
measurement system by linearly dependent equations.  This modification
leaves the feasible set unchanged but changes the least-squares
objective and therefore the transient vector field.

\begin{table}[H]
    \centering
    \caption{
        Agreement between the continuous flow and the independently computed
        Bregman projection.  ``Base'' and ``augmented'' refer to measurement
        systems with the same feasible set.
    }
    \label{tab:bregman-selection-results}
    \resizebox{\textwidth}{!}{
    \begin{tabular}{ccccc}
        \toprule
        \(\varepsilon\)
        &
        Difference between projections
        &
        Base flow to projection
        &
        Augmented flow to projection
        &
        Difference between flow limits
        \\
        \midrule
        \(0.50\)
        &
        \(1.61\times10^{-15}\)
        &
        \(1.42\times10^{-10}\)
        &
        \(2.19\times10^{-10}\)
        &
        \(2.61\times10^{-10}\)
        \\
        \(0.25\)
        &
        \(2.52\times10^{-15}\)
        &
        \(1.57\times10^{-10}\)
        &
        \(2.33\times10^{-10}\)
        &
        \(2.72\times10^{-10}\)
        \\
        \(0.10\)
        &
        \(2.15\times10^{-15}\)
        &
        \(1.63\times10^{-10}\)
        &
        \(2.41\times10^{-10}\)
        &
        \(2.89\times10^{-10}\)
        \\
        \(0.05\)
        &
        \(2.76\times10^{-15}\)
        &
        \(1.62\times10^{-10}\)
        &
        \(2.41\times10^{-10}\)
        &
        \(2.93\times10^{-10}\)
        \\
        \bottomrule
    \end{tabular}
    }
\end{table}

The two independently computed Bregman projections agree to
approximately \(10^{-15}\), and both gradient flows converge to that
projection with errors of order \(10^{-10}\).  The redundant
measurements nevertheless change the transient path and time scale.
For example, at \(\varepsilon=0.1\), the stopping times are
approximately
\[
    t_{\mathrm{base}}
    =
    29.72,
    \qquad
    t_{\mathrm{augmented}}
    =
    51.59.
\]

We also evaluate the Lyapunov identity associated with
\cref{thm:global-convergence-bregman-projection}:
\begin{equation}
    D_h(q^\dagger,q(t))
    =
    D_h(q^\dagger,q(0))
    -
    \int_0^t
    \frac{
        \|Bq(s)-y\|_2^2
    }{
        n
    }
    \,ds.
    \label{eq:experimental-bregman-lyapunov}
\end{equation}
The maximum discrepancy between the two sides of
\cref{eq:experimental-bregman-lyapunov} is approximately \(10^{-6}\).

\begin{figure}[H]
    \centering
    \begin{subfigure}[t]{0.32\textwidth}
        \centering
        \includegraphics[
            width=\linewidth
        ]{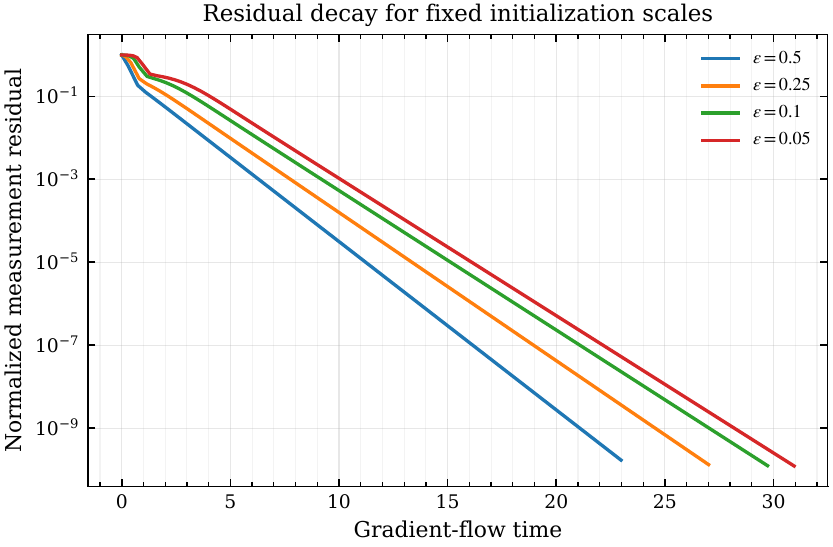}
        \caption{Residual decay for several initialization scales.}
        \label{fig:exp5-residual-decay}
    \end{subfigure}
    \hfill
    \begin{subfigure}[t]{0.32\textwidth}
        \centering
        \includegraphics[
            width=\linewidth
        ]{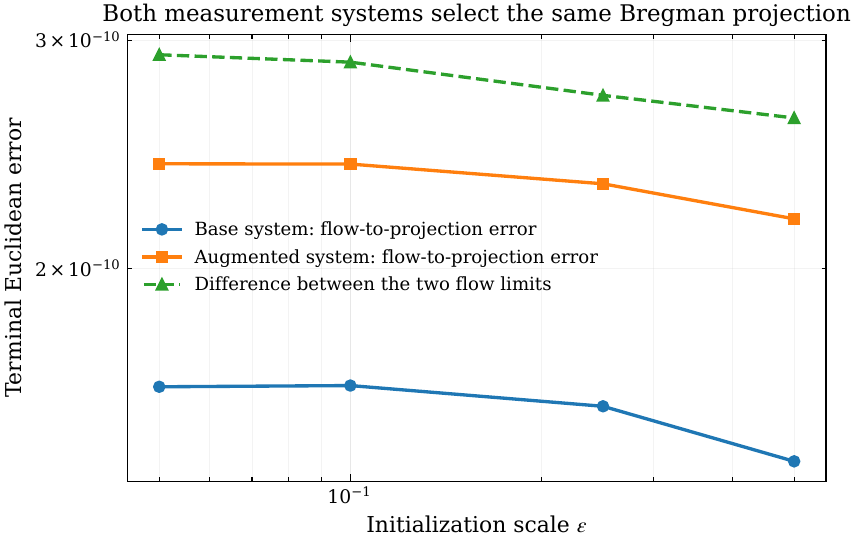}
        \caption{Agreement of flow limits and Bregman projections.}
        \label{fig:exp5-projection-agreement}
    \end{subfigure}
    \hfill
    \begin{subfigure}[t]{0.32\textwidth}
        \centering
        \includegraphics[
            width=\linewidth
        ]{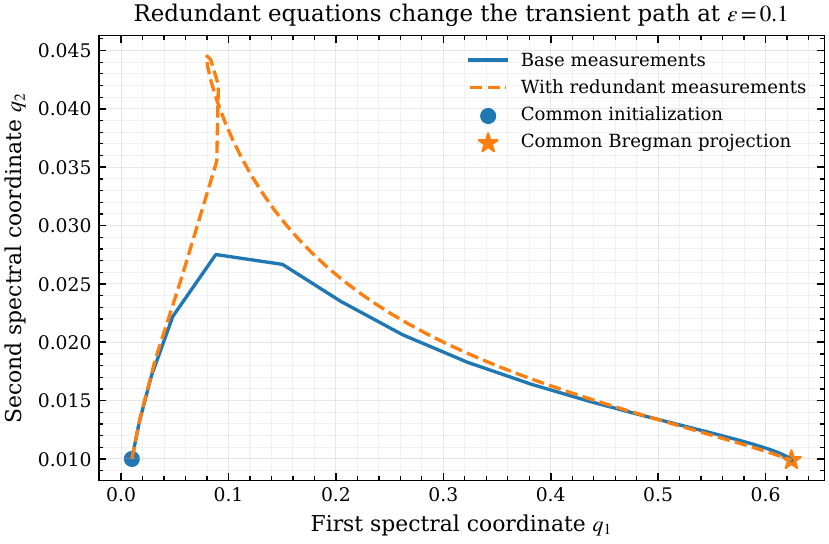}
        \caption{Different transient paths with the same selected endpoint.}
        \label{fig:exp5-redundant-paths}
    \end{subfigure}
    \caption{
        Fixed-initialization Bregman selection.
        The flow converges to the independently computed Bregman projection.
        Redundant measurements preserve the feasible set and selected point
        but alter the transient dynamics.
    }
    \label{fig:experiment5-bregman-selection}
\end{figure}

These results give strong numerical support to
\cref{thm:global-convergence-bregman-projection} in the strictly
feasible commuting regime.  They also illustrate why redundant
measurements cannot simply be discarded when discussing convergence
time: the selected feasible point depends only on the constraint set,
whereas the transient dynamics depend on the particular least-squares
representation.

\subsection{Minimum trace, entropic tie-breaking, and finite-step error}
\label{subsec:experiment-minimum-trace-entropy}

The final experiment tests the small-initialization and finite-step
selection results.  We use four joint spectral blocks with
multiplicities
\[
    d=(1,1,2,3),
\]
and the reduced measurement system
\begin{equation}
    B
    =
    \begin{pmatrix}
        1 & 1 & 1 & 0\\
        0 & 0 & 1 & 1
    \end{pmatrix},
    \qquad
    y
    =
    \begin{pmatrix}
        1\\
        0.3
    \end{pmatrix}.
    \label{eq:experiment6-measurement-system}
\end{equation}
Writing \(c=0.3\), every feasible point can be parameterized as
\[
    q_4=t,
    \qquad
    q_3=c-t,
    \qquad
    q_1+q_2=1-c+t.
\]
The weighted trace is
\begin{equation}
    \tau(q)
    =
    \sum_{a=1}^4d_aq_a
    =
    1+c+2t.
    \label{eq:experiment6-weighted-trace}
\end{equation}
Consequently, the minimum-trace face is
\begin{equation}
    \mathcal S_{\mathrm{tr}}
    =
    \left\{
        q\geq0:
        q_4=0,\;
        q_3=0.3,\;
        q_1+q_2=0.7
    \right\},
    \label{eq:experiment6-minimum-trace-face}
\end{equation}
and
\[
    \tau_\star
    =
    1.3.
\]
This face is non-singleton.  The unique weighted entropic minimizer on
it is
\begin{equation}
    q_{\mathrm{ent}}
    =
    (0.35,0.35,0.3,0).
    \label{eq:experiment6-entropic-point}
\end{equation}

The minimum-trace value is certified globally over the full PSD cone,
not only over the reduced diagonal algebra.  Indeed, the dual vector
\[
    \lambda=(1,1)
\]
satisfies
\[
    d-B^\top\lambda
    =
    (0,0,0,2)
    \geq0,
    \qquad
    y^\top\lambda
    =
    1.3.
\]
For the corresponding full semidefinite program, the joint block
eigenvalues of
\(\sum_i\lambda_iA_i\) are
\[
    1,
    \qquad
    1,
    \qquad
    1,
    \qquad
    \frac13,
\]
and hence
\[
    \sum_i\lambda_iA_i
    \preceq
    I.
\]
The primal value \(1.3\) and dual value \(1.3\) coincide.

\paragraph{Small-initialization limit.}

For isotropic initialization
\[
    q(0)=\varepsilon^2\mathbf1,
\]
we compute the exact Bregman projection for twelve values of
\(\varepsilon\) between \(0.7\) and \(0.005\).  The maximum feasibility
residual is
\[
    1.11\times10^{-16}.
\]
At the smallest initialization,
\[
    \varepsilon=0.005,
\]
we obtain
\[
    \tau(q_\varepsilon)-\tau_\star
    =
    1.086\times10^{-3}.
\]

The explicit bound from \cref{thm:minimum-trace-limit} has the form
\begin{equation}
    \tau(q_\varepsilon)-\tau_\star
    \leq
    \frac{
        C
    }{
        \log(1/\varepsilon^2)
    },
    \qquad
    C=4.243.
    \label{eq:experiment6-trace-envelope}
\end{equation}
Across all tested initialization scales,
\[
    \max_\varepsilon
    \log(1/\varepsilon^2)
    \bigl(
        \tau(q_\varepsilon)-\tau_\star
    \bigr)
    =
    0.4625
    <
    C.
\]
Thus every observed trace gap lies below the theoretical envelope.
The experiment does not claim that the
\(1/\log(1/\varepsilon^2)\) upper bound is asymptotically tight; in this
particular example the convergence may be faster.

\paragraph{Entropic tie-breaking.}

The distance to the entropic minimizer decreases monotonically with
\(\varepsilon\).  At \(\varepsilon=0.005\),
\[
    \|q_\varepsilon-q_{\mathrm{ent}}\|_2
    =
    8.58\times10^{-4}.
\]
A descriptive log--log regression over the smallest initialization
scales gives slope
\[
    1.306,
    \qquad
    R^2=0.99987.
\]
This fitted exponent is not asserted by
\cref{thm:entropic-tie-breaking}; the theoretically relevant
observation is the convergence to the unique entropy-selected point
inside the non-singleton minimum-trace face.

\paragraph{Finite-step selection error.}

Finally, we fix
\[
    \varepsilon=0.2
\]
and run the exact parameter gradient-descent recursion for
\[
    \eta
    \in
    \left\{
        \frac12,
        \frac14,
        \frac18,
        \ldots,
        \frac1{256}
    \right\}.
\]
All eight step sizes converge to strictly positive interpolating
limits.  The finite-step selection error is measured by
\[
    E_{\mathrm{step}}(\eta)
    :=
    \|q_{\varepsilon,\eta}-q_\varepsilon\|_2.
\]
The log--log fit gives
\begin{equation}
    E_{\mathrm{step}}(\eta)
    \propto
    \eta^{0.99094},
    \qquad
    R^2=0.999981.
    \label{eq:experiment6-discrete-fit}
\end{equation}
This agrees with the first-order estimate of
\cref{thm:first-order-discrete-selection-error},
\[
    \|q_{\varepsilon,\eta}-q_\varepsilon\|_2
    =
    O(\eta)
\]
for fixed \(\varepsilon\).

\begin{figure}[H]
    \centering
    \begin{subfigure}[t]{0.32\textwidth}
        \centering
        \includegraphics[
            width=\linewidth
        ]{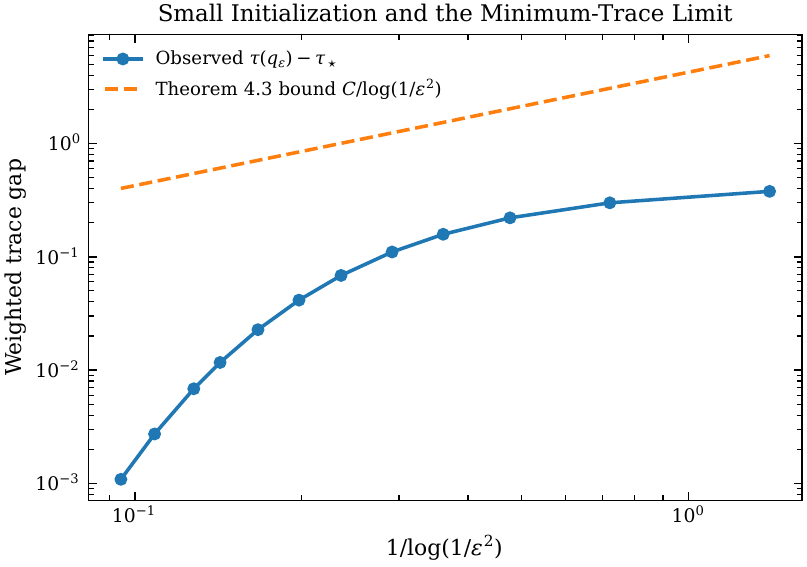}
        \caption{Trace gap and the theoretical upper envelope.}
        \label{fig:exp6-trace-gap}
    \end{subfigure}
    \hfill
    \begin{subfigure}[t]{0.32\textwidth}
        \centering
        \includegraphics[
            width=\linewidth
        ]{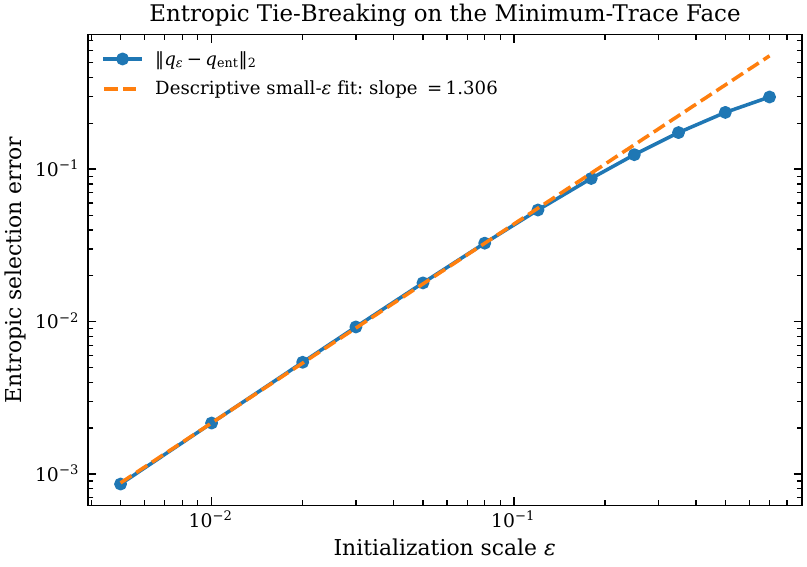}
        \caption{Convergence to the entropic minimizer.}
        \label{fig:exp6-entropy}
    \end{subfigure}
    \hfill
    \begin{subfigure}[t]{0.32\textwidth}
        \centering
        \includegraphics[
            width=\linewidth
        ]{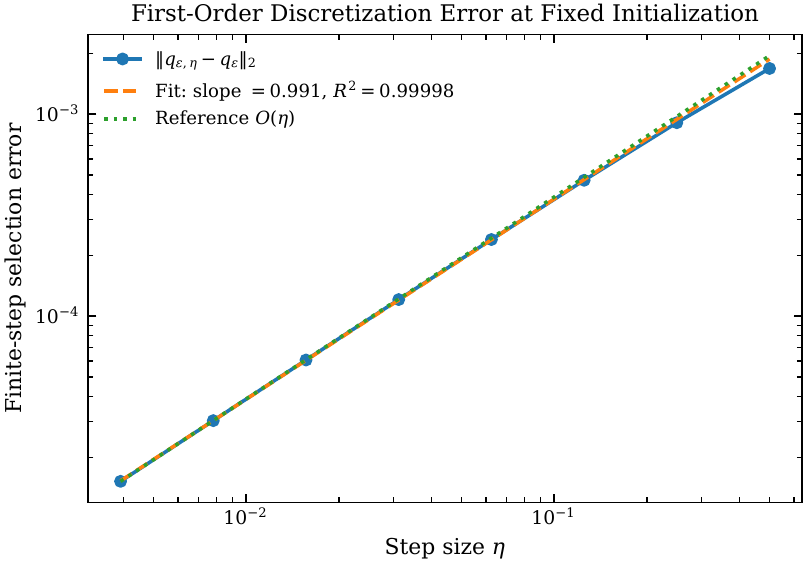}
        \caption{First-order finite-step selection error.}
        \label{fig:exp6-finite-step}
    \end{subfigure}
    \caption{
        Minimum-trace, entropy, and finite-step selection.
        The continuous Bregman projections approach the globally certified
        minimum-trace face, converge within that face to the unique weighted
        entropic minimizer, and differ from the finite-step limits by an
        empirically first-order error in \(\eta\).
    }
    \label{fig:experiment6-selection}
\end{figure}

\begin{figure}[H]
    \centering
    \begin{subfigure}[t]{0.49\textwidth}
        \centering
        \includegraphics[
            width=\linewidth
        ]{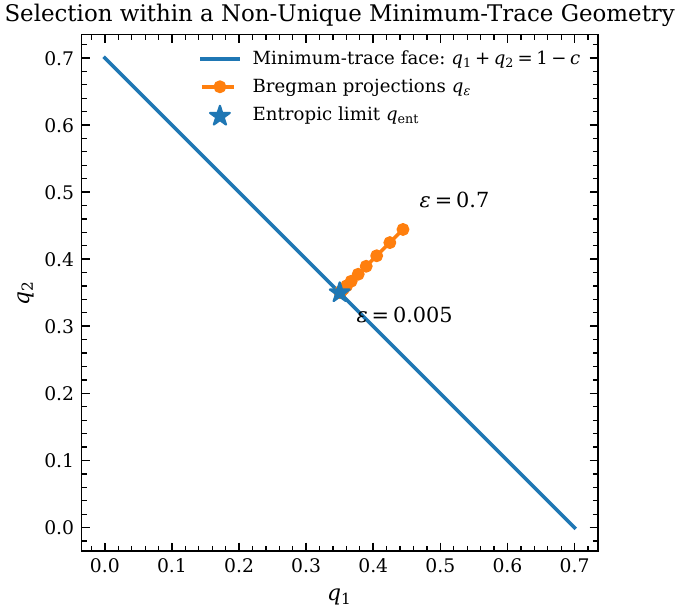}
        \caption{
            The non-singleton minimum-trace face and its entropy-selected
            point.
        }
        \label{fig:exp6-minimum-trace-face}
    \end{subfigure}
    \hfill
    \begin{subfigure}[t]{0.49\textwidth}
        \centering
        \includegraphics[
            width=\linewidth
        ]{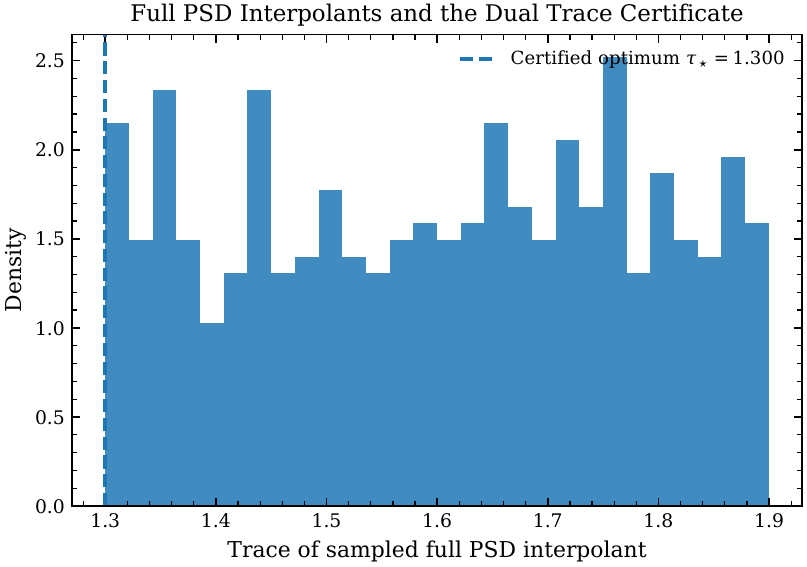}
        \caption{
            Traces of sampled full PSD interpolants and the certified optimum.
        }
        \label{fig:exp6-full-psd}
    \end{subfigure}
    \caption{
        Geometry and global certification of the minimum-trace example.
        Random full PSD interpolants have trace no smaller than the certified
        value \(\tau_\star=1.3\).  The certification itself follows from the
        explicit semidefinite dual certificate, rather than from random
        sampling.
    }
    \label{fig:experiment6-minimum-trace-geometry}
\end{figure}

The order of limits is important.  The experiment verifies the
fixed-\(\varepsilon\) convergence
\[
    q_{\varepsilon,\eta}
    \longrightarrow
    q_\varepsilon
    \qquad
    \text{as }
    \eta\downarrow0,
\]
followed by
\[
    q_\varepsilon
    \longrightarrow
    q_{\mathrm{ent}}
    \qquad
    \text{as }
    \varepsilon\downarrow0.
\]
No numerical or theoretical claim is made for a fixed positive
\(\eta\) followed directly by \(\varepsilon\downarrow0\), since the
constants in the finite-step estimate may deteriorate with the
initialization scale.

\subsection{Summary of the empirical evidence}
\label{subsec:experimental-summary}

The six experiments support the main mechanisms of the theory at
different levels of strength.

First, the orthogonal invariance and predictor-recursion experiments
verify exact identities to floating-point precision.  Second, the
effective Hessian predicts the local slow-mode convergence rate with
relative errors below \(2.4\times10^{-4}\).  Third, the spectral
initializer and factor dynamics recover the target at sample sizes far
below the current sufficient bound, demonstrating both the practical
relevance of the mechanism and the conservativeness of the explicit
finite-sample constants.  Fourth, the condition-number experiments
confirm the dependence of the local rate on the smallest nonzero signal
eigenvalue, while showing that both the secant rate and the oracle
step-size bound are conservative.  Finally, the commuting experiments
show that the gradient-flow limit agrees with the exact Bregman
projection, that small initialization approaches a globally
minimum-trace PSD interpolant, that nonuniqueness is resolved by a
weighted entropic criterion inside the invariant spectral algebra, and
that the finite-step selection error is first order in the step size.

Accordingly, the experiments are consistent with all qualitative
theoretical mechanisms established in Sections~\ref{sec:quotient-dynamics}--\ref{sec:commuting-trace-bias}.  They should not,
however, be interpreted as showing that every explicit bound is tight:
the basin radius, end-to-end sample complexity, secant rate, and oracle
step-size threshold are all substantially more conservative than their
empirical counterparts.

\section{Conclusion}
\label{sec:conclusion}

We studied positive quadratic networks through the exact
factorization
\[
f_U(x)
=
x^\top UU^\top x,
\]
with the aim of connecting parameter redundancy to concrete statements
about training dynamics, intrinsic curvature, recovery, and implicit
selection. The central structural fact is that, on the
full-column-rank stratum, the fiber of the map
\[
U\longmapsto UU^\top
\]
is exactly the orthogonal orbit
\[
\{UR:R\in O(r)\}.
\]
The corresponding smooth predictor space is therefore the quotient
\[
\mathbb{R}^{d\times r}_*/O(r),
\]
which is diffeomorphic to the manifold
$\mathcal{M}_r^+$ of rank-$r$ positive semidefinite matrices.
Rank-deficient factors lie on the singular boundary of this stratum
and require separate treatment.

This quotient description is directly tied to the algorithm used to
train the model. For every smooth objective of the form
\[
L(U)
=
\ell(UU^\top),
\]
the ordinary Euclidean factor gradient is horizontal with respect to
the quotient structure induced by the Frobenius metric. Consequently,
Euclidean factor gradient flow projects exactly to
\[
\dot Q
=
-2\bigl(
\nabla\ell(Q)Q
+
Q\nabla\ell(Q)
\bigr).
\]
The resulting quotient flow is therefore not an auxiliary optimization
procedure imposed on predictor space; it is the exact predictor
evolution generated by Euclidean training in factor coordinates. At a
nonzero step size, factor gradient descent induces the exact congruence
recursion
\[
Q_{k+1}
=
\bigl(
I_d-2\eta\nabla\ell(Q_k)
\bigr)
Q_k
\bigl(
I_d-2\eta\nabla\ell(Q_k)
\bigr).
\]
Its quadratic correction explicitly records the discrepancy between
the discrete predictor update and the continuous quotient flow.

For quadratic regression, the quotient geometry yields an effective
notion of curvature that removes the redundancy of the factorization
without removing genuine predictor non-identifiability. At an
interpolating predictor $Q_\star$, the quotient Hessian satisfies
\[
\operatorname{Hess}_g
\ell_n(Q_\star)[H,H]
=
\frac1n
\|\mathcal{A}_n(H)\|_2^2,
\qquad
H\in T_{Q_\star}\mathcal{M}_r^+.
\]
Its nullspace is therefore
\[
T_{Q_\star}\mathcal{M}_r^+
\cap
\ker\mathcal{A}_n.
\]
Directions that remain degenerate after quotienting correspond to
sample-level non-identifiability rather than infinitesimal orthogonal
gauge motion. The generalized eigenvalues of this Hessian, defined
relative to the quotient metric, describe the local intrinsic modes of
the factor trajectory. In particular, the smallest effective
eigenvalue governs the slowest mode of the linearized quotient flow.

Under Gaussian rank-one measurements, we computed the population
normal operator, proved a uniform deviation bound for the empirical
operator on the full ambient space $\mathbb{S}^d$, constructed a
moment-based spectral initializer, and established a deterministic
local secant inequality. These ingredients yield local exponential
convergence of ordinary factor gradient flow and local linear
convergence of factor gradient descent under a sufficiently small
oracle step size.

The resulting end-to-end Gaussian recovery theorem is explicit but
deliberately conservative. It relies on full-space second-moment
control rather than a restricted concentration theorem on a tangent
space or low-rank secant set. Accordingly, its sufficient sample
requirement is not claimed to scale with the intrinsic dimension of
$\mathcal{M}_r^+$ or to be statistically optimal. The numerical
experiments reinforce this distinction: the smallest effective
eigenvalue accurately tracks the observed slow local training mode,
whereas successful initialization, recovery, and step-size stability
occur far beyond the regions certified by the present constants.

In the underdetermined commuting regime, the predictor dynamics admit
an additional variational interpretation. Joint spectral coordinates
reduce the factor gradient flow to an entropy mirror flow. For every
strictly positive initialization, the trajectory converges to the
Bregman projection determined by that initialization:
\[
q_\infty
=
\arg\min_{\substack{q\ge0,\,Bq=y}}
D_h(q,q(0)).
\]
Thus finite initialization produces an explicit, initialization-dependent interpolation principle rather than merely an unspecified
preference for low complexity.

For isotropic initialization
\[
q(0)
=
\varepsilon^2\mathbf{1},
\]
the dominant term of the Bregman objective is the weighted trace. As
$\varepsilon\downarrow0$, the selected predictors approach the
minimum-trace solution set. Under the commuting assumptions, this
minimum trace agrees with the minimum over the full positive
semidefinite interpolation set. If the minimum-trace solution is
nonunique, the lower-order entropy term selects a unique point on the
minimum-trace face within the invariant joint spectral algebra. The
resulting hierarchy is
\[
\text{finite positive initialization}
\Longrightarrow
\text{Bregman projection},
\]
\[
\text{isotropic small initialization}
\Longrightarrow
\text{minimum trace},
\]
\[
\text{nonunique invariant minimum-trace face}
\Longrightarrow
\text{weighted entropy tie-breaking}.
\]
For fixed positive initialization, finite-step factor gradient descent
selects an interpolant whose distance from the continuous Bregman
projection is $O(\eta)$. The constants in this comparison may depend
on the initialization scale, so the result first sends
$\eta\downarrow0$ with the initialization fixed.

The scope of these conclusions is intentionally precise. The smooth
quotient construction applies only to the full-column-rank stratum and
does not regularize singularities associated with rank loss. The
Gaussian recovery theorem is rank matched, local after a
data-dependent spectral initialization, based on a conservative
full-space deviation event, and uses an oracle learning-rate condition
for discrete gradient descent. The explicit Bregman and entropy
characterizations require a commuting measurement algebra. We do not
claim an entropy tie-breaking law for arbitrary noncommuting positive
semidefinite interpolants, a near-optimal Gaussian sample complexity,
or equality between the small-initialization limits of gradient flow
and gradient descent at a fixed nonzero step size.

Several directions remain open. A first challenge is to replace the
full-space Gaussian operator estimate by concentration on the fixed
tangent space or on a suitable low-rank secant neighborhood. Such a
result could substantially sharpen the recovery basin and sample
complexity while preserving the exact connection between effective
curvature and the actual factor dynamics. A second challenge is to
understand which parts of the selection theory survive beyond the
commuting setting. The exact quotient dynamics and effective-Hessian
construction remain available for general positive semidefinite
measurements, but the scalar mirror-flow reduction generally does not.
Developing an intrinsic variational description of these
noncommutative predictor dynamics would connect the local quotient
geometry established here with a broader theory of implicit
regularization in factorized matrix models and neural networks.


\acks{This research received no external funding.}


\newpage

{\noindent \em Remainder omitted in this sample. See http://www.jmlr.org/papers/ for full paper.}

\vskip 0.2in
\bibliography{sample}

\end{document}